\newcommand{\blind}{0}
\newtheorem{theorem}{Theorem}[section]
\newtheorem{lemma}[theorem]{Lemma}
\begin{document}

\def\spacingset#1{\renewcommand{\baselinestretch}%
{#1}\small\normalsize} \spacingset{1}


\if1\blind
{
  \title{\bf HI}
  \author{Author 1\thanks{
    The authors gratefully acknowledge \textit{please remember to list all relevant funding sources in the unblinded version}}\hspace{.2cm}\\
    Department of YYY, University of XXX\\
    and \\
    Author 2 \\
    Department of ZZZ, University of WWW}
  \maketitle
} \fi

\if0\blind
{
  \bigskip
  \bigskip
  \bigskip
  \begin{center}
    {\Large\bf Bayesian Additive Regression Trees\\
    for functional ANOVA model}
\end{center}
  \medskip
} \fi
\begin{center}
Seokhun Park$^{1}$, Insung Kong$^{2}$ and Yongdai Kim$^{1,3}$\\
\end{center}
\begin{scriptsize}
\begin{center}
$^{1}$Department of Statistics, Seoul National University, \{shrdid,ydkim903\}@snu.ac.kr \\
$^{2}$Department of Applied Mathematics, University of Twente, insung.kong@utwente.nl \\
$^{3}$Corresponding author
\end{center}
\end{scriptsize}

\bigskip
\begin{abstract}
Bayesian Additive Regression Trees (BART) is a powerful statistical model that leverages the strengths of Bayesian inference and regression trees.
It has received significant attention for capturing complex non-linear relationships and interactions among predictors.
However, the accuracy of BART often comes at the cost of interpretability.
To address this limitation, we propose ANOVA Bayesian Additive Regression Trees (ANOVA-BART), a novel extension of BART based on the functional ANOVA decomposition, which is used to decompose the variability of a function into different interactions, each representing the contribution of a different set of covariates or factors.
Our proposed ANOVA-BART enhances interpretability, preserves and extends the theoretical guarantees of BART, and achieves comparable prediction performance.
Specifically, we establish that the posterior concentration rate of ANOVA-BART is nearly minimax optimal, and further provides the same convergence rates for each interaction that are not available for BART.
Moreover, comprehensive experiments confirm that ANOVA-BART is comparable to BART in both accuracy and uncertainty quantification, while also demonstrating its effectiveness in component selection.
These results suggest that ANOVA-BART offers a compelling alternative to BART by balancing predictive accuracy, interpretability, and theoretical consistency.
The official implementation of ANOVA-BART is publicly available at \href{https://github.com/ParkSeokhun/ANOVA-BART}{https://github.com/ParkSeokhun/ANOVA-BART}.

\end{abstract}

\noindent%
{\it Keywords:} Bayesian Additive Regression Trees, Functional ANOVA model.

\spacingset{1.9} 

\section{Introduction}
\label{sec:intro}

Bayesian trees and their ensembles have demonstrated significant success in statistics and machine learning (\cite{BayesTree, Bayescart, BART, he2019xbart, lakshminarayanan2015particle, luo2021bast, luo2022bamdt}). 
In particular, Bayesian Additive Regression Trees (BART, \cite{BART}), which put
the prior mass on the space of ensembles of decision trees and obtain the posterior distribution,
have received much attention for their superior prediction performance in
various problems including causal inference (\cite{hill, Hahn}), variable selection (\cite{Lin}), survival analysis (\cite{spara}), interaction detection (\cite{due}), smooth function estimation (\cite{linero2018bayesian}), mean-variance function estimation (\cite{PM}), time series (\cite{taddy}), monotone function estimation (\cite{mbart}), to name just a few. 
In addition, theoretical properties of BART  have been actively studied (\cite{onBART, bforest, artbart, linero2018bayesian})

A limitation of BART, however, is that it is a black-box approach in the sense that the relation between covariates and response variable is hard to be explained. This is because a linear combination of decision trees is not easily interpretable
even though each decision tree is interpretable.
Recently, interpretability is an important issue in statistics, machine learning and Artificial Intelligence (AI), and constructing interpretable models without hampering prediction performance becomes a key challenge.

Various methods to improve interpretability can be roughly categorized into two approaches - (1) post-processing approach and (2) interpretable model approach.  The post-processing approach tries to interpret a given black-box model. Representative examples are partial dependency plots (\cite{friedman2001greedy}), LIME (\cite{ribeiro2016should}) and SHAP (\cite{lundberg2017unified}). In contrast, the interpretable model approach uses easily interpretable prediction models such as the linear model, generalized additive model (\cite{gam}) and more generally functional ANOVA model (\cite{func_first}). In particular, the functional ANOVA model has a long history in statistics and received much attention recently in machine learning and AI (\cite{func_first, func_cosso, func_diag, func_gaussian, func_hazard, func_proj, MARS, SSANOVA, func_hooker, func_puri, func_hyper, func_vae, park2025tensor}). 

The aim of this paper is to develop a Bayesian inference procedure for the functional ANOVA model, 
which we call ANOVA-BART. The core idea of ANOVA-BART, which is an interpretable model approach, is to approximate each interaction in the functional ANOVA model by a linear combination of decision trees.
A technical difficulty is that each component in the functional ANOVA model is not identifiable and 
thus direct adoption of BART to the functional ANOVA model would result
in unstable inference of each component.
Similar phenomena are observed in neural networks for the functional 
ANOVA model such as Neural Additive Models (NAM, \cite{nam}) and Neural Basis Models (NBM, \cite{nbm}).

To resolve this problem, we impose the identifiability condition (See Section \ref{sec:function_anova}
 for details) on each component. Then, we devise special decision trees such that
(1) they satisfy the identifiability condition, (2) a linear combination of them approximates a smooth function well
to guarantees a (nearly) minimax optimal posterior concentration rate, 
and (3) an efficient MCMC algorithm can be developed without much difficulty.

Special neural networks to ensure the identifiability of each component
in the functional ANOVA model have been developed by \cite{park2025tensor} and \cite{park2025bayesian}.
As usual, however, neural networks are highly vulnerable to outliers in covariates and thus 
special preprocessing of covariates are indispensable for stable inference.
In contrast, decision trees are robust to outliers and so yield
stabler and more accurate prediction models without requiring data preprocessing. 
  
This paper is organized as follows. 
Section \ref{sec:preliminary} reviews the functional ANOVA model and BART.  
ANOVA-BART is described in Section \ref{section3}, and an MCMC algorithm for the posterior sampling of ANOVA-BART is developed in Section \ref{sec:posterior}. 
Section \ref{5} provides the posterior concentration rate of ANOVA-BART which is nearly minimax optimal and adaptive to the interaction structure (i.e., the highest order of the signal components) as well as the smoothness of the true model. 
Section \ref{sec:experiment} demonstrates the competitiveness of ANOVA-BART against baseline models through analyses of simulation data and multiple benchmark datasets.
   
\section{Preliminaries}
\label{sec:preliminary}
We consider a standard nonparametric regression model given as
$$
Y|\mathbf{x} \sim \mathbb{P}_{f(\mathbf{x}),\psi}
$$ 
where $\mathbf{x}=(x_{1},...,x_{p})^{\top}\in \mathcal{X}\subset \mathbb{R}^p$ and $Y\in \mathbb{R}$ are covariate vector and response variable, respectively,
$f:\mathcal{X} \rightarrow \mathbb{R}$ is a regression function and $\psi \in \mathbb{R}$ is a nuisance parameter. 

We assume that the distribution $\mathbb{P}_{f(\mathbf{x}),\psi}$ belongs to the exponential family and admits a probability density function $p_{f(\mathbf{x}),\psi}$, defined as
\begin{align}
p_{f(\mathbf{x}),\psi}(y) = \exp\bigg( { f(\mathbf{x})y - A(f(\mathbf{x})) \over \psi} + S(y,\psi) \bigg), 
\label{eq:expontial_eta}
\end{align}
where $A(\cdot)$ is the log-partition function, ensuring that the density integrates to one.

In this section, we review the functional ANOVA model and BART since our proposed method is closely related to these two models.

\subsection{Notation}
For a real-valued function $f : \mathcal{X} \xrightarrow{}  \mathbb{R}$ and $1 \leq p < \infty$, we denote $\Vert f \Vert_{p,n}:= (\sum_{i=1}^{n}f(\textbf{x}_{i})^{p}/ n)^{1/ p},$ 
where $\mathbf{x}_{i}=(x_{i,1},...,x_{i,p})^{\top}, i=1,...,n$, are given covariate vectors. We write $\Vert f \Vert_{p,\mu} := (\int_{\mathbf{x} \in \mathcal{X}}f(\bold{x})^{p}\mu(d\mathbf{x}))^{1/ p},$ where $\mu$ is a probability measure defined on $\mathcal{X}.$
We define $[p]=\{1,...,p\}$ to represent the set of integers from $1$ to $p$.
For two sequences $a_{n}$ and $b_{n}$, we write 
$a_{n} = O(b_{n})$ if there exist constants $c > 0$ and $n_{0} \in \mathbb{N}$ 
such that $|a_{n}| \leq c |b_{n}|$ for all $n \geq n_{0}$.
For $A=\{a_1,\ldots,a_m\}$ with natural numbers $a_1<...<a_m$, we write $(x_j, j \in A) := (x_{a_1},...,x_{a_m})^{\top}$.
Let $\text{power}([p],d)$ be the collection of all subsets of $[p]$ whose cardinality is $d$.

\subsection{Functional ANOVA model}
\label{sec:function_anova}

We assume that $\mathcal{X}=\prod_{j=1}^p \mathcal{X}_j$, where $\mathcal{X}_j$ is a subset of $\mathbb{R}.$ 
For $\bold{x} \in \mathcal{X}$ and $S \subseteq [p]$, 
we write $\bold{x}_{S}:=(x_j, j\in S)$ and $\mathcal{X}_{S} := \prod_{j \in S}\mathcal{X}_{j}$. 
For a given function $f_{S}$ defined on $\mathcal{X}_{S}$ and a probability measure $\mu$ on $\mathcal{X}$, we say that $f_{S}$ satisfies the $\mu$-identifiability condition (\cite{func_hooker, func_diag}) if the following holds:
\begin{align}
\begin{split}
\forall  \: j \in S \mbox{ and }  \forall \: \mathbf{x}_{S\backslash\{j\}}\in \mathcal{X}_{S\backslash\{j\}},
\: \int_{\mathcal{X}_{j}}f_{S}(\bold{x}_{S})\mu_{j}(dx_j)=0,  \label{identfiable cond}
\end{split}
\end{align}
where $\mu_{j}$ is the marginal probability measure of $\mu$ on $\mathcal{X}_{j}.$ 

By the following theorem, any real-valued multivariate function $f$ can be uniquely decomposed into a functional ANOVA model whose components satisfy the $\mu$-identifiability condition.
Let $\mu^{\text{ind}}=\prod_{j=1}^{p}\mu_{j}$.

\begin{theorem}[\cite{func_diag, mcbook}]
\label{thm:fANOVA_decomp}
Any real-valued function $f$ defined on $\mathbb{R}^{p}$ can be uniquely decomposed as 
\begin{align}
f(\mathbf{x}) = \sum_{S \subseteq [p]}f_{S}(\mathbf{x}_{S}), \label{eq:f-ANOVA}
\end{align}
almost everywhere with respect to $\mu^{\text{ind}}$,
under the constraint that each interaction $f_{S}$ satisfies the $\mu$-identifiability condition.
\end{theorem}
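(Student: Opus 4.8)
The plan is to prove Theorem~\ref{thm:fANOVA_decomp} through a family of commuting conditional-expectation operators, which reduces both existence and uniqueness to a short computation on the Boolean lattice of subsets of $[p]$. First I would fix an integrability convention under which the statement makes literal sense: the averaging integrals below are defined $\mu^{\text{ind}}$-almost everywhere as soon as $f \in L^{1}(\mu^{\text{ind}})$ (the setting of the cited references), and the general case follows by a routine truncation/density argument together with the ``almost everywhere'' qualifier. For each $j \in [p]$ define $E_{j}$, acting on $g : \mathcal{X} \to \mathbb{R}$, by $(E_{j}g)(\mathbf{x}) = \int_{\mathcal{X}_{j}} g(x_{1},\ldots,x_{j-1},t,x_{j+1},\ldots,x_{p})\,\mu_{j}(dt)$. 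By Fubini's theorem these operators are well defined on $L^{1}(\mu^{\text{ind}})$, they commute ($E_{j}E_{k}=E_{k}E_{j}$), and each is idempotent ($E_{j}^{2}=E_{j}$). The key dictionary is: $f_{S}$ satisfies the $\mu$-identifiability condition \eqref{identfiable cond} if and only if $E_{j}f_{S}=0$ for every $j \in S$, while ``$f_{S}$ is a function of $\mathbf{x}_{S}$ alone'' if and only if $E_{j}f_{S}=f_{S}$ for every $j \notin S$.

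For existence, set $P_{S} := \prod_{j \in S}(I-E_{j}) \prod_{j \notin S} E_{j}$, where the product ranges over commuting operators so the ordering is immaterial, and put $f_{S} := P_{S}f$. Expanding $\prod_{j=1}^{p}\bigl((I-E_{j})+E_{j}\bigr)=I$ over the distributive law yields $\sum_{S \subseteq [p]} P_{S} = I$, hence $\sum_{S}f_{S}=f$. For $j_{0}\in S$ the factor $E_{j_{0}}(I-E_{j_{0}})=0$ gives $E_{j_{0}}f_{S}=0$, i.e.\ the $\mu$-identifiability condition; for $j_{0}\notin S$ idempotence of $E_{j_{0}}$ gives $E_{j_{0}}f_{S}=f_{S}$, so $f_{S}$ depends only on $\mathbf{x}_{S}$.

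For uniqueness, let $\{f_{S}\}$ and $\{g_{S}\}$ be two decompositions of $f$ meeting the constraints, and set $h_{S}:=f_{S}-g_{S}$; then $\sum_{S}h_{S}=0$, each $h_{S}$ depends only on $\mathbf{x}_{S}$, and $E_{j}h_{S}=0$ for $j\in S$. Apply $P_{T}$ to $\sum_{S}h_{S}=0$ and analyse $P_{T}h_{S}$: if some $j\in T\setminus S$ then $(I-E_{j})h_{S}=0$ because $h_{S}$ ignores $x_{j}$; if some $j\in S\setminus T$ then the factor $E_{j}$ annihilates $h_{S}$; and if $S=T$ then every factor acts as the identity on $h_{T}$, so $P_{T}h_{T}=h_{T}$. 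Thus $0=\sum_{S}P_{T}h_{S}=h_{T}$ for every $T$, giving $f_{S}=g_{S}$ $\mu^{\text{ind}}$-almost everywhere.

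I expect the operator algebra to be essentially mechanical; the real care is measure-theoretic. The main obstacle is making the iterated integrals and the pointwise operator identities rigorous for an arbitrary real-valued $f$: pinning down the integrability hypothesis that guarantees every conditional expectation exists, invoking Fubini to justify commutativity, and tracking null sets so that each identity holds $\mu^{\text{ind}}$-a.e.\ rather than pointwise. I would isolate this as a standing hypothesis (e.g.\ $f \in L^{1}(\mu^{\text{ind}})$) and defer to a short remark the extension to general $f$, with $f_{S}$ read as an a.e.-equivalence class.
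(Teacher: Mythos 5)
Your argument is correct, and it is essentially the standard proof of the functional ANOVA decomposition: the paper itself does not prove Theorem \ref{thm:fANOVA_decomp} but cites it to the literature, and the commuting-idempotent projection argument you give (define $E_j$ as averaging over $\mu_j$, set $P_S=\prod_{j\in S}(I-E_j)\prod_{j\notin S}E_j$, expand $\prod_j\bigl((I-E_j)+E_j\bigr)=I$ for existence, and apply $P_T$ to the difference of two decompositions for uniqueness) is exactly the argument behind those references. The only caution is your closing remark that the case of a completely general real-valued $f$ ``follows by a routine truncation/density argument'': without some integrability (at minimum $f\in L^1(\mu^{\text{ind}})$, so that the conditional expectations exist and Fubini applies) the operators $E_j$ need not be defined at all, so the honest statement is that the theorem implicitly carries such a hypothesis — as the cited sources assume — rather than that arbitrary $f$ can be recovered by truncation; also note that the identifiability condition (\ref{identfiable cond}) is stated pointwise in $\mathbf{x}_{S\setminus\{j\}}$, so one should choose versions of the $f_S$ for which your a.e.\ identities hold everywhere, consistent with the ``almost everywhere with respect to $\mu^{\text{ind}}$'' qualifier in the theorem.
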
 

Here, the low dimensional functions $f_{S}$s are called the $|S|$th-order interactions or components of $f.$
Note that the functional ANOVA decomposition (\ref{eq:f-ANOVA}) of $f$ depends on the choice of the measures $\mu$ used in the identifiability condition (\ref{identfiable cond}).  
In this paper, we use the empirical distribution $\mu_{n}$ of given covariate vectors $\mathbf{x}_{1},\ldots,\mathbf{x}_{n}$ for $\mu$ and
we write simply `the identifiability condition' whenever $\mu=\mu_{n}$.
If $\mu$ is different from $\mu_{n}$, we will explicitly specify it.

The functional ANOVA model with only the main effects is the generalized additive model (GAM, \cite{gam}) which is widely used in practice. For machine learning applications,   \cite{func_hooker, func_puri, park2025tensor} emphasize
the usefulness of the functional ANOVA model as an interpretable AI tool. 
There are various algorithms to estimate the interactions in the functional ANOVA model including
MARS (\cite{MARS}), SSANOVA (\cite{SSANOVA}), COSSO (\cite{func_cosso}), NAM (\cite{nam}), ANOVA-TPNN (\cite{park2025tensor}) and so on.

\subsection{Bayesian Additive Regression Trees (BART)}
BART (\cite{BART}) assumes a Gaussian regression model, i.e., $Y|\mathbf{x} \sim N(\cdot|f(\mathbf{x}),\sigma^{2})$, where the regression function $f$ is modeled as the sum of random decision trees.
That is, BART assumes a priori that $f(\cdot)=\sum_{t=1}^{T} \mathbb{T}_t(\cdot),$ where $\mathbb{T}_t$s are random decision trees and obtains the posterior distribution of $\mathbb{T}_t$s.
Each decision tree is represented by the three parameters at each node - (i) the binary indicator whether a node is internal or terminal, (ii) the pair of (split-variable, split-criterion) if the node is internal and (iii) the height (i.e the prediction value) if the node is terminal.
BART puts a prior mass on these three parameters to generate independent random decision trees $\mathbb{T}_t$s
while $T$ is fixed in advance and not inferred.
Then, BART samples from the posterior distribution $p(\mathbb{T}_1,\ldots,\mathbb{T}_{T},\sigma^{2}|{\rm data})$ using the backfitting MCMC algorithm (\cite{BART}).

Building upon BART, several extensions have been proposed.
Specifically, \cite{linero2018dart} proposed Dirichlet Additive Regression Trees (DART), which equip BART with a Dirichlet prior on the splitting probabilities over covariates, thereby encouraging sparse and adaptive variable selection.
\cite{linero2018bayesian} proposed Soft BART (SBART) by replacing the indicator functions in BART decision trees with sigmoid functions, and proves that the posterior concentration rate of SBART is minimax optimal over Hölder function spaces with smoothness parameter $\alpha \in (0,\infty)$.
\cite{deshpande2020vcbart} introduced Varying Coefficient Bayesian Additive Regression Trees (VCBART), which use BART to estimate coefficient functions in varying coefficient models.
Recently, \cite{GBART} proposed a reversible jump Markov chain Monte Carlo (RJMCMC) algorithm for Generalized BART (GBART) when the data distribution belongs to the exponential family.

\section{ANOVA-BART : BART on the functional ANOVA model} \label{section3}

In this section, we propose a version of BART for the functional ANOVA model which we call ANOVA-BART. 
The main idea of ANOVA-BART is to approximate each component 
of the functional ANOVA model by the ensemble of random decision trees.
That is, we set
\begin{align*}
f_S(\mathbf{x}_S) = \sum_{t=1}^{T_S} \mathbb{T}_t^{S}(\mathbf{x}_S),
\end{align*}
where $\mathbb{T}_{t}^{S}$s are specially designed decision trees defined on $\mathcal{X}_{S}$.
Let $\mathcal{T}^S$ be the class of decision trees where $\mathbb{T}_t^S$s belong. 
In the following subsections, we propose $\mathcal{T}^S$ so that $f_{S}(\cdot)$ always satisfies the identifiability condition and devise priors on $\mathcal{T}^S$ for $S\subseteq [p]$ and the other parameters including the numbers of decision trees $T_{S}$ as well as the nuisance parameter $\psi$.

 
\subsection{Choice of $\mathcal{T}^S$: Identifiable binary-product trees} \label{3.1}

A technical difficulty in choosing $\mathcal{T}^S$ 
is that $f_{S}$ should satisfy the identifiability condition in (\ref{identfiable cond}).
To ensure this restriction,
we make each decision tree in $\mathcal{T}^S$ satisfy the identifiability condition.
Let $\mu_{n,S}$ be an empirical distribution of given $\mathbf{x}_{1,S},...,\mathbf{x}_{n,S}$, where $\mathbf{x}_{i,S} = (x_{i,j}, j \in S)$ for $i=1,...,n$.
When $S=\{\ell\}$, we write $\mu_{n,\ell}$ instead of $\mu_{n,\{\ell\}}$ for notational simplicity.
We let $\mathcal{T}^S$ consist of decision trees having the following form
\begin{align}
\mathbb{T}^{S}(\mathbf{x}_{S}) = \beta \prod_{j \in S}\big(\mathbb{I}(x_{j} \leq s_{j}) +  \mathfrak{a}_{j}\mathbb{I}(x_{j} > s_{j}) \big),
\end{align}
where $\mathfrak{a}_{j} = -\mu_{n,j}\{X_{j} \leq s_{j} \} / \mu_{n,j}\{X_{j} > s_{j} \}$.
Note that $\mathbb{T}^{S}$ satisfies the identifiability condition in (\ref{identfiable cond}).
We refer to such a decision tree as an {\it identifiable binary-product} tree.

\begin{figure}[t]
\centering
\includegraphics[scale=0.4]{./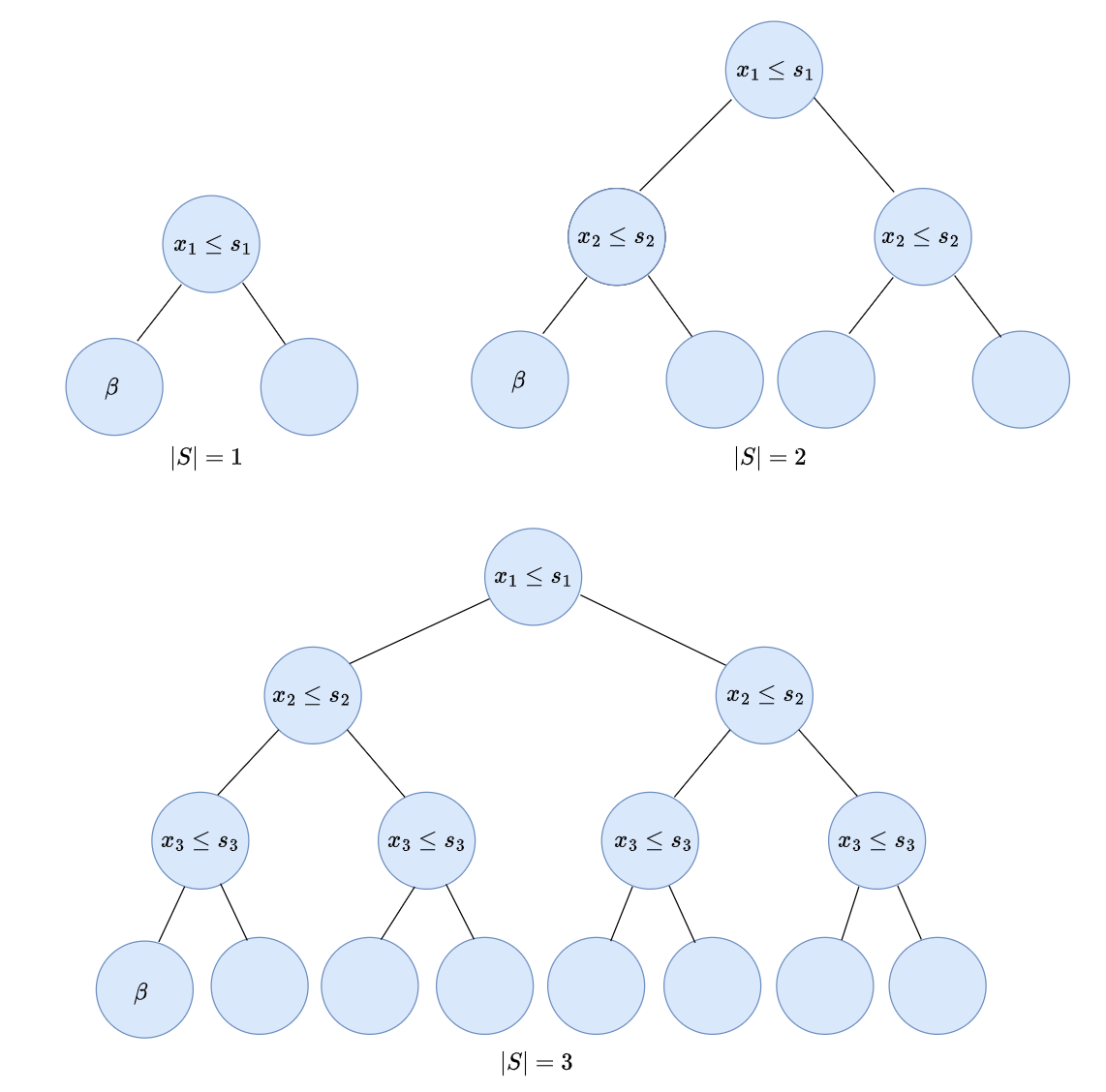}
\caption{Binary-product trees for $|S|$ being 1, 2, and 3, respectively.
Nodes at the same depth share the same split rule, and an observation is assigned to the left child node whenever the rule is satisfied (otherwise, it goes to the right child)} \label{tree fig}
\end{figure}

Figure \ref{tree fig} shows three indentifiable binary-product trees of the orders 1,2 and 3.
The decision trees have a special structure: all terminal (leaf) nodes have the same depth and all nodes at the same level share an identical splitting rule. 
The heights of each terminal node are automatically decided when the height $\beta$ of the left-most 
terminal node is selected. 
To sum up, any identifiable binary-product tree is parameterized by  
the componet set $S,$ the vector of the split values $\mathbf{s}:=(s_j, j\in S)$ 
corresponding to each covariate in $S$ and the height $\beta.$
Whenever needed, we write a given identifiable binary-product tree $\mathbb{T}^S(\mathbf{x}_S)$
as $\mathbb{T}(\mathbf{x}: S, \mathbf{s}, \beta)$.

For categorical covariate $x_{\ell},$ we proprocess all categorical covariates using the one-hot encoding. 
That is, each categorical covariate is transformed to a binary vector.
Therefore, splits on categorical covariates can be handled similarly to continuous covariates, and
we only consider continuous covariates for the remainder of the paper.

\subsection{Prior distribution} \label{prior sec}

ANOVA-BART assumes 
\begin{align} \label{eq_anova_bart}
    f(\mathbf{x})=\sum_{t=1}^T \mathbb{T}(\mathbf{x}:S_t, \mathbf{s}_t, \beta_{t})
\end{align}
for $S_t\subseteq [p],$ $\mathbf{s}_t \in \prod_{i\in S_{t}} \mathcal{A}_i$ and $\beta_t\in \mathbb{R}.$
The parameters to be inferred are $T$ as well as 
$(S_1,\mathbf{s}_1,\beta_{1}),\ldots,(S_T,\mathbf{s}_T,\beta_{T})$ and $\psi.$
In the following, we specify the prior distribution $T,$ 
for $(S_t,\mathbf{s}_t, \beta_t), t=1,\ldots,T$ and $\psi$.
First, we assume that a priori $(S_t,\mathbf{s}_t,\beta_{t}), t=1,2,\ldots$ are independent and identically distributed.
$\newline$
\noindent{\bf Prior for $S$.}
We let $S$ follow the mixture distribution $\sum_{d=1}^{p} \omega_d \cdot \mbox{Uniform} \left\{{\rm power}([p],d)\right\} $
for $\omega_d\ge 0$ and $\sum_{d=1}^{p} \omega_d=1$.
The weights $\omega_d$ are defined recursively as in \cite{BART}, as follows. 
For a given positive integer $d,$
we let $p_{\text{split}}(d)=\alpha_{\text{split}} (1+d)^{-\gamma_{\text{split}}}$
for $\alpha_{\text{split}}\in (0,1)$ and $\gamma_{\text{split}}>0.$ Then, we set
\begin{align}
\omega_d \propto (1-p_{\text{split}}(d)) \prod_{l < d} p_{\text{split}}(l). \label{split_prior} 
\end{align}
Basically, $\omega_d$ is decreasing in $d.$  The hyperparameters $\alpha_{\text{split}}$ and $\gamma_{\text{split}}$ 
control how fast $\omega_{d}$ decreases with $d$.
That is, the probability $\text{Pr}(|S|>d)$ increases  as $\alpha_{\text{split}}$ increases but decreases as $\gamma_{\text{split}}$ increases.
$\newline$
$\newline$
\noindent{\bf Prior for $\mathbf{s} | S$.} 
Conditional on $S$, $\{s_j, j\in S\}$ are independent and uniformly distributed on $\mathcal{A}_j, j\in S.$
In this paper, we set 
\begin{align}
\mathcal{A}_j=\{ (x_{(i),j}+x_{(i-1),j})/2: i=2,\ldots,n\}, 
\label{eq:split_set}
\end{align}
where $x_{(i),j}$s are the order statistics of given data $\{x_{i,j}, i=1,\ldots,n\}.$
$\newline$
$\newline$
\noindent{\bf Prior for $\beta$.}
We use a diffuse Gaussian prior
\begin{align*}
\beta \sim N(0,\sigma_{\beta}^{2})
\end{align*}
for $\sigma_{\beta}^2>0.$  
$\newline$
$\newline$
\noindent{\bf Prior for $T$.}
We use the following distribution for the prior of $T:$ 
\begin{align*}
&\pi\{T=t\}\propto e^{-C_{*} t \log n}, \:\: \text{for}\:\: t=0,1,\ldots, T_{\max},
\end{align*}
where $C_*>0$ and $T_{\max} \in \mathbb{N}_+$ are hyperparameters.
$\newline$
$\newline$
\noindent{\bf Prior for the nuisance parameter $\psi$.}
For a model having the nuisance parameter, we only consider the Gaussian regression model: 
$p_{f(\mathbf{x}),\psi}(y) \propto \exp(-(y-f(\mathbf{x}))^2/2\sigma^2),$
where $\psi=\sigma^2,$ and
we let
\begin{align*}
\sigma^{2} \sim IG\left({v\over2},{v\lambda \over 2}\right)   
\end{align*}
for $v>0$ and $\lambda>0$ a priori,
where $IG(a,b)$ is the inverse gamma distribution with the shape parameter $a$ and scale parameter $b.$

\section{Posterior Sampling}
\label{sec:posterior}

In this section, we develop an MCMC algorithm for posterior sampling of ANOVA-BART by modifying
the MCMC algorithms of BART (\cite{BART}, \cite{bartmachine}, \cite{linero2018bayesian}).
Note that, in contrast to Generalized BART (\citep{GBART}), the MCMC algorithm for ANOVA-BART does not require reversible jump MCMC when updating the tree structure, since the dimension of the height parameter is fixed at one and remains unchanged across different tree structures.

Basically, we use a Gibbs sampling algorithm to generate the parameters $T,
\{(S_{t},\mathbf{s}_t,\beta_{t}), t\in [T]\}$ and nuisance parameter $\psi$ from their conditional posterior distributions.
That is, our MCMC algorithm proceeds with the following three steps:
\begin{enumerate}
    \item Update $T$.
    \item Update $(S_{t}, \mathbf{s}_{t}, \beta_{t})$ for $t = 1, \ldots, T.$ 
    \item Update the nuisance parameter $\psi$.
\end{enumerate}
Let $\Theta = (T, S_{1}, \mathbf{s}_{1}, \beta_{1}, \ldots, S_{T}, \mathbf{s}_{T}, \beta_{T}, \psi)$. 
Define $P_{\pmb{\mathfrak{w}}}$ as a multinomial distribution with one trial on $[p]$ and probability vector 
$\pmb{\mathfrak{w}}=(\mathfrak{w}_1,\ldots,\mathfrak{w}_p)^\top$, where 
$\mathfrak{w}_j \propto \sum_{t=1}^T \mathbb{I}(j \in S_t)$ for $j \in [p]$.


\subsection{Updating $T$}

We employ the following proposal distribution in the MH algorithm. 
Given $T$, we propose $T^{\text{new}} = T-1$ with probability $T/T_{\max}$, and $T^{\text{new}} = T+1$ with probability $1 - T/T_{\max}$.

\paragraph{Case of $T^{\text{new}} = T-1$.}
We obtain $\Theta^{\text{new}}$ by deleting one element randomly from $\{(S_{t}, \mathbf{s}_{t}, \beta_{t}), t=1, \ldots, T\}.$ 

\paragraph{Case of $T^{\text{new}} = T+1$.}
A central step is to construct a suitable proposal for the new tree, i.e, $(S_{T+1}^{\text{new}}, \mathbf{s}_{T+1}^{\text{new}}, \beta_{T+1}^{\text{new}})$.
For this purpose, we first propose $S_{T+1}^{\text{new}}$, and then 
$\mathbf{s}_{T+1}^{\text{new}}$ and $\beta_{T+1}^{\text{new}}$ are drawn from their prior distributions given 
$S_{T+1}^{\text{new}}.$
We devise a proposal distribution of $S_{T+1}^{\text{new}}$ carefully to ensure that
the MCMC algorithm can explore higher order interactions efficiently.

Let $M$ be a given positive integer.
First of all, the proposal distribution for $S_{T+1}^{\text{new}}$ consists of the following two alternatives:
\begin{itemize}
    \item Random : With probability $M/(M+T),$ generate $S_{T+1}^{\text{new}}$ from the prior distribution,
   \item Stepwise: With probability $T/(M+T),$ set $S_{T+1}^{\text{new}}=S_{t^*}\cup \{j_{t^*}\},$
   where $t^*\sim \text{Uniform}[T]$ and $j_{t^*} \sim P_{\pmb{\mathfrak{w}}}(\cdot|\cdot\in S_{t^{*}}^c).$ 
\end{itemize}

The stepwise move is designed to ensure that the order of a newly added tree is always one higher than that of an existing tree in the current ensemble, thereby encouraging the MCMC algorithm to explore higher-order interactions more frequently.
In addition, we use $P_{\pmb{\mathfrak{w}}}$ to increase the acceptance rate.

\subsection{Updating $(S_{t},\mathbf{s}_{t},\beta_{t})$ for $t=1,...,T$}
\label{sec:tree_sampling}
Let $\{(\mathbf{x}_1,y_1),\ldots,(\mathbf{x}_n,y_n)\}$
be given data which consist of $n$ pairs of observed covariate vector and response variable.
For the likelihood, we assume that
$y_i$s are realizations of $Y_i|\mathbf{x}_{i} \sim \mathbb{P}_{f(\mathbf{x}_{i}),\psi}$ conditional on $f$ and $\psi$.
Let $\mathbf{y}^{(n)} := (y_{1},...,y_{n})$.
\paragraph{Updating $(S_{t},\mathbf{s}_{t})$.}
The conditional posterior distribution $\pi\{S_{t},\mathbf{s}_{t}|\text{others}\}$ depends on 
solely through $\lambda_{t}, \mathbf{y}^{(n)}, \beta_{t}$ and $\psi$, where 
$$\lambda_{t}=\left\{ \sum_{k\neq t} \mathbb{T}(\bold{x}_i:S_k,\mathbf{s}_{k},\beta_{k}),\: i=1,\ldots,n\right\}$$
is the set of partial predictive values derived from the current model that does not include the $t$th decision tree. 
Therefore, sampling $(S_t, \mathbf{s}_t)$ from the conditional posterior distribution is equivalent to sampling from the distribution $\pi\{S_t, \mathbf{s}_t \mid \lambda_t, \mathbf{y}^{(n)}, \beta_t, \psi\}$.

For updating $(S_t,\mathbf{s}_t),$ we use a modification of the MH algorithm used in BART (\cite{BART}).
We use a proposal distribution similar to one used in \cite{bartmachine, Lin}. 
A key difference
is that we consider only identifiable binary-product trees defined in Section \ref{3.1}. 
As a proposal distribution in the MH algorithm, we consider the following three possible alterations
of $(S_t,\mathbf{s}_t)$:
\begin{itemize}
\item \textbf{GROW} : adding an element $j^{\text{new}}$ selected randomly based on $P_{\pmb{\mathfrak{w}}}(\cdot|\cdot\in S_{t}^{c})$ into $S_t$  
and choosing a split value for the newly selected element $j^{\text{new}}$ by selecting it randomly from $\mathcal{A}_{j^{\text{new}}}.$
\item \textbf{PRUNE} : deleting an element from $S_t$ and the corresponding split value from $\mathbf{s}_t.$
\item \textbf{CHANGE} : changing an element in $S_t$ by one randomly selected from $S_t^c$ based on $P_{\pmb{\mathfrak{w}}}(\cdot|\cdot\in S_{t}^{c})$ and changing the split value accordingly. 
\end{itemize}
The MH algorithm proposes $(S_t^{\text{new}},\mathbf{s}_t^{\text{new}})$ using one of GROW, PRUNE, and CHANGE with probability 0.28, 0.28 and 0.44, respectively (\cite{bartmachine}), and then accepts/rejects $(S_t^{\text{new}},\mathbf{s}_t^{\text{new}})$
 according to the acceptance probability given in Section \ref{B.1} of Supplementary Material.

\paragraph{Updating $\beta_{t}$.}
We update $\beta_{t}$ via the MH algorithm with the Langevin dynamics (\cite{rossky1978brownian}).
The details are given in Section \ref{sec:beta_sample} of Supplementary Material.

\subsection{Updating nuisance parameter $\psi$}
Recall that $\psi=\sigma^2,$ where $\sigma^2$ is the variance of the noise in the Gaussian regression model.
Generating $\sigma^2$ can be done easily since the conditional posterior distribution of $\sigma^2$ is an inverse Gamma distribution whose details are given in Section \ref{sec:sigma_post} of Supplementary Material.

\subsection{MCMC algorithm for ANOVA-BART}
Algorithm \ref{alg:algorithm} summarizes the proposed MCMC algorithm for ANOVA-BART.
Note that the sampling of $T$ is not needed for BART since the number of decision trees $T$ is fixed.

\begin{algorithm}[h]
\caption{MCMC algorithm for ANOVA-BART} 
\label{alg:algorithm}
\hskip 0.7cm \textbf{Input}: $T$ : initial number of decision trees, $\mathbb{M}$ : number of MCMC iterations\\
\begin{algorithmic}[1] 
\FOR{i : 1 to $\mathbb{M}$}
\STATE $T \sim \text{MH}_{T}(\{S_{t},\mathbf{s}_{t},\beta_{t}\}_{t=1}^{T},\psi)$
\FOR{t : 1 to $T$}
\STATE $(S_{t},\mathbf{s}_{t}) \sim \text{MH}_{S_{t},\mathbf{s}_{t}}(\lambda_{t},\mathbf{y}^{(n)},\beta_{t},\psi)$
\STATE $\beta_{t} \sim \text{MH}_{\beta_{t}}(\lambda_{t},\mathbf{y}^{(n)},S_{t},\psi)$
\ENDFOR \\
\STATE $\psi \sim \pi\{\psi|\text{others}\}$
\RETURN $\{\mathbb{T}(\cdot:S_{t},\mathbf{s}_{t},\beta_{t})\}_{t=1}^{T},\psi$
\ENDFOR
\end{algorithmic}
\end{algorithm}


\section{Posterior concentration rate} \label{5}

In this section, we study theoretical properties of ANOVA-BART. In particular, we drive the posterior concentration rate of ANOVA-BART when the true model is smooth (i.e., H\"{o}lder smooth).
Our posterior concentration rate is
the same as the minimax optimal rate (up to a logarithm term) and adaptive to the smoothness and the maximum order of signal interactions for the true function.
Moreover, we derive the posterior concentration rate of each component, 
which can be utilized to screen out unnecessary components after posterior computation.

For technical simplicity, we consider the random-$X$ design and
fix the nuisance parameter $\psi$ as 1. 
That is, the input-output pairs $(\bold{x}_1,y_1),\ldots,
(\bold{x}_n,y_n)$ are assumed to be a realization of $(\mathbf{X}_1,Y_1),\ldots, (\mathbf{X}_n,Y_n)$ that
are independent copies of $(\bold{X},Y)$ whose distribution $\mathbb{P}_0$ is given as
\begin{align*}
\bold{X}\sim \mathbb{P}_{\bold{X}}\quad \text{and} \quad Y|\bold{X}=\bold{x} \sim \mathbb{P}_{f_0(\bold{x}),1},  
\end{align*}
where $f_0$ is the true regression function.
Let $\mathbf{X}^{(n)} = (\mathbf{X}_{1},...,\mathbf{X}_{n})^{\top}$ and $Y^{(n)}=(Y_{1},...,Y_{n})^{\top}$.
The main reason of considering the random-$X$ design rather than the fixed-$X$ design is to apply the populational identifiability condition
($\mathbb{P}_{\mathbf{X}}$-identifiability condition) to the functional ANOVA decomposition of $f_0.$ 
Discussions about the fixed-$X$ design are given in Section \ref{sec:fixed-x} of Supplementary Material.

To derive the posterior concentration rate of ANOVA-BART, as usual, we will check the sufficient conditions given by \cite{nonp1}. 
The most technically difficult part is to study the approximation property
of the sum of identifiable binary-product trees with bounded heights.  
Theorem \ref{lemma5} in Section \ref{app:A_{n}} of Supplementary Material is the main approximation theorem for this purpose.

For technical reasons, we consider the truncated prior
$\pi_\xi$ for a large positive constant $\xi$ defined as
$\pi_\xi\{\cdot\}\propto \pi\{\cdot\} \mathbb{I}(\|f\|_\infty \le \xi),$
where $\pi\{\cdot\}$ is the prior introduced in Section \ref{prior sec}. 
For posterior sampling with this prior,
we generate samples of $f(\cdot)$ by the MCMC algorithm developed in Section \ref{sec:posterior}
and only accept samples satisfying $\|f\|_\infty \le \xi$.
We denote $\pi_\xi\{\cdot|\mathbf{X}^{(n)},Y^{(n)}\}$ the corresponding posterior.

\subsection{Posterior Concentration Rate}

For technical simplicity, we let $\mathcal{X}_j=[0,1]$ for all $j\in [p].$
We consider the populationally identifiable sparse ANOVA decomposition:
\begin{align*}
f_{0}(\mathbf{x}) = \sum_{S \in \mathbb{S} }f_{0,S}(\mathbf{x}_{S}),
\end{align*}
where $f_{0,S}, S\in \mathbb{S} $ satisfy the populational identifiability condition ($\mathbb{P}_{\mathbf{X}}$-identifiability condition) and $\mathbb{S} \subseteq \text{power}([p],d_{\max})$ is the index set of signal components. Let $d_{\max}=\max_{S\in\mathbb{S}}|S|$ that is the maximum order of signal components in the true function.
We assume the following regularity conditions.
\begin{enumerate}[label=(J.\arabic*)]
 \item The density $p_{\mathbf{X}}$ of
 $\mathbb{P}_{\bold{X}}$ with respect to the Lebesgue measure on $\mathbb{R}^p$ exists and satisfies
 $0< \inf_{\bold{x}\in \mathcal{X}}p_{\mathbf{X}}(\bold{x}) \le \sup_{\bold{x}\in \mathcal{X}}p_{\mathbf{X}}(\bold{x})<\infty. $
 \label{eq:Assumption_1}
 \item Each $f_{0,S}$ is a H\"{o}lder smooth function with smoothness $\alpha \in (0,1]$, i.e.,
 \begin{align*}
\Vert f_{0,S} \Vert_{\mathcal{H}^{\alpha}} := \sup_{\mathbf{x},\mathbf{x}' \in [0,1]^{|S|}}{|f_{0,S}(\mathbf{x}) - f_{0,S}(\mathbf{x}') | \over \Vert \mathbf{x} - \mathbf{x}' \Vert_{2}^{\alpha} 
} < \infty.
 \end{align*}
Additionally, we assume that $\Vert f_{0,S} \Vert_{\infty} \leq F$ for some positive constant $F$. 
For these assumptions, we write $f_{0,S}\in \mathcal{H}^{\alpha}_{F}.$
Furthermore, we write $f_0 \in \mathcal{H}^{\alpha}_{0,F}$ if $f_{0,S} \in \mathcal{H}^{\alpha}_{F}$ for all $S \in \mathbb{S}$.
 \label{eq:Assumption_2}
 \item The log-partition function $A(\cdot)$ is differentiable with a bounded second derivative over $[-2^{p}F,2^{p}F]$, i.e., there exists a positive constant $C_{A}$ such that
 \begin{align*}
     1/C_{A} \leq \ddot{A}(x) \leq C_{A}
 \end{align*}
 for all $x \in [-2^{p}F,2^{p}F].$ 
\label{eq:Assumption_5}
\item $T_{\max}=O(n)$.
\label{eq:Assumption_4}
\end{enumerate}

\begin{theorem}[Posterior concentration rate of ANOVA-BART] \label{theorem_SIBART} 
Assume that \ref{eq:Assumption_1}, \ref{eq:Assumption_2},\ref{eq:Assumption_5} and \ref{eq:Assumption_4} hold.
Then, for a given $\xi> 2^{p}F,$ we have
\begin{equation*}
\pi_\xi\big\{f : \Vert f_{0}-f \Vert_{2,n} > B_{n}\epsilon_{n} \big| \mathbf{X}^{(n)},Y^{(n)}\big\} \xrightarrow{} 0,
\end{equation*}
for any $B_{n} \xrightarrow{} \infty$ in $\mathbb{P}_0^{n}$ as $n \xrightarrow{} \infty,$
where 
$\epsilon_{n}=n^{-\frac{\alpha}{2\alpha+d_{\max}}} (\log n)^{1\over 2}.$ 
\end{theorem}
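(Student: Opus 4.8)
The plan is to verify the three sufficient conditions of \cite{ghosal2000convergence} for the truncated prior $\pi_\xi$ with target rate $\epsilon_n = n^{-\alpha/(2\alpha+p)}(\log n)^{1/2}$: (i) a prior-mass (Kullback--Leibler) lower bound, (ii) an exponentially small prior mass for the complement of a sieve, and (iii) an entropy bound for the sieve. Because of the truncation, every $(f,\sigma^2)$ in the support of $\pi_\xi$ satisfies $\|f\|_\infty\le\xi$ and $1/\xi\le\sigma^2\le\xi$, and since $\|f_0\|_\infty\le 2^pF<\xi$ and $\sigma_0^2\in(1/\xi,\xi)$ by the choice of $\xi$, the truth lies in the support. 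On the region where the means are uniformly bounded and the variances are bounded away from $0$ and $\infty$, both the averaged Kullback--Leibler divergence and the squared Hellinger distance between $\mathbb{P}_f=\bigotimes_i N(f(\mathbf{x}_i),\sigma^2)$ and $\mathbb{P}_0$ are, up to constants, equivalent to $\|f-f_0\|_{2,\mu}^2+(\sigma^2-\sigma_0^2)^2$; using Assumption~\ref{eq:Assumption_1} and a standard empirical-process argument this transfers to $\|f-f_0\|_{2,n}^2+(\sigma^2-\sigma_0^2)^2$. Hence it suffices to control (i)--(iii) with respect to these essentially $L_2$ quantities.

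For the prior-mass condition I would start from the main approximation result, Theorem~\ref{lemma5}: each $\mathbb{P}_{\mathbf{X}}$-identifiable component $f_{0,S}\in\mathcal{H}^\alpha_F$ --- after projecting onto the space of $\mu_n$-identifiable functions, which by concentration of the empirical measure costs only a lower-order term --- can be approximated in sup-norm to within $\delta$ by a sum of $N_S\lesssim\delta^{-|S|/\alpha}$ identifiable binary-product trees whose split values lie in the grids $\mathcal{A}_j$ and whose free heights $\beta$ are uniformly bounded. Choosing $\delta\asymp\epsilon_n$ yields a target configuration with $T^\star:=\sum_{S}N_S\lesssim\epsilon_n^{-p/\alpha}$, so that $T^\star\log n\lesssim n\epsilon_n^2$. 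The prior probability of a suitable neighbourhood of this configuration is then bounded below by the product of: the $T$-prior at $T^\star$, which is $\gtrsim e^{-C_*T^\star\log n}$; over the $T^\star$ trees, the probability of selecting the correct $S_t$ (bounded below by a constant depending only on $p$ and the hyperparameters), of selecting each split value at the correct grid point (each $\ge 1/n$), and of $\beta_t$ falling in an interval of length $\asymp\epsilon_n/T^\star$ around its target (each $\gtrsim\epsilon_n/(T^\star\sqrt{2\pi\sigma_\beta^2})$, the Gaussian density being bounded below on the relevant compact); and the $\sigma^2$-prior mass near $\sigma_0^2$ (a positive constant). Multiplying, the log prior mass is $\gtrsim -c\,T^\star\log n\gtrsim -c'n\epsilon_n^2$, as required.

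For the sieve I would take $\mathcal{F}_n=\{(f,\sigma^2): f=\sum_{t=1}^{T}\mathbb{T}(\cdot:S_t,\mathbf{s}_t,\beta_t),\ T\le T_n,\ \max_t|\beta_t|\le\bar B_n\}$ with $T_n\asymp n\epsilon_n^2/\log n$ chosen large enough relative to the constant $c'$ above (possible because $T_{\max}=O(n)$ by Assumption~\ref{eq:Assumption_4}) and $\bar B_n\asymp\sqrt{n\epsilon_n^2}$. Since split values in $\mathcal{A}_j$ produce empirical cell masses at least $1/n$, the balancing identity~\eqref{g_id_eq} bounds every height of a tree with free height $\beta$ by $n^{|S|}|\beta|\le n^p\bar B_n$; hence discretizing each $\beta_t$ on a grid of spacing $\eta=n^{-c}$ perturbs $f$ by at most $T_n n^p\eta$, so for $c$ large an $\epsilon_n$-net of $\mathcal{F}_n$ has log-cardinality $\lesssim T_n\log\!\big(2^p n^p\bar B_n/\eta\big)\lesssim T_n\log n\lesssim n\epsilon_n^2$, giving the entropy bound. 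For the sieve complement, $\pi\{T>T_n\}\lesssim e^{-C_*T_n\log n}\le e^{-Kn\epsilon_n^2}$ with $K$ as large as desired by taking $T_n$ large, and $\pi\{\max_t|\beta_t|>\bar B_n\}\le T_{\max}\cdot 2\exp(-\bar B_n^2/(2\sigma_\beta^2))\le n\exp(-\bar B_n^2/(2\sigma_\beta^2))\le e^{-Kn\epsilon_n^2}$; thus $\pi(\mathcal{F}_n^c)\le e^{-Kn\epsilon_n^2}$ with $K$ exceeding the Kullback--Leibler constant. Assembling (i)--(iii), \cite{ghosal2000convergence} gives $\pi_\xi\{\|f-f_0\|_{2,n}+|\sigma^2-\sigma_0^2|>B_n\epsilon_n\mid\mathbf{X}^{(n)},Y^{(n)}\}\to 0$ in $\mathbb{P}_0^n$-probability for every $B_n\to\infty$.

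The main obstacle is Theorem~\ref{lemma5} itself: producing a sup-norm approximation of an arbitrary $\mathcal{H}^\alpha$ component by a \emph{sum of identifiable binary-product trees} with the optimal count $\delta^{-|S|/\alpha}$ and uniformly bounded free heights. Plain tensor/dyadic piecewise-constant approximation attains the right count but ignores the centering constraints~\eqref{g_id_eq}; one must either construct the approximant directly inside the identifiable subspace or project a piecewise-constant approximant and show the projection inflates neither the sup-norm error beyond $O(\delta)$ nor the heights beyond a constant --- this, together with the passage from $\mathbb{P}_{\mathbf{X}}$-identifiability to $\mu_n$-identifiability, is the technical heart of the argument. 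The remaining steps are routine posterior-concentration bookkeeping, the only non-standard twist being that $T$ is random, which is handled cleanly by the geometric-type prior $\pi\{T=t\}\propto e^{-C_*t\log n}$.
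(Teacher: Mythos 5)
Your proposal is correct and follows essentially the same route as the paper: conditioning on a high-probability design event, verifying the Ghosal--Ghosh--van der Vaart prior-mass/entropy/sieve-complement conditions with a sieve of ensembles of $T\lesssim n\epsilon_n^2/\log n$ identifiable binary-product trees with bounded heights, lower-bounding the prior mass via the approximation result (Theorem \ref{lemma5}) together with the geometric prior on $T$, the discrete tree-structure probabilities and a Gaussian ball for the free heights, and bounding the complement through the $T$-prior tail and Gaussian tails of $\beta$. The one small correction: since the identifiability constraint inflates the heights of a tree by factors up to $n^{|S|}$ (Proposition \ref{prop:max_av}), the prior-mass step must place each $\beta_t$ within radius $\asymp \epsilon_n/(n^{p}T^\star)$ of its target (the paper requires $\Vert\mathcal{B}-\hat{\mathcal{B}}\Vert_2\le (n^{p}\sqrt{t_n})^{-1}C_{\mathbb{B}}\epsilon_n/2$), not $\epsilon_n/T^\star$ as you wrote; this is harmless since the log-cost per tree remains $O(\log n)$, and you already account for the same $n^{p}$ inflation in your entropy bound.
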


Note that our prior is assigned to satisfy the $\mu_{n}$-identifiability condition, while the true function satisfies the $\mathbb{P}_{\mathbf{X}}$-identifiability condition.
This discrepancy constitutes one of the technically delicate points of the proof of Theorem \ref{theorem_SIBART}.

\subsection{Comparison with Bayesian Tree Ensemble Models}

\cite{onBART, artbart} derived the posterior concentration rates for BART, which are minimax optimal. 
Moreover, \cite{bforest} developed Bayesian forest, whose posterior concentration rate is adaptive to $d_{\max}$ as ANOVA-BART is, but its practical implementation would be computationally demanding.

ANOVA-BART adopts a fundamentally different strategy than BART.
In particular, the number of decision trees increases as the sample size 
increases in ANOVA-BART while the sizes of each decision tree increase in BART.
ANOVA-BART approximates $f_{0}$ using a sum of many small decision trees, whereas BART approximates $f_{0}$ using a finite number of large decision trees.
That is, we have to prove that 
the sum of many small identifiable binary-product trees can approximate a smooth function well.
For this purpose, we first
approximate a smooth given function by a large decision tree as is done by \cite{bforest}.
Then, we prove that
this large decision tree can be represented by the sum of many small identifiable binary-product trees, which
is technically quite involved.
See Section \ref{app:approximation} of Supplementary Material.

An important practical advantage of ANOVA-BART compared to BART
is Theorem \ref{theorem_component} below, which derives the posterior concentration of each component.
An obvious applications of Theorem \ref{theorem_component} is to screen out unnecessary components a posteriori.

\begin{theorem}[Posterior concentration rate of each component]
\label{theorem_component}
Let $p^{\rm ind}_{\bold{X}}(\mathbf{x})= \prod_{j=1}^p p_{X_j}(x_j),$ where
$p_{X_j}$ is the density of $X_j$.
In addition to Assumptions \ref{eq:Assumption_1}-\ref{eq:Assumption_4}, we further assume that 
\begin{equation*}
0< \inf_{\bold{x}\in \mathcal{X}}\frac{p_{\mathbf{X}}(\bold{x})}{p^{\rm ind}_{\mathbf{X}}(\bold{x})}\le \sup_{\bold{x}\in \mathcal{X}}\frac{p_{\mathbf{X}}(\bold{x})}{p^{\rm ind}_{\mathbf{X}}(\bold{x})}<\infty.
\end{equation*}
Then, for $S \subseteq [p]$ and $\xi> 2^{p}F,$ we have
\begin{equation*}
\pi_{\xi}\big\{f  : \Vert f_{0,S}-f_S \Vert_{2,n} > B_{n}\epsilon_{n} \big| \mathbf{X}^{(n)},Y^{(n)} \big\} \xrightarrow{} 0,
\end{equation*}
for any $B_{n} \xrightarrow{} \infty$ in $\mathbb{P}_{0}^{n}$, as $n \xrightarrow{} \infty.$
\end{theorem}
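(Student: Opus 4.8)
The plan is to leverage Theorem \ref{theorem_SIBART} together with the uniqueness and stability of the functional ANOVA decomposition. Theorem \ref{theorem_SIBART} already tells us that, with posterior probability tending to one, $\Vert f_0 - f\Vert_{2,n} \le B_n \epsilon_n$ for the full function $f = \sum_S f_S$. The key additional ingredient is an operator-norm bound: the map that sends a function $g$ to its $\mu_n$-identifiable component $g_S$ on the subspace $\mathcal{X}_S$ should be bounded in $\Vert\cdot\Vert_{2,n}$, uniformly in $n$ with high probability. If this holds with a constant $K$ not depending on $n$, then from $f - f_0$ one extracts $f_S - (f_0)_S^{\mu_n}$ and bounds its empirical norm by $K \Vert f - f_0\Vert_{2,n}$, where $(f_0)_S^{\mu_n}$ denotes the $S$-component of $f_0$ under the \emph{empirical} identifiability condition (note that ANOVA-BART's prior is built to respect $\mu_n$-identifiability, so the decomposition of $f$ into the $\mathbb{T}^{S_t}$-sums is exactly the $\mu_n$-ANOVA decomposition). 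This reduces the theorem to controlling $\Vert (f_0)_S^{\mu_n} - f_{0,S}\Vert_{2,n}$, i.e., the gap between the empirical-measure and the population-measure ANOVA components of the fixed true function $f_0$.

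The second step is therefore to show $\Vert (f_0)_S^{\mu_n} - f_{0,S}\Vert_{2,n} = o_{\mathbb{P}_0}(\epsilon_n)$ (in fact it should be $O_{\mathbb{P}_0}(n^{-1/2}\sqrt{\log n})$, which is dominated by $\epsilon_n$). This is where the newly added assumption $0 < \inf p_{\mathbf{X}}/p^{\rm ind}_{\mathbf{X}} \le \sup p_{\mathbf{X}}/p^{\rm ind}_{\mathbf{X}} < \infty$ enters: it guarantees that the population ANOVA projection operators under $\mathbb{P}_{\mathbf{X}}$ and under $\mathbb{P}_{\mathbf{X}}^{\rm ind} = \prod_j \mathbb{P}_{X_j}$ are comparable, and more importantly that both are well-conditioned (the weights $\mu_j\{x_j \le s\}$ and $\mu_{n,j}\{x_j\le s\}$ stay bounded away from degeneracy). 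The ANOVA components are explicit linear combinations of conditional expectations of $f_0$ of the form $\int f_0 \, d\mu_{n,j_1}\cdots d\mu_{n,j_k}$ minus lower-order terms; each such term is an average of a bounded (since $f_0 \in \mathcal{H}^\alpha_{0,F}$, hence $\Vert f_0\Vert_\infty \le 2^p F$), Lipschitz-in-the-integration-variable family, so a uniform law of large numbers over $\mathbf{s}$ (a VC/bracketing argument, or a direct union bound over the finitely many split points in $\mathcal{A}_j$ combined with monotonicity) gives $\sup_{\mathbf{s}} |\mu_{n,A}(\cdot) - \mu_A(\cdot)| = O_{\mathbb{P}_0}(\sqrt{\log n / n})$. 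Propagating these through the finitely many (depending only on $p$) algebraic operations that define the ANOVA component yields the claimed rate; one evaluates the resulting difference at the data points $\mathbf{x}_1,\ldots,\mathbf{x}_n$ and takes the empirical $L_2$-norm.

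Combining the two steps: on the high-probability event from Theorem \ref{theorem_SIBART} intersected with the event on which the empirical-vs-population projection bounds hold,
\begin{equation*}
\Vert f_{0,S} - f_S\Vert_{2,n} \le \Vert f_{0,S} - (f_0)_S^{\mu_n}\Vert_{2,n} + \Vert (f_0)_S^{\mu_n} - f_S\Vert_{2,n} \le o_{\mathbb{P}_0}(\epsilon_n) + K\, B_n \epsilon_n \le B_n' \epsilon_n
\end{equation*}
for a suitable $B_n' \to \infty$, which is exactly the asserted posterior concentration after absorbing constants into $B_n$. I expect the main obstacle to be the uniform-in-$n$ boundedness of the $\mu_n$-identifiable projection operator in the $\Vert\cdot\Vert_{2,n}$ norm: unlike the population operator, which is a genuine orthogonal projection in $L_2(\mu)$, the empirical operator acts on the random design and its conditioning depends on the smallest empirical cell-probabilities; one must show these do not collapse too fast, and that the recursive subtraction of lower-order components does not blow up the constant with $p$ or $n$. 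This is precisely the ``technically delicate'' $\mu_n$-versus-$\mathbb{P}_{\mathbf{X}}$ discrepancy flagged after Theorem \ref{theorem_SIBART}, and it is where Assumption \ref{eq:Assumption_1} and the new density-ratio assumption do the real work.
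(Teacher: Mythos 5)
There is a genuine gap, and it sits exactly where you flag it: the claimed operator-norm bound $\Vert g_S\Vert_{2,n} \le K\,\Vert g\Vert_{2,n}$ for the map extracting the $\mu_n$-identifiable $S$-component is not proved, and as stated for general functions it is false. The $\mu_n$-identifiable components are orthogonal only under the \emph{product} of the empirical marginals $\prod_j \mu_{n,j}$, not under $\mu_n$ itself, and these two measures are not comparable: $\mu_n$ puts mass $1/n$ on each data point while the product of empirical marginals spreads mass over a grid of order $n^{p}$ points, so no $n$-free density-ratio bound exists. Concretely, one can take $g$ vanishing at all $n$ data points (so $\Vert g\Vert_{2,n}=0$) but large at off-diagonal grid points $\prod_j\{x_{1,j},\dots,x_{n,j}\}$; its $\mu_n$-ANOVA components, which are built from averages over that grid, are then large at the data points, so no constant $K$ works. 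The new density-ratio assumption $0<\inf p_{\mathbf{X}}/p^{\rm ind}_{\mathbf{X}}\le \sup p_{\mathbf{X}}/p^{\rm ind}_{\mathbf{X}}<\infty$ buys comparability of the \emph{population} joint with the \emph{population} product measure; it says nothing about $\mu_n$ versus $\prod_j\mu_{n,j}$, so it cannot rescue a purely empirical-norm argument.

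Any workable repair of your step essentially forces the detour the paper takes: (i) transfer $\Vert f-f_0\Vert_{2,n}\lesssim B_n\epsilon_n$ to the population norm $\Vert\cdot\Vert_{2,\mathbb{P}_{\mathbf{X}}}$ uniformly over a sieve, using an empirical-process inequality (the paper uses Theorem 19.3 of Gy\"orfi et al.) applied to an enlarged, \emph{non}-identifiable tree class, since the identifiable sieve is itself data-dependent; (ii) replace each $\mu_n$-identifiable component $f_S$ of the posterior draw by its populationally identifiable version via the bridging bound (Theorem \ref{thm:multi_po_em_approx}), incurring an additive $\epsilon_n$ (not a multiplicative constant, which is another respect in which your $K$-bound is too strong); (iii) use orthogonality of populationally identifiable components under $\mathbb{P}^{\rm ind}_{\mathbf{X}}$ together with the density-ratio assumption to peel off $\Vert f_S-f_{0,S}\Vert_{2,\mathbb{P}_{\mathbf{X}}}$ from $\Vert f-f_0\Vert_{2,\mathbb{P}_{\mathbf{X}}}$; and (iv) transfer back to $\Vert\cdot\Vert_{2,n}$ and separately control the posterior mass of the complement of the sieve (a step your proposal omits, but which is needed because the norm comparisons only hold uniformly over the sieve). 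Your second step, controlling the gap between the $\mu_n$- and $\mathbb{P}_{\mathbf{X}}$-ANOVA components of the fixed smooth $f_0$, is plausible by a uniform LLN, but it is secondary; without a proof of (i)--(iv), or of your operator bound restricted to the relevant random class (which amounts to the same work), the argument does not go through.
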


The result of Theorem \ref{theorem_component} can be fruitfully utilized to screen out unnecessary components after obtaining the posterior distribution. That is, we can delete $f_S$ from the regression function when $\|f_S\|_{2,n}$ is small a posteriori.
To be more specific, we delete $f_S$ if 
$$\pi\{\|f_S\|_{2,n} > \epsilon_n \log n |\mathbf{X}^{(n)},Y^{(n)} \} <\delta$$
for a given positive real number $\delta$.
Theorem \ref{theorem_component} implies that this post-hoc component selection procedure is selection-consistent (i.e. deleting all unnecessary components).

\section{Experiments}
\label{sec:experiment}
This section presents the results of numerical experiments of ANOVA-BART.
In Section \ref{sec:synthetic}, we conduct an analysis of synthetic data, while we focus on real data analysis
in Section \ref{sec:real_data}.
In Section \ref{sec:stable_experiment}, we investigate the stability of ANOVA-BART in estimating the components.

We consider BART (\cite{BART}), SSANOVA (\cite{SSANOVA}), 
MARS (\cite{MARS}) and NAM (\cite{nam})
as the baseline methods.
We use the official code in `BayesTree' R package (\cite{chipman2016package}), 
`gss' R package (\cite{gu2014smoothing}), `earth' R package (\cite{milborrow2017earth})
and the official code in \href{https://github.com/AmrMKayid/nam}{https://github.com/AmrMKayid/nam} for 
implementing BART, SSANOVA, MARS and NAM, respectively.
For NAM, we extend the code to implement NAM to include the second order interactions.
The detailed descriptions of the selection of hyperparameters in ANOVA-BART and baseline methods are presented in Section \ref{app:experiment} of Supplementary Material.
Note that while SSANOVA and NAM require specifying the maximum order of the components to be estimated, ANOVA-BART does not require the predefined maximum order. We set the maximum order of interaction at 2 for SSANOVA and NAM 
because a larger value than 2 would not be possible due to memory deficiency.

\subsection{Analysis of synthetic data}
\label{sec:synthetic}

We evaluate the performance of ANOVA-BART in view of prediction and component selection by analyzing synthetic data.
To do this, we generate data using the Friedman's test function (\cite{MARS,BART,linero2018bayesian}) defined as
\begin{align*}
y=10\text{sin}(\pi x_{1}x_{2})+20(x_{3}-0.5)^{2}+10x_{4}+5x_{5}+\epsilon,
\end{align*}
where $\epsilon \sim N(0,\sigma_{\epsilon}^{2})$.
The error variance $\sigma_{\epsilon}^{2}$ is set to make the signal-to-noise ratio be 5.
For training data, we generate $\mathbf{x}_{i}=(x_{i,1},...,x_{i,p})^\top$, $i=1,...,1000$ from the uniform distribution on $(0,1)^p$ and then generate the response variable using only the first 5 covairates.
This means that all covariates except the first five are noises.

\subsubsection{Prediction performance}
\label{sec:pred_syn}
For all methods, hyperparameters are selected via 5-fold cross-validation on the training data. Independently generated test data of size 10,000 are used to compute the Root Mean Square Error (RMSE) as a measure of prediction performance. For ANOVA-BART and BART, we use the Bayes estimators.

Table \ref{table:synthetic_result} presents the averages and standard errors of the RMSEs for ANOVA-BART and baseline methods based on 5 repetitions of the simulation.
The results amply support that ANOVA-BART is favorably compared to the baseline methods in prediction performance.

\begin{table}[h]
\centering
\footnotesize
\caption{Averages and standard errors of RMSEs on the synthetic data.}
\label{table:synthetic_result}
\begin{tabular}{cccccc}
\toprule
 & ANOVA-BART  & BART & MARS & SSANOVA & NAM \\ \midrule
p=10 & 1.188 (0.01)  & 1.174 (0.02)  & 1.196 (0.02) & 1.207 (0.05)  & 1.207 (0.03)  \\ 
p=50 & 1.224 (0.01)  & 1.256 (0.03) &  1.310 (0.04) & 1.550 (0.05) & 1.713 (0.04)   \\ 
p=100 & 1.275 (0.02)  & 1.304 (0.03) & 1.407 (0.04) & 1.702 (0.14) & 1.779 (0.04) \\ \bottomrule
\end{tabular}%
\end{table}

\subsubsection{Component selection}
\label{sec:comp_select}
To examine how effectively ANOVA-BART selects the true signal components, we conduct a simulation.
We use the $l_{2}$ norm of each estimated component (i.e., $\Vert f_{S} \Vert_{2,n}$) as the important score, i.e., if $\Vert f_{S} \Vert_{2,n}$ large, we consider $f_{S}$ to be important.
We select the top 10 components based on the normalized important scores
(the important scores divided by the maximum important score), which are presented 
 in Figure \ref{fig:component_importance} for various values of $p.$
The red bars correspond to the signal components, while the blue bars are noisy ones.
The results strongly indicate that ANOVA-BART can detect signal components very well.

\begin{figure}[h]
    \centering
    \includegraphics[width=0.8\linewidth]{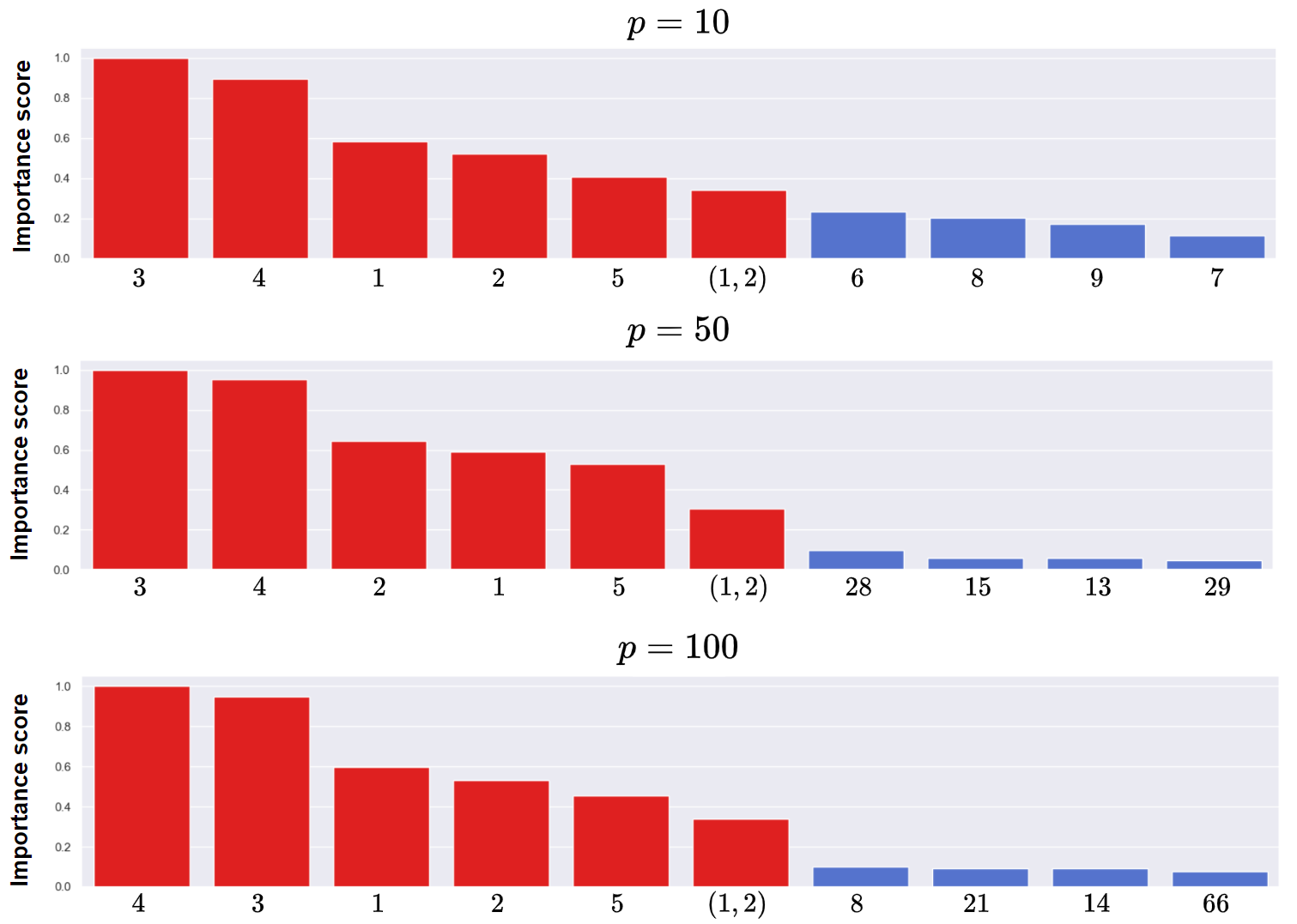}
    \caption{\footnotesize Importance scores of the estimated components by ANOVA-BART for $p=10,50$ and $100$.
    The importance scores are normalized by dividing each score by the maximum importance score.}
    \label{fig:component_importance}
\end{figure}

\subsection{Real data analysis}
\label{sec:real_data}

\begin{table}[h]
\footnotesize
\begin{center}
\caption{Summaries of real data sets}
\vskip 0.1cm
\label{table: Dataset}
\begin{tabular}{cccccr}
\toprule
Real data      & Size & Dimension of covariates & Task \\
\midrule
\textsc{Boston} (\cite{perera_boston_housing_kaggle})   & 506 & 13 & Regression \\
\textsc{Abalone} (\cite{abalone}) & 4,177 &  8 & Regression \\
\textsc{Servo} (\cite{servo_87}) & 167 & 4 & Regression \\
\textsc{Mpg} (\cite{auto_mpg_9}) & 392  & 7 & Regression \\ \midrule
\textsc{Breast} (\cite{breast_cancer_wisconsin_(diagnostic)_17}) & 569 & 30 & Classification\\
\textsc{Churn} (\cite{churn}) & 7,043 & 20 & Classification\\
\textsc{Madelon} (\cite{madelon_171}) & 2,600 & 500 & Classification\\
\bottomrule
\end{tabular}
\end{center}
\end{table}

We analyze 7 real data sets including \textsc{Boston}, \textsc{Abalone}, \textsc{Servo}, \textsc{Mpg}, \textsc{Breast}, \textsc{Churn}, and \textsc{Madelon} data sets. 
The four data sets \textsc{Boston}, \textsc{Abalone}, \textsc{Servo} and \textsc{Mpg} are analyzed in \cite{BART} while the remaining three data sets \textsc{Breast}, \textsc{Churn} and \textsc{Madelon} are examined in \cite{park2025bayesian}.
Table \ref{table: Dataset} summarizes the descriptions of the 5 data sets.
We split each data into 80\% training data and 20\% test data, and repeat this random split 5 times to
obtain 5 prediction measures.

\subsubsection{Prediction performance}

Table \ref{table:real_pred} presents the averages and standard errors of the RMSE and AUROC values for ANOVA-BART and baseline methods on 7 real data sets.
Similar to the case with synthetic data, ANOVA-BART shows comparable prediction performance on real data as well. 
In particular, on the \textsc{Madelon} and \textsc{Servo} datasets, ANOVA-BART as well as BART and MARS
outperform NAM and SSANOVA significantly. Note that the former three methods are able to detect higher order interactions while
the later two methods are limited to estimating the second-order interactions.
Table \ref{table:high_component_score} presents the top 5 most important components according to the normalized importance scores defined in Section \ref{sec:comp_select}, which clearly shows that higher order signal interactions exist in the  \textsc{Madelon} and \textsc{Servo} datasets. 
That is, NAM and SSANOVA fail to detect signal higher order interactions which result in inferior prediction performances. In contrast, ANOVA-BART detects such higher order interactions 
successfully. BART and MARS can detect higher-order interactions but BART is not easily interpretable and MARS is not good at
uncertainty quantification.

\begin{table}[h]
\centering
\scriptsize
\caption{Averages and standard errors of RMSEs on real data sets.}
\label{table:real_pred}
\begin{tabular}{ccccccc}
\toprule
Real data & Measure &ANOVA-BART  & BART & MARS & SSANOVA & NAM \\ \midrule
\textsc{Boston} & RMSE $\downarrow$ & 3.448 (0.59)  & 4.073 (0.67) & 4.788 (0.67) & 4.460 (0.65) &  3.832 (0.67) \\ 
\textsc{Abalone} & RMSE $\downarrow$& 2.112 (0.24)  & 2.197 (0.26) & 2.137 (0.25) & 2.137 (0.24) & 2.062 (0.23)  \\ 
\textsc{Servo} &RMSE $\downarrow$ & 0.316 (0.02)  & 0.342 (0.04) & 0.441 (0.06) & 0.820 (0.03) & 0.802 (0.04)  \\ 
\textsc{Mpg} &RMSE $\downarrow$ & 2.486 (0.32) & 2.699 (0.43)  &  3.019 (0.52)  & 3.091 (0.47) & 2.755 (0.41) \\ \midrule
\textsc{Breast} & AUROC $\uparrow$& 0.998 (0.001)  & 0.993 (0.002) & 0.993 (0.003) & 0.984 (0.004) & 0.988 (0.001)\\
\textsc{Churn} & AUROC $\uparrow$ & 0.852 (0.007)  & 0.849 (0.006) & 0.847 (0.007) & 0.848 (0.007) &  0.848 (0.008)\\
\textsc{Madelon} & AUROC $\uparrow$ & 0.866 (0.005) & 0.765 (0.005) & 0.863 (0.016) &  0.553 (0.009) & 0.644 (0.001)\\
\bottomrule
\end{tabular}%
\end{table}

\begin{table}[h]
\caption{Top 5 important components.}
\label{table:high_component_score}
\scriptsize
\scalebox{0.9}{
\begin{tabular}{c cc cc cc cc cc}
\toprule
 & 
\multicolumn{2}{c}{Rank 1} & \multicolumn{2}{c}{Rank 2} & 
\multicolumn{2}{c}{Rank 3} & \multicolumn{2}{c}{Rank 4} & 
\multicolumn{2}{c}{Rank 5} \\
\midrule
Dataset & Component & Score & Component & Score & Component & Score & Component & Score & Component & Score \\
\midrule
\textsc{Servo} & \{1\} & 1.000 & \{1,7,8\} & 0.461 & \{8\}& 0.434 & \{1,8\} & 0.324 & \{6\} & 0.298  \\
\textsc{Madelon} & \{49, 319, 339\} & 1.000 & \{337\} & 0.951 & \{319, 454\} & 0.852 & \{106\} & 0.725 & \{49\} & 0.528 \\
\bottomrule
\end{tabular}
}
\end{table}

\subsubsection{Uncertainty quantification}

An important advantage of Bayesian methods compared to frequentist's counterparts is superior performance
of uncertainty quantification.
We compare ANOVA-BART and BART in view of uncertainty quantification.
As a measure of uncertainty quantification, for regression task, we consider
Continuous Ranked Probability Score (CRPS, \cite{gneiting2007strictly}). 
For a given test sample  $(\bold{x},y),$ CRPS is defined as 
\begin{align*}
\text{CRPS}(F_{\bold{x}},y) := \int_{-\infty}^{\infty}(F_{\mathbf{x}}(z) - \mathbb{I}(y\leq z))^{2}dz,
\end{align*}
where $F_{\bold{x}}$ is the predictive cumulative distribution of $Y$ given $\bold{X}=\bold{x}.$
For classification task, we consider Expected Calibration Error (ECE, \cite{kumar2019verified}) as an uncertainty quantification measure.
For evaluation, we typically report the average CRPS or ECE on the test data, where smaller values indicate better uncertainty quantification.

Table \ref{table:real_crps} presents the averages and standard errors of 5 CRPS and 5 ECE values obtained by 5 random splits of training and test data.
Not surprisingly, the two Bayesian methods, BART and ANOVA-BART, show better performance than the frequentist method, MARS.

\begin{table}[h]
\centering
\caption{Averages and standard errors of CRPS and ECE on real data sets.}
\label{table:real_crps}
\vskip 0.1cm
\begin{tabular}{ccccc}
\toprule
Real data & Measure &ANOVA-BART & BART & MARS\\ \midrule
\textsc{Boston} & CRPS $\downarrow$ & 2.368 (0.29)  & 2.623 (0.25) & 3.580 (0.49)  \\ 
\textsc{abalone} & CRPS $\downarrow$&1.351 (0.16)  & 1.384 (0.18) & 1.575 (0.16)  \\ 
\textsc{servo} & CRPS $\downarrow$& 0.200 (0.02) & 0.202 (0.02) & 0.268 (0.03)  \\ 
\textsc{mpg} &CRPS $\downarrow$ &1.564 (0.17) & 1.553 (0.27) & 2.367 (0.47)\\ \midrule
\textsc{Breast} & ECE $\downarrow$ &0.066 (0.011) & 0.071 (0.013) & 0.072 (0.028)\\
\textsc{Churn} & ECE $\downarrow$ & 0.030 (0.002) & 0.033 (0.003) & 0.045 (0.001)\\
\textsc{Madelon} & ECE $\downarrow$ & 0.074 (0.004) & 0.084 (0.008) & 0.106 (0.005)\\
\bottomrule
\end{tabular}%
\end{table}

\subsection{Stable interpretation of ANOVA-BART}
\label{sec:stable_experiment}

\begin{figure}[t]
    \centering
    \includegraphics[width=1.0\linewidth]{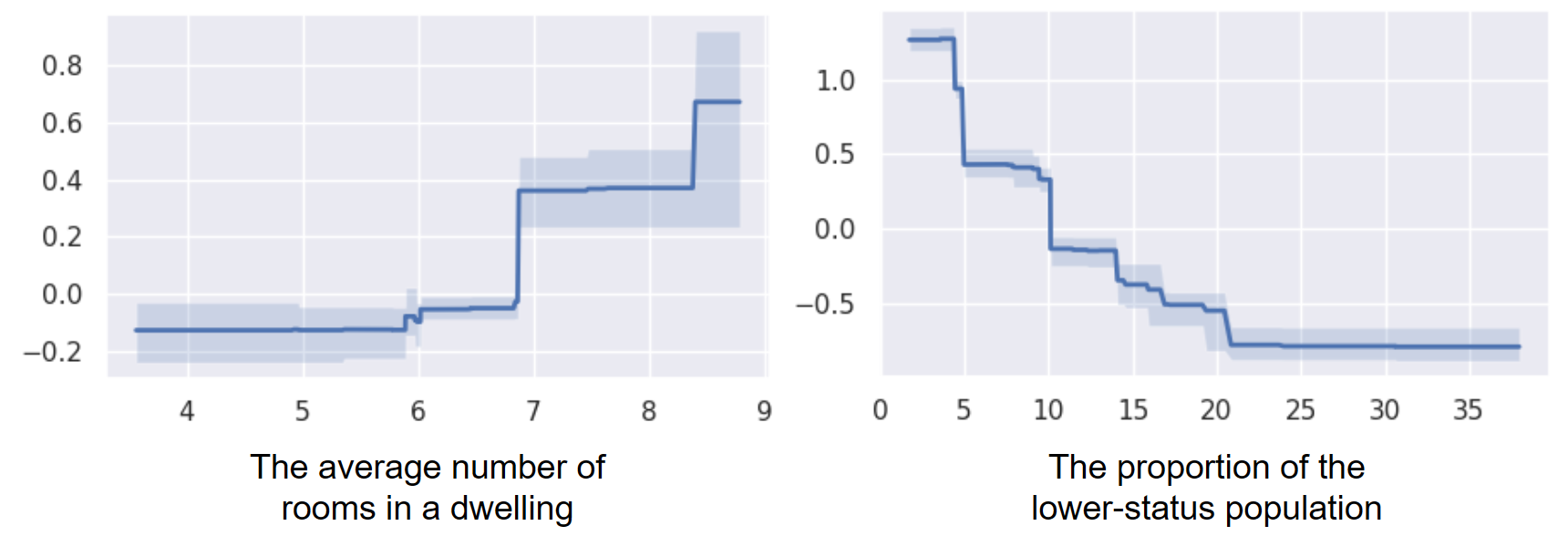}
    \caption{The functional relations of the two estimated components by ANOVA-BART on \textsc{Boston} data.}
    \label{fig_component_plot}
\end{figure}

Relations between the covariates and response variable can be interpreted by
looking at the functional relations of each estimated component in ANOVA-BART.
For illustration, Figure \ref{fig_component_plot} presents the plots of the two estimated components of ANOVA-BART on \textsc{Boston} data. Specifically, each plot displays the Bayes estimate along with the 95\% pointwise credible interval for `The average number of rooms in a dwelling' and `The proportion of the lower-status population'.
These plots suggest that
an increase in `The average number of rooms in a dwelling' is associated with a positive contribution to housing prices, whereas an increase in `The proportion of the lower-status population' is associated with a negative contribution to housing prices.

For this interpretation to be useful, the estimated components should be stable in the sense that
the result is robust to a small perturbation of traning data. 
In fact, ANOVA-BART employs the sum-to-zero condition to ensure that each component is identifiable and so can be
estimated stably. In contrast, neural network based methods such as NAM (\citep{nam}) and NBM (\citep{nbm})
are unstable in estimation of the component mainly due to the non-identifiability of each component.

Here, we conduct an experiment to compare the stability of ANOVA-BART in component estimation with that of NAM.
For this experiment, we estimate the components based on a bootstrap dataset.
This procedure is repeated five times to have five estimators for each component. 
We then compute the stability score using these estimators, where the stability score is defined as
\begin{align*}
\mathcal{SC}^{d}(f) = \sum_{S \subseteq [p], |S| \leq d}\mathcal{SC}(f_{S}).
\end{align*}
Here, $\mathcal{SC}(f_{S})$ is defined as
\begin{align*}
    \mathcal{SC}(f_S) := \frac{1}{|\mathcal{C}_S|}\sum_{\mathbf{x}\in \mathcal{C}_S} \frac{\sum_{j=1}^{5}(\hat{f}_S^j(\mathbf{x})-\bar{f}_S(\mathbf{x}))^2}{\sum_{j=1}^5 (\hat{f}_S^j(\mathbf{x}))^2},
\end{align*}
where $\hat{f}^{j}_{S}$ denotes the estimate of component $f_{S}$ obtained from the $j$th bootstrap data, $\bar{f} = {1\over 5}\sum_{j=1}^{5}\hat{f}_{S}^{j}(\tilde{\mathbf{x}}_{i}),$ and 
$\mathcal{C}_S=\prod_{j\in S}\{ x_{(0.09),j},x_{(0.19),j},\ldots, x_{(0.99),j}\}$ with
$x_{j,(\alpha)}$ being the $\alpha\times 100\%$ quantile of the empirical distribution of $\{x_{1,j},\ldots,x_{n,j}\}.$
A lower stability score indicates a more stable estimation of the component.

Table \ref{table:stab_score} compares the stability scores $\mathcal{SC}^{2}(f)$ of ANOVA-BART and NAM for multiple benchmark datasets.
These results confirm that ANOVA-BART outperforms the NAM in terms of the stability of component estimation, which suggests that enforcing the identifiability condition is important for stable interpretation.

\begin{table}[h]
\centering
\caption{Stability scores of ANOVA-BART and NAM.}
\label{table:stab_score}
\vskip 0.1cm
\begin{tabular}{cccc}
\toprule
Real data &ANOVA-BART & NAM  \\ \midrule
\textsc{Boston} & 0.435 & 0.705   \\ 
\textsc{abalone} & 0.452 & 0.770  \\ 
\textsc{servo}  & 0.478  & 0.665  \\ 
\textsc{mpg}  & 0.515 & 0.560 \\ \midrule
\textsc{Breast}  & 0.133  & 0.730  \\
\textsc{Churn} & 0.083 &  0.730 \\
\bottomrule
\end{tabular}%
\end{table}

\section{Conclusion}

ANOVA-BART can be considered as an interpretable modification of BART
and shows comparable performance to BART as well as other competitors on multiple benchmark data sets.
Theoretically, ANOVA-BART achieves not only a (near) minimax posterior concentration rate and but also selection consistency. 
Moreover, the posterior concentration rate is adaptive to the interaction structure of the true function
(i.e. the maximum order of signal components).

There are several possible future works.
First, theoretical results are obtained for a fixed dimension
$p$ of covariates. It would be interesting to modify ANOVA-BART for high-dimensional cases where $p$
diverges also.
Componentwise sparse priors would be needed for this purpose. 
Second, it would be useful to explore a new MCMC algorithm for improving the scalability of ANOVA-BART.

\bigskip
\appendix
\newpage
\begin{center}
{\large\bf Supplementary Material for\\ `Bayesian Additive Regression Trees\\ for functional ANOVA Model'}
\end{center}

\renewcommand{\thesection}{\Alph{section}}
\numberwithin{equation}{section}

\section{Posterior Sampling}\label{B}
\renewcommand{\theequation}{A.\arabic{equation}}

In this section, we provide details of the conditional posteriors and the acceptance probabilities of the proposed MH algorithm within the MCMC algorithm in Section \ref{sec:posterior}.

\subsection{Acceptance probability for $T$}

For a given $\Theta=\{T,S_{1},\mathbf{s}_{1},\beta_{1},...,S_{T},\mathbf{s}_{T},\beta_{T},\psi\}$, we define a likelihood as
\begin{align*}
\mathcal{L}(\Theta) := \prod_{i=1}^{n}p_{f(\mathbf{x}_{i}),\psi}(y_{i}),
\end{align*}
where $f(\mathbf{x}_{i})=\sum_{t=1}^{T}\mathbb{T}(\mathbf{x}_{i}:S_{t},\mathbf{s}_{t},\beta_{t})$.

\subsubsection{$T^{\text{new}}=T+1$}

\paragraph{Proposal for $T$.}

We propose the new state 
$$
\Theta^{\text{new}} = \{T+1,(S_{1},\mathbf{s}_{1},\beta_{1}),...,(S_{T},\mathbf{s}_{T},\beta_{T}),(S_{T+1}^{\text{new}},\mathbf{s}_{T+1}^{\text{new}},\beta_{T+1}^{\text{new}}),\psi\}
$$
using Random or Stepwise move.

\paragraph{Transition proability.}
$q(\Theta|\Theta^{\text{new}})$ is given as 
\begin{align*}
&q(\Theta|\Theta^{\text{new}}) \\
&= \text{Pr}\{\text{Decrease the number of trees}\}\text{Pr}\{\text{Select a single tree from}\:T^{\text{new}}\:\text{trees to delete} \}  \\
&= {T^{\text{new}}\over T_{\max}}\cdot{1\over T^{\text{new}}}.
\end{align*}
For Random move, we have
\begin{align*}
q(\Theta^{\text{new}}|\Theta) &= \text{Pr}\{\text{Increase the number of trees}\}\pi\{S_{T+1}\}\pi\{\mathbf{s}_{T+1}\}\pi\{\beta_{T+1}\}
\end{align*}
and for Stepwise move, we have
\begin{align*}
&q(\Theta^{\text{new}}|\Theta) \\
&= \text{Pr}\{\text{Increase the number of trees}\}
\text{Pr}\{\text{Choose}\: S_{T+1}\:\text{using Stepwise}\}\pi\{\mathbf{s}_{T+1}\}\pi\{\beta_{T+1}\}
\end{align*}
where
\begin{align*}
&\text{Pr}\{\text{Increase the number of trees}\} = 1- {T^{\text{new}} / T_{\max}}
\end{align*}
and
\begin{align*}
&\text{Pr}\{\text{Choose}\: S_{T+1}\:\text{using Stepwise}\} \\
&= \sum_{t=1}^{T}\text{Pr}\{\text{Choose}\:S_{t}\}\text{Pr}\{S_{T+1}^{\text{new}} = S_{t}\cup \{ j^{\text{new}}\}\:\text{for some}\:j^{\text{new}} \in S_{t}^{c}\} \\
&= \sum_{t=1}^{T}{1\over T}\mathbb{I}(\exists j^{\text{new}} \in S_{t}^{c} \:\: \text{such that} \:\: S_{t}\cup \{j^{\text{new}}\} = S_{T+1}^{\text{new}}){\mathfrak{w}_{j^{\text{new}}}\over \sum_{j \in S_{t}^{c}}\mathfrak{w}_{j}}.
\end{align*}

\paragraph{Posterior ratio.}
Since the prior ratio is given as
\begin{align*}
{\pi\{\Theta^{\text{new}}\} \over \pi\{\Theta\} } = {\pi\{T+1\} \pi\{S_{T+1}^{\text{new}}\} \pi\{\mathbf{s}_{T+1}^{\text{new}}\}\pi\{\beta_{T+1}^{\text{new}} \} \over \pi\{T\}},
\end{align*}
we have
\begin{align*}
{\pi\{\Theta^{\text{new}}|\text{others}\} \over \pi\{\Theta|\text{others} \} } = {\mathcal{L}(\Theta^{\text{new}}) \over \mathcal{L}(\Theta)}{\pi\{T+1\} \pi\{S_{T+1}^{\text{new}}\} \pi\{\mathbf{s}_{T+1}^{\text{new}}\}\pi\{\beta_{T+1}^{\text{new}} \} \over \pi\{T\}}
\end{align*}

\paragraph{Acceptance probability.}
Note that since the dimension of the continuous parameter varies with $T$, this MH algorithm is RJMCMC.
However, because the continuous parameter (height) is generated from the prior distribution, the Jacobian is equal to 1.
That is, we accept the new state $\Theta^{\text{new}}$ with probability $p_{\text{accept}}$ defined as
\begin{align*}
p_{\text{accept}} = \min \bigg \{ 1, {\pi\{\Theta^{\text{new}}|\text{others}\} \over \pi\{\Theta|\text{others} \} }{q(\Theta|\Theta^{\text{new}})\over q(\Theta^{\text{new}}|\Theta)} \bigg\}.
\end{align*}

\subsubsection{$T^{\text{new}} = T-1$}

As the acceptance probability for the case of $T^{\text{new}} = T - 1$ 
can be derived by simply reversing the procedure in the case of $T^{\text{new}}=T+1$, 
we omit the details.

\subsection{Acceptance probability for $(S_t,\bold{s}_t)$} \label{B.1}


\subsubsection{Transition probability}

Note that the proposal distribution $q$ of $(S_{t}^{\rm new}, \bold{s}_{t}^{\rm new})$ is given as
\begin{align*}
&q(S_{t}^{\rm new},\bold{s}_{t}^{\rm new}|S_t,\bold{s}_t, \text{GROW})\\
&= \text{Pr}(\mbox{Selecting a new input variable $j^{\text{new}}$ from } S_{t}^{c}\:\text{based on}\:P_{\pmb{\mathfrak{w}}}(\cdot|S_{t}^{c}) \mbox{ and the corresponding split value})\\
&= \frac{\mathfrak{w}_{j^{\text{new}}}}{\sum_{j \in S_{t}^{c}}\mathfrak{w}_{j} } \frac{1}{|\mathcal{A}_{j^{\text{new}}}|},
\end{align*}
where $j^{\text{new}}$ is the index of a newly selected input variable,
\begin{align*}
&q(S_{t}^{\rm new},\bold{s}_{t}^{\rm new}|S_t,\bold{s}_t, \text{CHANGE})\\
&= \text{Pr}\{\mbox{Selecting $j^{\text{new}}$ from $S_{t}^{c}$}\:\text{based on}\: P_{\pmb{\mathfrak{w}}}(\cdot|S_{t}^{c}),\\
&\quad\quad\quad \mbox{and Deleting one from $S_t$ chosen uniformly at random, and choosing one from $\mathcal{A}_{j^{\text{new}}}$} \}\\
&= \frac{\mathfrak{w}_{j^{\text{new}}}}{\sum_{j \in S_{t}^{c}}\mathfrak{w}_{j}}\frac{1}{|S_{t}|} \frac{1}{|\mathcal{A}_{j^{\text{new}}}|},
\end{align*}
and 
\begin{align*}
 &q(S_{t}^{\rm new},\bold{s}_{t}^{\rm new}|S_t,\bold{s}_t, \text{PRUNE}) \\
=&
\text{Pr}\{\mbox{Selecting an input variable in $S_t$ to be deleted}\}  \\
=& \frac{1}{|S_t|}.   
\end{align*}

To sum up, we have
\begin{align*}
q(S_{t}^{\rm new},\bold{s}_{t}^{\rm new}|S_t,\bold{s}_t)
&= \frac{\mathfrak{w}_{j^{\text{new}}}}{\sum_{j \in S_{t}^{c}}\mathfrak{w}_{j}} \frac{1}{|\mathcal{A}_{j^{\text{new}}}|} \text{Pr}\{\text{GROW}\} \mathbb{I}(|S_{t}^{\rm new}|=|S_t|+1)\\
&+  \frac{\mathfrak{w}_{j^{\text{new}}}}{\sum_{j \in S_{t}^{c}}\mathfrak{w}_{j}}\frac{1}{|S_t|} \frac{1}{|\mathcal{A}_{j^{\text{new}}}|} \text{Pr}\{\text{CHANGE}\}  \mathbb{I}(|S_{t}^{\text{new}}|=|S_t|)\\
&+ \frac{1}{|S_t|} \text{Pr}\{\text{PRUNE}\} \mathbb{I}(|S_{t}^{\text{new}}|=|S_t|-1).
\end{align*}

\subsubsection{Posterior probability ratio}
\label{sec:post_ratio}

Let $\lambda_{t,i} = \sum_{k\neq t}\mathbb{T}(\mathbf{x}_{i}:S_{k},\mathbf{s}_{k},\beta_{k})$ for $i=1,...,n$.
The likelihood ratio is given as
\begin{align*}
\prod_{i=1}^{n}{p_{\mathbb{T}(\mathbf{x}_{i}:S_{t}^{\text{new}},\mathbf{s}_{t}^{\text{new}},\beta_{t})+\lambda_{t,i}}(y_{i}) \over p_{\mathbb{T}(\mathbf{x}_{i}:S_{t},\mathbf{s}_{t},\beta_{t})+\lambda_{t,i}}(y_{i})}.
\end{align*}
Moreover, the ratios of the priors for GROW, PRUNE and CHANGE are given as follows.
$\newline$
For GROW, we have
$${\pi(S_{t}^{\rm new},\bold{s}_{t}^{\rm new})\over \pi(S_t,\bold{s}_t)} = {\alpha_{\text{split}} (1- \alpha_{\text{split}}(2+d)^{-\gamma_{\text{split}}}) \over ( (1+d)^{\gamma_{\text{split}}} - \alpha_{\text{split}} ) \times (p-d)\eta},$$
where $d = |S_{t}|$, and $\eta=|\mathcal{A}_{j^{\text{new}}}|$, where $j^{\text{new}}$ is the index of a newly selected input variable. 
$\newline$
For PRUNE, we have
$${\pi(S_{t}^{\rm new},\bold{s}_{t}^{\rm new})\over \pi(S_t,\bold{s}_t)} = {(d^{\gamma_{\text{split}}} - \alpha_{\text{split}}) \times (p-d+1)\eta \over \alpha_{\text{split}} \times (1 - \alpha_{\text{split}} (1+d)^{-\gamma_{\text{split}}}) }$$
where $d=|S_{t}|$, and $\eta = |\mathcal{A}_{j^{\text{new}}}|$ where $j^{\text{new}}$ is the index of a deleted input variable.
$\newline$
For CHANGE, we have
\begin{align*}
{\pi\{S_{t}^{\rm new},\bold{s}_{t}^{\rm new}\}\over \pi\{S_t,\bold{s}_t\}} = {|\mathcal{A}_{j^{\text{deleting}}}| \over |\mathcal{A}_{j^{\text{new}}}|}.
\end{align*}

To sum up, the posterior probability is 
\begin{align*}
{\pi\{S_{t}^{\rm new},\bold{s}_{t}^{\rm new}|\text{others}\}\over \pi\{S_t,\bold{s}_t|\text{others}\}} = \bigg( \prod_{i=1}^{n}{p_{\mathbb{T}(\mathbf{x}_{i}:S_{t}^{\text{new}},\mathbf{s}_{t}^{\text{new}},\beta_{t})+\lambda_{t,i}}(y_{i}) \over p_{\mathbb{T}(\mathbf{x}_{i}:S_{t},\mathbf{s}_{t},\beta_{t})+\lambda_{t,i}}(y_{i})}\bigg) {\pi\{S_{t}^{\rm new},\bold{s}_{t}^{\rm new}\}\over \pi\{S_t,\bold{s}_t\}}
\end{align*}

\subsubsection{Acceptance probability}

In summary, we accept $(S_{t}^{\rm new},\bold{s}_{t}^{\rm new})$ with probability
$p_{\text{accept}},$ where
$$p_{\text{accept}}=\min \left\{ 1,{\pi\{S_{t}^{\rm new},\bold{s}_{t}^{\rm new}|\text{others}\}\over \pi\{S_t,\bold{s}_t|\text{others}\}}{q(S_t,\bold{s}_t|S_{t}^{\rm new},\bold{s}_{t}^{\rm new}) \over q(S_{t}^{\rm new},\bold{s}_{t}^{\rm new}|S_t,\bold{s}_t)}
\right\}.$$

\subsection{Acceptance probability for $\beta_{t}$ and $\mathbf{s}_{t}$} \label{sec:beta_sample}

\subsubsection{Proposal for $\beta_{t}$}

To propose new state $\beta_{t}^{\text{new}}$, we use Langevin Dynamics, i.e.,
\begin{align*}
\beta_{t}^{\text{new}} = \beta_{t} + {\varepsilon^{2}\over 2}{\partial \over \partial \beta_{t}}\log \pi\{\beta_{t}|\text{others}\} + \varepsilon m,
\end{align*}
where $m \sim N(0,1)$, $\varepsilon$ is a step size and ${\partial \over \partial \beta_{t}}\log \pi\{\beta_{t}|\text{others}\}$ is defined as
\begin{align*}
{\partial \over \partial \beta_{t}}\log \pi\{\beta_{t}|\text{others}\} &= {\partial \over \partial \beta_{t}}\mathcal{L}(T,\{S_{k},\mathbf{s}_{k},\beta_{k}\}_{k=1}^{T},\psi) - {\beta_{t}\over \sigma_{\beta}^{2}}.
\end{align*}


\subsubsection{Transition probability ratio}

The transition probability ratio for the proposal distribution is given as 
\begin{align*}
{q(\beta_{t}|\beta_{t}^{\text{new}}) \over q(\beta_{t}^{\text{new}}| \beta_{t})} = \exp\bigg(-{1\over 2}\big( (m^{\text{new}})^{2} - m^{2} \big) \bigg),
\end{align*}
where 
$$
m^{\text{new}} = m + {\varepsilon\over 2}{\partial \over \partial \beta_{t}}\log \pi\{\beta_{t}|\text{others}\} + {\varepsilon\over 2}{\partial \over \partial \beta_{t}}\log \pi\{\beta_{t}^{\text{new}}|\text{others}\}
$$
and $\varepsilon$ is the step size.

\subsubsection{Posterior probability ratio}

Since 
\begin{align*}
{ \pi\{\beta_{t}^{\text{new}}\} \over \pi\{\beta_{t}\}} = \exp\bigg(-{1\over 2\sigma_{\beta}^{2}}(\beta_{t}^{\text{new}} - \beta_{t})^{2}\bigg),
\end{align*}
the posterior probability ratio is
\begin{align*}
{ \pi\{\beta_{t}^{\text{new}}|\text{others}\} \over \pi\{\beta_{t}|\text{others}\}} = {\mathcal{L}(T,\{S_{k},\mathbf{s}_{k},\beta_{k}\}_{k\neq t},S_{t},\beta_{t}^{\text{new}},\mathbf{s}_{t}^{\text{new}},\psi) \over \mathcal{L}(T,\{S_{k},\mathbf{s}_{k},\beta_{k}\}_{k=1}^{T},\psi)}\exp\bigg(-{1\over 2\sigma_{\beta}^{2}}(\beta_{t}^{\text{new}} - \beta_{t})^{2}\bigg).
\end{align*}

\subsubsection{Acceptance probability}

In summary, we accept $\beta_{t}^{\text{new}}$ with probability
$p_{\text{accept}},$ where
$$p_{\text{accept}}=\min \left\{ 1,{\pi\{\beta_{t}^{\text{new}}|\text{others}\} \over \pi\{\beta_{t}|\text{others}\}}{q(\beta_t|\beta_{t}^{\rm new} ) \over q(\beta_{t}^{\rm new}|\beta_t)}
\right\}.$$

\subsection{Sampling nuisance parameter $\psi$}
\label{sec:sigma_post}

In this section, we consider the gaussian regression model, i.e., $y_{i}$ is realization from
\begin{align*}
Y_{i}|\mathbf{x}_{i} \sim N(\cdot | f(\mathbf{x}_{i}),\sigma^{2})
\end{align*}
for $i=1,...,n$, where $\sigma^{2}$ is the nuisance parameter.
Since $\sigma^{2} \sim IG\left({v \over 2}, {v\lambda \over 2}\right)$,
it follows that
\begin{align*}
\sigma^{2} | \text{others} \sim IG\bigg({v + n \over 2}, {v\lambda + \sum_{i=1}^{n}(y_{i} - f(\mathbf{x}_{i}))^{2} \over 2}\bigg). 
\end{align*}

\newpage

\section{Proofs of Theorem \ref{theorem_SIBART} }\label{sec:concentration}
\renewcommand{\theequation}{B.\arabic{equation}}

\subsection{Additional notations}
For two positive sequences $\{a_n\}$ and $\{b_n\}$, we use the notation $a_n \lesssim b_n$ to indicate that there exists a positive constant $c>0$ such that $a_n \leq cb_n$ for all $n \in \mathbb{N}$. 
We use the little $o$ notation, writing $a_n = o(b_n)$ to mean that $\lim_{n \to \infty} a_n / b_n = 0$.
We denote $N(\varepsilon, \mathcal{F}, d)$ as the $\varepsilon$-covering number of $\mathcal{F}$ with respect to a semimetric $d$.
Let $\mathbb{P}_{\mathbf{X}}^{n}=\prod_{i=1}^{n}\mathbb{P}_{\mathbf{X}_{i}}$, where $\mathbb{P}_{\mathbf{X}_{i}}$ is the probability distribution of $\mathbf{X}_{i}$, for $i=1,...,n$. 
We denote $\Vert \cdot\Vert_{1}$ as a $\ell_{1}$ norm for a vector, i.e., for a given vector $\mathbf{e}=(e_{1},\ldots,e_{n})$, $\Vert\mathbf{e}\Vert_{1} := \sum_{i=1}^{n} |e_{i}|$.
For a real-valued function $f : \mathcal{X} \xrightarrow{} \mathbb{R}$, we denote $\Vert f \Vert_{\infty} := \sup_{\mathbf{x} \in \mathcal{X}}|f(\mathbf{x})|$.

\subsection{Overall strategy}
\label{app:overall_strategy}

For a given identifiable binary-product tree with the parameters $S,\mathbf{s},\beta,$ we 
define the corresponding binary-product partition $\mathcal{R}$ of $\mathcal{X}$ as
$$\mathcal{R}:=\bigg\{ \bigcap_{j\in S} \{\mathbf{x}: x_j \le s_j\}^{(v_j)}:  v_j\in \{-1,1\}, j\in S\bigg\}.$$
Here, we let $A^{(1)}=A$ and $A^{(-1)}=A^c$ for a given set $A$.
For a given binary-product partition $\mathcal{R},$ let $\text{var}(\mathcal{R})$
and $\text{sval}(\mathcal{R})$ be the set of split variables and the set of split values used in constructing $\mathcal{R},$
respectively. Since there is a one-to-one relation between $(S,\mathbf{s})$ and $\mathcal{R},$ we parameterize a given identifiable binary-product tree by
$\mathcal{R},\beta$ instead of $S,\mathbf{s},\beta.$ We will use these two parameterizations interchangeably unless there is any confusion.

Let $f$ be an ensemble of $T$ many identifiable binary-product trees.
Then, it can be parameterized by
$T, \mathcal{E}=(\mathcal{R}_1,\ldots,\mathcal{R}_T)$ and
$\mathcal{B}=(\beta_1,\ldots,\beta_T),$ where $(\mathcal{R}_t, \beta_t)$
is the parameter of the $t$th identifiable binary-product tree used in $f.$
Whenever we want to emphasize the parameters of a given ensemble $f,$ we write $f_{T,\mathcal{E},\mathcal{B}}$
or $f_{\mathcal{E},\mathcal{B}}.$
We refer to $\mathcal{E}=(\mathcal{R}_1,\ldots,\mathcal{R}_T)$ as an ensemble partition of length $T$. 
For a given $T,$  let $\mathcal{E}(T)$ be the set of all possible ensemble partitions $\mathcal{E}$ of length $T$. 
Finally, we let
$$\mathcal{F}:=\big\{f_{T,\mathcal{E},\mathcal{B}}: T\in [T_{\max}], 
\mathcal{E}\in \mathcal{E}(T), \mathcal{B} \in \mathbb{R}^{T} \big\},$$
which is the support of the prior $\pi$ (before truncation).

Note that our goal is to show that for any $\delta > 0$, 
\begin{align}
\lim_{n \to \infty}\mathbb{P}_{0}^{n}\bigg \{ \pi_{\xi} \{ \Vert f - f_{0} \Vert_{2,n}   > B_{n}\epsilon_{n} | \mathbf{X}^{(n)},Y^{(n)} \} > \delta \bigg \} = 0.
\label{eq:final_object}
\end{align}
We prove (\ref{eq:final_object}) as follows.
We first specify a subset $A_n$ of $\mathcal{X}^n$
such that $\mathbb{P}_{\mathbf{X}}^n\{A_n\}\rightarrow 1$ as $n\rightarrow \infty,$
and for any $\mathbf{x}^{(n)}\in A_n$ we have
\begin{align}
\mathbb{P}_{Y^{(n)}}\bigg \{ \{ \pi_{\xi} \{ \Vert f - f_{0} \Vert_{2,n} > B_{n}\epsilon_{n} | \mathbf{X}^{(n)},Y^{(n)} \} > \delta \} \bigg | \mathbf{X}^{(n)} = \mathbf{x}^{(n)} \bigg \} \rightarrow 0 
\label{eq:cond_conv_rate}
\end{align}
as $n\rightarrow \infty,$ 
where $\mathbb{P}_{Y^{(n)}}\{\cdot | \mathbf{X}^{(n)}\}$ is the conditional distribution of $Y^{(n)}$ given $\mathbf{X}^{(n)}$.
Then, we complete the proof easily since for any $\delta>0,$
\begin{align*}
&\mathbb{P}_{0}^{n}\bigg \{ \{ \pi_{\xi} \{ \Vert f - f_{0} \Vert_{2,n}  > B_{n}\epsilon_{n} | \mathbf{X}^{(n)},Y^{(n)} \} > \delta \} \bigg \} \\
&\leq \mathbb{P}_{0}^{n}\bigg \{ \{ \pi_{\xi} \{ \Vert f - f_{0} \Vert_{2,n}  > B_{n}\epsilon_{n} | \mathbf{X}^{(n)},Y^{(n)} \} > \delta \} \cap A_{n} \bigg \} + \mathbb{P}_{\mathbf{X}}^{n}(A_{n}^{c}) \\
&\rightarrow 0
\end{align*}
as $n\rightarrow \infty.$

Let $C_{\mathbb{B}}$ be a positive constant to be determined later (Section \ref{app:final_condition_check} of Supplementary Material).
To prove (\ref{eq:cond_conv_rate}), we verify the following three conditions for $\mathbf{x}^{(n)}\in A_n:$
there exists $\mathcal{F}^n\subseteq \mathcal{F}$ depending on $\mathbf{x}^{(n)}$ such that
\begin{align}
& \log N\left({\epsilon_{n}\over 36},\mathcal{F}^{n},\Vert \cdot \Vert_{\infty} \right) \leq n\epsilon_{n}^{2} \label{eq1}\\
&\pi\big\{f \in \mathcal{F} : \Vert f - f_{0} \Vert_{\infty} \leq C_{\mathbb{B}}\epsilon_{n} \big\} \geq e^{-d_{1}n\epsilon_{n}^{2}} \label{eq2}
\\
&\pi\{\mathcal{F}\backslash \mathcal{F}^{n}\} \leq e^{-(2d_{1} + 2)n\epsilon_{n}^{2}} \label{eq3}
\end{align}
for some constant $d_{1} > 0$.
In turn, we show that these three conditions imply the posterior convergence rate in (\ref{eq:cond_conv_rate}) using Theorem 4 of \cite{nonp1}.
\subsection{Construction of $A_n$}
\label{app:A_{n}}
For constructing $A_n,$ we need the following theorem which shows that
any populationally identifiable smooth function can be approximated by a sum of identifiable binary-product trees
whose heights are bounded. This theorem is technically involved, and its proof is given in Section \ref{app:approximation} of Supplementary Material.

\begin{theorem} \label{lemma5}
For  $\alpha \in (0,1]$, suppose $g_{S}$ be a populationally identifiable $S$-component function in $\mathcal{H}^{\alpha}_{F}$.
Then, there exist positive constant $C$ that does not depend on the function $g_{S}$ such that 
for any $T_{S}\in \mathbb{N}_{+},$ there exist $T_{S}$ many binary-product partitions $\mathcal{R}_1,\ldots,\mathcal{R}_{T_{S}}$ with $var(\mathcal{R}_t)=S$ for all $t\in [T_{S}]$
and $T_{S}$ many real numbers $\beta_1,\ldots,\beta_{T_{S}}$ depending on $\mathbf{x}^{(n)}$ such that
$\sup_{j} |\beta_j|\le C$ and
\begin{align}
&\left\Vert g_{S}(\cdot) - \sum_{t=1}^{T_{S}} \mathbb{T}(\cdot: \mathcal{R}_t,\beta_t) \right\Vert_{\infty} \leq \Vert g
_{S}\Vert_{\mathcal{H}^{\alpha}}{C_{S}^{\alpha}\over 
(T_{S}^{1\over |S|} +1)^{\alpha}} + \Phi_{n,S}(\mathbf{x}^{(n)}, T_S),
\label{eq:approx-lemma}
\end{align}
where
\begin{align*}
&\mathbb{P}^n_{\mathbf{X}}\left\{ \Phi_{n,S}(\mathbf{X}^{(n)}, T_S)
\le (24F+1)2^{|S|}\sqrt{ (T_{S}^{1\over |S|} +1)^{|S|}}\sqrt{\log n \over n} \right\}\\
&\ge 1-2^{|S|+1}|S|(T_{S}^{1\over|S|}+1)^{|S|^{2}}\exp\left( - {(T_{S}^{1\over |S|}+1)^{|S|} \log n \over 4F^{2}} \right) 
\end{align*}
for all sufficiently large $n$ and $C_{S}=\sqrt{|S|}/p_{L}$.
\end{theorem}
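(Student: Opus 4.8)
The plan is to prove Theorem~\ref{lemma5} constructively: discretize $g_S$ on a product grid, rewrite the resulting grid-step function \emph{exactly} as a sum of identifiable binary-product trees, bound the heights crudely, and then split the total error into a deterministic discretization term (the first term of~(\ref{eq:approx-lemma})) and a stochastic ``identifiability defect'' that will play the role of $\Phi_{n,S}$.

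\textbf{Step 1 (discretization).} Given $T_S$, set $M:=\lfloor T_S^{1/|S|}\rfloor+1$, so that $(M-1)^{|S|}\le T_S$ and $M\gtrsim T_S^{1/|S|}+1$. For each $j\in S$, choose $M-1$ split points $s_{j,1}<\cdots<s_{j,M-1}$ so that the induced one-dimensional intervals each carry roughly equal mass; under Assumption~\ref{eq:Assumption_1} this forces their Euclidean lengths to be $\lesssim 1/M$ outside an event of small probability. Let $\mathcal{G}$ be the resulting product partition of $[0,1]^{|S|}$ into $M^{|S|}$ cells, and let $\tilde h$ equal $g_S$ evaluated at an arbitrary point of each cell. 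H\"older continuity (Assumption~\ref{eq:Assumption_2}) then gives $\|g_S-\tilde h\|_\infty\le\|g_S\|_{\mathcal{H}^\alpha}(\mathrm{diam}\,\mathcal{G})^\alpha\le\|g_S\|_{\mathcal{H}^\alpha}\,m^\alpha/(T_S^{1/|S|}+1)^\alpha$ on that event, with $m$ depending only on $|S|$ and the density bounds.

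\textbf{Step 2 (exact tree representation and bounded heights).} Let $h$ be the top-order ANOVA component of $\tilde h$ with respect to the empirical measure; by Theorem~\ref{thm:fANOVA_decomp}, $h$ is the unique summand of $\tilde h$ satisfying the empirical identifiability condition in every coordinate of $S$, and it is again constant on the cells of $\mathcal{G}$, hence lies in the $(M-1)^{|S|}$-dimensional space of grid-step functions that are $\mu_{n,j}$-centered in each $j\in S$. For a split point $s_{j,k}$, let $e_{j,k}(x_j):=\mathbb{I}(x_j\le s_{j,k})+\mathfrak{a}_{j,k}\mathbb{I}(x_j>s_{j,k})$ with $\mathfrak{a}_{j,k}$ as in Proposition~\ref{prop:degree}; each $e_{j,k}$ is $\mu_{n,j}$-centered, and the $(M-1)^{|S|}$ products $\prod_{j\in S}e_{j,k_j}$ form a basis of that space. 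The key algebraic fact, which is precisely the product formula $\mathfrak{a}_{\mathbf{v}}=\prod_{j}\mathfrak{a}_j^{\mathbb{I}(v_j=1)}$ of Proposition~\ref{prop:degree}, is that each such product \emph{is itself} an identifiable binary-product tree with split vector $(s_{j,k_j})_{j\in S}$ and free parameter $\beta=1$. Hence $h=\sum_{(k_j)}\beta_{(k_j)}\prod_{j}e_{j,k_j}=\sum_{t=1}^{(M-1)^{|S|}}\mathbb{T}(\cdot:\mathcal{R}_t,\beta_t)$, and padding with $T_S-(M-1)^{|S|}$ zero-height trees gives exactly $T_S$ trees with $\mathrm{var}(\mathcal{R}_t)=S$. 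The coefficients $\beta_t$ are obtained from $h$ by iterated one-dimensional cell-differences weighted by marginal masses $\le 1$, so $|\beta_t|\le 2^{|S|}\|h\|_\infty$; since an ANOVA component never increases the sup-norm by more than a factor $2^{|S|}$ and $\|\tilde h\|_\infty\le\|g_S\|_\infty\le F$, we obtain $\sup_t|\beta_t|\le 4^{|S|}F=:C$, a bound that is deterministic and depends only on $|S|$ and $F$.

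\textbf{Step 3 (the identifiability defect and $\Phi_{n,S}$).} It remains to bound $\|g_S-h\|_\infty\le\|g_S-\tilde h\|_\infty+\|\tilde h-h\|_\infty$, where $\tilde h-h=\sum_{U\subsetneq S}(-1)^{|S\setminus U|}P_U\tilde h$ is built from the lower-order conditional averages $P_U\tilde h$ of $\tilde h$ under the empirical product marginals. Because $g_S$ is populationally identifiable, its lower-order conditional averages under the \emph{population} product marginals vanish; writing $P_U\tilde h$ as (i) the population version of $P_U(\tilde h-g_S)$, of order $M^{-\alpha}\|g_S\|_{\mathcal{H}^\alpha}$ and absorbed into the first term of~(\ref{eq:approx-lemma}) after enlarging $m$, plus (ii) the empirical-minus-population discrepancy, which telescopes over $S\setminus U$ into a sum, over the cells of $\mathcal{G}$, of bounded cell values of $\tilde h$ times deviations of empirical marginal cell-masses from their population counterparts. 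Each such deviation is controlled by a Bernstein inequality at scale $\sqrt{(\text{cell mass})\log n/n}$, and a Cauchy--Schwarz aggregation over the $\asymp M^{|S|}$ cells (using that the marginal cell-masses sum to $1$) yields a bound of order $2^{|S|}\sqrt{M^{|S|}}\sqrt{\log n/n}$; a union bound over cells and over the admissible split-point tuples (the source of the $(T_S^{1/|S|}+1)^{|S|^2}$ factor), combined with the Bernstein tails, gives the stated failure probability. Collecting the $O(M^{-\alpha})$ pieces into $\|g_S\|_{\mathcal{H}^\alpha}m^\alpha/(T_S^{1/|S|}+1)^\alpha$ and the stochastic ones into $\Phi_{n,S}$ finishes the proof.

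\textbf{Main obstacle.} Step~3 is the crux. The difficulty is twofold: first, the grid, the empirical identifiability condition defining $h$, and the data driving the concentration inequalities are the \emph{same} data, so one cannot treat the grid as fixed and must union-bound over admissible split-point choices, which is exactly what inflates the probability to the stated form; second, the $O(M^{-\alpha})$ discretization contribution to the defect must be peeled off and folded into the deterministic term \emph{without} spoiling the Cauchy--Schwarz gain that produces $\sqrt{M^{|S|}}$ rather than $M^{|S|}$ in $\Phi_{n,S}$, and the Bernstein scale must be tuned so that the resulting exponent matches the $M^{|S|}\log n/(4F^2)$ in the theorem. Steps~1--2, by contrast, are elementary discretization plus the purely algebraic identity of Proposition~\ref{prop:degree}.
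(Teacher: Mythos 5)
Your overall architecture mirrors the paper's: discretize $g_S$ on a product grid, pass from populational to empirical identifiability at a stochastic cost (this is $\Phi_{n,S}$), and then rewrite the resulting grid-step function exactly as a sum of identifiable binary-product trees with bounded heights. Your Step 2 is in fact a clean reformulation of the paper's Lemma \ref{le:decom}: instead of the recursive telescoping decomposition used there, you expand the empirically centered grid-step function in the tensor basis $\prod_{j\in S}e_{j,k_j}$ of $\mu_{n,j}$-centered two-level functions, note via Proposition \ref{prop:degree} that each basis element is itself an identifiable binary-product tree with free parameter $1$, and bound the dual-basis coefficients by iterated weighted differences; this is correct and arguably tidier than the paper's argument, giving the same $2^{|S|}\|h\|_\infty$-type height bound.

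The genuine gap is in Step 3, precisely at the point you flag as the crux. You take the grid to be data-dependent (empirical-quantile splits) and propose to handle the resulting dependence by a union bound over admissible split-point tuples, attributing the factor $(T_S^{1/|S|}+1)^{|S|^2}$ to that union. But with $M-1$ splits per coordinate chosen from the data-driven candidate set, the number of admissible tuples is of order $n^{(M-1)|S|}=\exp\{(M-1)|S|\log n\}$, which is exponentially larger than the polynomial-in-$M$ prefactor you claim, and the per-grid Bernstein exponent $M^{|S|}\log n/(4F^2)$ does not dominate $(M-1)|S|\log n$ in general (e.g.\ for main effects $|S|=1$, or whenever $4F^2|S|\ge M^{|S|-1}$), so the stated failure probability does not follow from your argument. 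The paper sidesteps this entirely: its initial multinary tree is the EP-product tree built on the \emph{population}-quantile grid with population conditional means as heights (Section \ref{app:approximation_eq_p}), so the grid is deterministic and no union over grids is needed; the only data-dependent step is the empirical re-centering, whose error is controlled uniformly over all step functions with at most $K$ pieces via symmetrization, McDiarmid and Dudley (Theorems \ref{thm:multi_po_em_approx} and \ref{thm:multi_po_em_approx2}), and the polynomial prefactor $K^{|S|}$ there comes from a union over cell indices, coordinates and subsets $W\subseteq S$, not over split points. If you commit to the population-quantile grid (noting that its cell lengths are $O(1/M)$ deterministically under \ref{eq:Assumption_1}), your fixed-grid Bernstein-plus-union-over-cells computation becomes legitimate and your "main obstacle" largely dissolves; as written, however, the probabilistic accounting behind $\Phi_{n,S}$ is not valid. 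A minor additional point: with population-quantile splits you must also ensure both sides of every split carry positive empirical mass so that your basis functions $e_{j,k}$ (equivalently the $\mathfrak{a}_{j,k}$) are well defined, an issue the paper's division-free decomposition avoids.
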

$\newline$
For $S \in \mathbb{S}$, let $t_{n,S}$ be the smallest positive integer satisfying
\begin{align} \label{SP_app}
\Vert f_{0,S}\Vert_{\mathcal{H}^{\alpha}}{C_{S}^{\alpha}\over 
(t_{n,S}^{1\over |S|} +1)^{\alpha} } + (24F+1)2^{|S|}\sqrt{(t_{n,S}^{1 \over |S|}+1)^{|S|}} \sqrt{\log n \over n} \leq {C_{\mathbb{B}}\epsilon_{n}\over 2^{p+1}}.
\end{align} 
Note that $t_{n,S} \lesssim n\epsilon_{n}^{2} / \log n$.
Now, we let
\begin{equation}
\label{eq:An}
A_n := \left\{\mathbf{x}^{(n)} : \Phi_{n,S}(\mathbf{x}^{(n)}, t_{n,S})
\le (24F+1)2^{|S|}\sqrt{ (t_{n,S}^{1\over |S|} +1)^{|S|}}\sqrt{\log n \over n}
\mbox{ for all } S \in \mathbb{S} \right\}.
\end{equation}
By Theorem \ref{lemma5}, $\mathbb{P}_{\mathbf{X}}^{n}(A_n) \rightarrow 1$
as $n\rightarrow \infty.$

\subsection{Proof of Condition (\ref{eq1})} \label{proof_eq1}

We consider the sieve 
$$\mathcal{F}^{n}:=\big\{ f_{T,\mathcal{E},\mathcal{B}}: T\in [M_n], 
\mathcal{E}\in \mathcal{E}(T), \mathcal{B} \in [-n,n]^{T} \big\}.$$
for some positive integer $M_{n} \le T_{\max},$
and will choose $M_n$ so that $\mathcal{F}^n$ satisfies Condition
(\ref{eq1}) with
$\epsilon_{n}^{2} = n^{-{2\alpha\over 2\alpha +d_{\max}}}\log n$.
For this purpose, we need the following lemma.

\begin{lemma} \label{lemma:covering}
For given $T\in \mathbb{N}_+$ and $\mathcal{E}\in \mathcal{E}(T),$
let 
$$\mathcal{F}^n(T,\mathcal{E}):=\{ f_{T,\mathcal{E},\mathcal{B}}: \mathcal{B}\in [-n,n]^T\}.$$
Then, we have
$N(\epsilon,\mathcal{F}^n(T,\mathcal{E}),\Vert \cdot \Vert_{\infty} ) \leq \left ( 1 + {2n^{p+1} T \over \epsilon} \right )^{T}$ 
\end{lemma}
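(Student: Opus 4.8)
Fix $T$ and the ensemble partition $\mathcal{E}=(\mathcal{R}_1,\dots,\mathcal{R}_T)$, and write $S_t=\mathrm{var}(\mathcal{R}_t)$. The plan is to observe that, for fixed $\mathcal{E}$, the map $\mathcal{B}=(\beta_1,\dots,\beta_T)\mapsto f_{T,\mathcal{E},\mathcal{B}}$ is linear in $\mathcal{B}$ and Lipschitz from $([-n,n]^T,\|\cdot\|_1)$ into $(\mathcal{F}^n(T,\mathcal{E}),\|\cdot\|_\infty)$, and then to cover $\mathcal{F}^n(T,\mathcal{E})$ by the image of a grid cover of the parameter cube $[-n,n]^T$. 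For the Lipschitz estimate, note that by Proposition~\ref{prop:degree} the value of $\mathbb{T}(\mathbf{x}:\mathcal{R}_t,\beta_t)$ at any $\mathbf{x}$ equals $\beta_t\,\mathfrak{a}_{\mathbf{v}}$, where $\mathbf{v}$ is the unique sign vector for which $\mathbf{x}$ lies in the corresponding cell of $\mathcal{R}_t$. Since $|S_t|\le p$, Proposition~\ref{prop:max_av} gives $\max_{\mathbf{v}}|\mathfrak{a}_{\mathbf{v}}|\le n^{|S_t|}\le n^{p}$, hence
\[
\bigl\|\mathbb{T}(\cdot:\mathcal{R}_t,\beta_t)-\mathbb{T}(\cdot:\mathcal{R}_t,\beta_t')\bigr\|_\infty \le n^{p}\,|\beta_t-\beta_t'|,
\]
and summing over $t$ yields
\[
\|f_{T,\mathcal{E},\mathcal{B}}-f_{T,\mathcal{E},\mathcal{B}'}\|_\infty \le n^{p}\sum_{t=1}^{T}|\beta_t-\beta_t'| = n^{p}\,\|\mathcal{B}-\mathcal{B}'\|_1
\]
for all $\mathcal{B},\mathcal{B}'\in[-n,n]^{T}$.

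Next I would discretize the parameter cube: choose a regular grid on $[-n,n]$ with at most $1+2n^{p+1}T/\epsilon$ points so that every point of $[-n,n]$ lies within distance $\epsilon/(n^{p}T)$ of a grid point, and take the $T$-fold product to obtain $G\subset[-n,n]^{T}$ with $|G|\le (1+2n^{p+1}T/\epsilon)^{T}$ such that every $\mathcal{B}\in[-n,n]^{T}$ has some $\mathcal{B}'\in G$ with $\|\mathcal{B}-\mathcal{B}'\|_1\le \epsilon/n^{p}$. By the Lipschitz bound above, $\{f_{T,\mathcal{E},\mathcal{B}'}:\mathcal{B}'\in G\}$ is then an $\epsilon$-net of $\mathcal{F}^n(T,\mathcal{E})$ in $\|\cdot\|_\infty$, which gives exactly the asserted estimate $N(\epsilon,\mathcal{F}^n(T,\mathcal{E}),\|\cdot\|_\infty)\le (1+2n^{p+1}T/\epsilon)^{T}$.

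I do not expect any serious obstacle in this lemma; it is the standard ``Lipschitz image of a grid'' bound. The only step needing care is the uniform bound $\max_{\mathbf{v}}|\mathfrak{a}_{\mathbf{v}}|\le n^{p}$ on the multiplicative constants that determine the full height vector of an identifiable binary-product tree from its single free height $\beta_t$ --- this is where the restriction of the split values to midpoints of consecutive order statistics and the bound $|S_t|\le p$ enter, and it is precisely the content of Proposition~\ref{prop:max_av}. Everything else is elementary.
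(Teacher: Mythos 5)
Your proposal is correct and follows essentially the same route as the paper: bound $\max_{\mathbf{v}}|\mathfrak{a}_{\mathbf{v}}|\le n^{p}$ via Proposition \ref{prop:max_av} to get the Lipschitz estimate $\|f_{T,\mathcal{E},\mathcal{B}}-f_{T,\mathcal{E},\mathcal{B}'}\|_\infty \le n^{p}\|\mathcal{B}-\mathcal{B}'\|_1$, then cover $[-n,n]^T$ in $\ell_1$ at scale $\epsilon/n^{p}$ (the paper cites the bound $(1+2nT/\epsilon_1)^T$ directly, you build the grid explicitly, which is the same count). No gaps.
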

\begin{proof}
For the $T$-dimensional hypercube $[-n,n]^{T}$, We have 
\begin{align}
N(\epsilon_{1},[-n,n]^T, \Vert \cdot \Vert_{1}) \leq \bigg(1 + {2nT\over \epsilon_{1}}\bigg)^{T} \label{eq:ball_covering}
\end{align}
for any $\epsilon > 0$.
Let $\{\mathcal{B}^{1},...,\mathcal{B}^{N(\epsilon_{1},[-n,n],\Vert \cdot \Vert_{1})}\}$
 be an $\epsilon_{1}$-cover of $[-n,n]^T,$ and for given $\mathcal{B}\in [-n,n]^T,$
 let $\tilde{\mathcal{B}}$ be an element in the $\epsilon_{1}$-cover such that $\|\mathcal{B}-\tilde{\mathcal{B}}\|_1 \le \epsilon_{1}.$ 
Then, for any $f_{\mathcal{E},\mathcal{B}} \in \mathcal{F}^n(T,\mathcal{E}),$
we have
\begin{align*}
\sup_{\mathbf{x}}|f_{\mathcal{E},\mathcal{B}}(\bold{x}) -  f_{\mathcal{E},\tilde{\mathcal{B}}} (\bold{x}) |  &\leq \sup_{\mathbf{x}} \sum_{t=1}^T 
|\mathbb{T}(\bold{x}:\mathcal{R}_t,\beta_t)  - \mathbb{T}(\bold{x}:\mathcal{R}_t,\tilde{\beta}_t) | \\ 
&\leq \sum_{t=1}^T  (1+n)^{|S_t|}|\beta_{t}-\tilde{\beta}_{t} | \\
&\leq (1+n)^{p} \epsilon_{1},
\end{align*}
where we use the inequality that 
\begin{align}
\begin{split}
\prod_{j \in S_{t}}\big( \mathbb{I}(x_{j} \leq s_{j}) + \mathfrak{a}_{t,j}\mathbb{I}(x_{j} > s_{j}) \big) &\leq \prod_{j \in S_{t}}|1+\mathfrak{a}_{t,j}| \\
&\leq (1+n)^{|S_t|}.  
\end{split} \label{eq:id_constan_bdd}
\end{align}
By letting $\epsilon_{1} = \epsilon / (1+n)^{p}$ in (\ref{eq:ball_covering}),
we have 
\begin{align}
N(\epsilon,\mathcal{F}^n(T,\mathcal{E}),\Vert \cdot \Vert_{\infty}) \leq \bigg ( 1 + {2n(1+n)^{p}T \over \epsilon} \bigg )^{T},
\end{align}
which completes the proof.
\end{proof}
$\newline$
An upper bound of the covering number of $\mathcal{F}^{n}$ is given as
\begin{align}
N\bigg({\epsilon_{n}\over 36},\mathcal{F}^{n},\Vert \cdot \Vert_{\infty} \bigg) \nonumber &\leq \sum_{T=1}^{M_{n}} \sum_{\mathcal{E} \in \mathcal{E}(T)} N\bigg( {\epsilon_{n}\over 36}, \mathcal{F}^n(T,\mathcal{E}), \Vert \cdot \Vert_{\infty} \bigg) 
\nonumber\\
& \leq \sum_{T=1}^{M_{n}} \bigg( 1+ { 72n(1+n)^{p}T \over \epsilon_{n} } \bigg)^{T}
\sum_{\mathcal{E} \in \mathcal{E}(T)} 1 \nonumber\\
&\lesssim \sum_{T=1}^{M_{n}} \bigg( 1+ { 72n(1+n)^{p}T \over \epsilon_{n} } \bigg)^{T} n^{Tp}\label{eq:yd1} \\
& \leq M_{n} n^{M_{n}p} \bigg( 1+ { 72n(1+n)^{p}M_{n} \over \epsilon_{n} } \bigg)^{M_{n}}, \nonumber
\end{align}
where the inequality in (\ref{eq:yd1}) is due to
\begin{align*}
\sum_{\mathcal{E} \in \mathcal{E}(T)} 1 \lesssim  n^{Tp}.
\end{align*}
Therefore, we have the following upper bound of the log covering number:
\begin{align*}
\log N\bigg({\epsilon_{n} \over 36},\mathcal{F}^{n},\Vert \cdot \Vert_{\infty} \bigg)\lesssim  \log M_{n} + M_{n}p\log n + M_{n}\log \bigg(1 + {72n(1+n)^{p}M_{n} \over \epsilon_{n}} \bigg )
\end{align*}
With $M_{n}= \lfloor C_{1}{n\epsilon_{n}^{2}\over \log n}  \rfloor$ for some large enough constant $C_{1}>0$ (will be determined later), Condition (\ref{eq1}) is satisfied.

\qed

\subsection{Proof of Condition (\ref{eq2})}\label{sec:eq2}
\label{app:eq2_proof} 
For given $\mathbf{x}^{(n)}\in A_n,$ let $\hat{f}_S$ be an ensemble of $t_{n,S}$ many identifiable binary-product trees satisfying
$\|f_{0,S}-\hat{f}_S\|_{\infty} \le C_{\mathbb{B}}\epsilon_{n}/2^{p+1},$ whose existence is guaranteed by 
Theorem \ref{lemma5}, and let  $\hat{f}=\sum_{S \in \mathbb{S} } \hat{f}_S.$
Let $\hat{\mathcal{E}}_S$ and $\hat{\mathcal{B}}_S$ be
the ensemble partitions and height vector for $\hat{f}_S$ and let
$\hat{\mathcal{E}}$ and $\hat{\mathcal{B}}$ be the ensemble partition and height vector for $\hat{f}$. 
Let $t_{n}=\sum_{S \in \mathbb{S} }t_{n,S}$.
\medskip
\newline
Note that for any $\mathcal{B}\in \mathbb{R}^{t_n},$
\begin{align*}
    \| f_{t_n, \hat{\mathcal{E}}, \mathcal{B}}- f_{t_n, \hat{\mathcal{E}}, \hat{\mathcal{B}}}\|_{\infty}
   &\le \sum_{t=1}^{t_n} \|\mathbb{T}(\cdot: \hat{\mathcal{R}}_{t},\beta_{t}) -   \mathbb{T}(\cdot: \hat{\mathcal{R}}_{t},\hat{\beta}_{t})\|_{\infty}\\
   &\le \sum_{t=1}^{t_n} (1+n)^{p} |\beta_{t}-\hat{\beta}_{t}|\\
   &\le (1+n)^p \sqrt{t_n} \|\mathcal{B}-\hat{\mathcal{B}}\|_2,
\end{align*}
where the second inequality follows from (\ref{eq:id_constan_bdd}).
Hence, if $\|\mathcal{B}-\hat{\mathcal{B}}\|_2 \le ((1+n)^{p} \sqrt{t_n})^{-1} C_{\mathbb{B}}\epsilon_n/2,$ we have
$$\|f_0-f_{t_n, \hat{\mathcal{E}}, \mathcal{B}}\|_{\infty} 
    \le   \| f_{0}-f_{t_n, \hat{\mathcal{E}}, \hat{\mathcal{B}}}\|_{\infty}
      +\| f_{t_n, \hat{\mathcal{E}}, \mathcal{B}}- f_{t_n, \hat{\mathcal{E}}, \hat{\mathcal{B}}}\|_{\infty}
      \le C_{\mathbb{B}}\epsilon_n.
$$
Thus, to prove Condition (\ref{eq2}), it suffices to
show that 
\begin{equation}
\label{eq:prob-c2-1}
\pi\left\{T=t_n, \mathcal{E}=\hat{\mathcal{E}}, \|\mathcal{B}-\hat{\mathcal{B}}\|_2 \le ((1+n)^{p} \sqrt{t_n})^{-1} C_{\mathbb{B}}\epsilon_n/2 \right\}
\end{equation}
is sufficiently large.
We decompose (\ref{eq:prob-c2-1}) as
\begin{align*}
&\pi\left\{T=t_n, \mathcal{E}=\hat{\mathcal{E}}, \|\mathcal{B}-\hat{\mathcal{B}}\|_2 \le ((1+n)^{p} \sqrt{t_n})^{-1} C_{\mathbb{B}}\epsilon_n/2\right\}\\
&= \pi\{T=t_n\}\pi\{ \mathcal{E}=\hat{\mathcal{E}} |T=t_n\}\pi\{\|\mathcal{B}-\hat{\mathcal{B}}\|_2 \le ((1+n)^{p} \sqrt{t_n})^{-1} C_{\mathbb{B}}\epsilon_n/2 | 
\mathcal{E}=\hat{\mathcal{E}}, T=t_n\}.
\end{align*}
We will show that these three prior probabilities on the right hand side of the above equality are sufficiently large.
\medskip
\newline
\textbf{B-(a).} A lower bound of $\pi\{T = t_{n}\}:$ 
Since $t_n\lesssim n\epsilon_{n}^{2} / \log n$, there exists a constant $d_{2} > 0$ such that
\begin{align*}
\pi\{T = t_{n}\} &= {\exp(-C_{*}t_{n}\log n) \over \sum_{t=0}^{T_{\max}}\exp(-C_{*}t\log n)  } \\
&\geq (1-n^{-C_{*}})\exp(-C_{*}t_{n}\log n)  \\
& \geq \exp(-d_{2}n\epsilon_{n}^{2})
\end{align*}
for all sufficiently large $n$.
\medskip
\newline
\textbf{B-(b).} A lower bound of $\pi\{\mathcal{E}=\hat{\mathcal{E}}|T=t_n\}:$
Let $\mathcal{E}=(\mathcal{R}_t, t=1,\ldots,t_n)$
and $\hat{\mathcal{E}}=(\hat{\mathcal{R}}_t, t=1,\ldots,t_n).$
Note that conditional on $T=t_n,$ $\mathcal{R}_t, t\in [t_n]$ are independent a priori and thus we have
$$\pi\{\mathcal{E}=\hat{\mathcal{E}}|T=t_n\} =\prod_{t=1}^{t_n} \pi\{\mathcal{R}_t=\hat{\mathcal{R}}_t\}.$$
In turn,
\begin{align*}
\pi\{\mathcal{R}_t=\hat{\mathcal{R}}_t \} = & \pi\left\{|\text{var}(\mathcal{R}_t)|=|\text{var}(\hat{\mathcal{R}}_t)|\right\}\\
&\times \pi\left\{\text{var}(\mathcal{R}_t)=\text{var}(\hat{\mathcal{R}}_t)\big||\text{var}(\mathcal{R}_t)|=|\text{var}(\hat{\mathcal{R}}_t)|\right\}\\
&\times \pi\left\{\text{sval}(\mathcal{R}_t)=\text{sval}(\hat{\mathcal{R}}_t)\big|\text{var}(\mathcal{R}_{t}) = \text{var}(\hat{\mathcal{R}}_{t})\right\},    
\end{align*}
where 
\begin{align*}
&\pi\left\{|\text{var}(\mathcal{R}_t)|=|\text{var}(\hat{\mathcal{R}}_t)|\right\} = {\omega_{|\text{var}(\hat{\mathcal{R}}_t)|} \over \sum_{d=0}^{p}\omega_{d} }\ge {\omega_{p}\over 2},\\    
&\pi\left\{\text{var}(\mathcal{R}_t)=\text{var}(\hat{\mathcal{R}}_t)\big||\text{var}(\mathcal{R}_t)|=|\text{var}(\hat{\mathcal{R}}_t)|\right\} = 1\big/{p \choose |\text{var}(\hat{\mathcal{R}}_t)|}
\end{align*}
and 
$$\pi\left\{\text{sval}(\mathcal{R}_t)=\text{sval}(\hat{\mathcal{R}}_t)\big|\text{var}(\mathcal{R}_{t}) = \text{var}(\hat{\mathcal{R}}_{t})\right\}= \prod_{j\in \text{var}(\hat{\mathcal{R}}_t)} {1 \over |\mathcal{A}_j|} \ge n^{-p}.$$
Thus we have 
$$\pi\big\{\mathcal{E}=\hat{\mathcal{E}}|T=t_n\big\} \ge \left(\frac{\omega_{p}}{2{p \choose |\text{var}(\hat{\mathcal{R}}_{t})| }} n^{-p}\right)^{t_n}
\ge\exp(-d_3 n\epsilon_n^2)$$
for a certain constant $d_3>0.$
$\newline$
\textbf{B-(c).} A lower bound of $\pi\{\|\mathcal{B}-\hat{\mathcal{B}}\|_2 \le ((1+n)^{p} \sqrt{t_n})^{-1} C_{\mathbb{B}}\epsilon_n/2 | 
\mathcal{E}=\hat{\mathcal{E}}, T=t_n\}:$
\begin{align}
& \pi\left\{\mathcal{B}\in \mathbb{R}^{t_{n}} :  \Vert \hat{\mathcal{B}} - \mathcal{B} \Vert_{2} \leq  {C_{\mathbb{B}}\over (1+n)^{p}\sqrt{t_{n}} } \times {\epsilon_{n}\over 2} \right\}  \nonumber\\
&\geq 2^{-t_{n}}\left({t_{n}\over 2}\right)^{-{t_{n}\over 2}-1}\exp\left(-{\Vert \hat{\mathcal{B}}\Vert_{2}^{2}\over \sigma_{\beta}^{2}} - {(C_{\mathbb{B}}\epsilon_{n})^{2} \over 8(1+n)^{2p }t_{n}\sigma^{2}_{\beta}}\right)\left({(C_{\mathbb{B}}\epsilon_{n})^{2} \over 4(1+n)^{2p}t_{n}\sigma_{\beta}^{2} }\right)^{t_{n}\over 2}\label{rock_bdd} \\
&\geq 2^{-t_{n}}\left({t_{n}\over 2}\right)^{-{t_{n}\over 2}-1}\exp\left(-{t_{n}C^{2}\over \sigma_{\beta}^{2}} - {(C_{\mathbb{B}}\epsilon_{n})^{2} \over 8(1+n)^{2p }t_{n}\sigma_{\beta}^{2}}\right)\left({(C_{\mathbb{B}}\epsilon_{n})^{2} \over 4(1+n)^{2p}t_{n}\sigma_{\beta}^{2}}\right)^{t_{n}\over 2},\label{lower bound1}
\end{align}
where (\ref{rock_bdd}) is derived from equation (8.9) in (\cite{bforest}) and (\ref{lower bound1}) is derived from Theorem \ref{lemma5}. 
Finally, the three terms in (\ref{lower bound1}) are bounded below by
$$
2^{-t_{n}}\left({t_{n}\over 2}\right)^{-{t_{n}\over 2}-1} = 2^{-t_{n}}\exp\left(-\left(1+{t_{n}\over 2}\right)\log {t_{n}\over 2}\right) \\
\geq \exp(-d_{4}n\epsilon_{n}^{2})
$$
for some constant $d_{4} >0,$ 
\begin{align}
\exp\left(-{t_{n} C^{2}\over \sigma_{\beta}^{2}} - {(C_{\mathbb{B}}\epsilon_{n})^{2} \over 8(1+n)^{2p}t_{n}\sigma_{\beta}^{2}}\right) &\geq \exp(-d_{5}n\epsilon_{n}^{2})
\end{align}
for some constant $d_{5} >0$ and
$$
\left({(C_{\mathbb{B}}\epsilon_{n})^{2} \over 4(1+n)^{2p}t_{n}\sigma_{\beta}^{2}}\right)^{t_{n}\over 2} 
\geq \exp(-d_{6}n\epsilon_{n}^{2})
$$
for some constant $d_{6}>0$.
The proof is done by letting $d_{1} =\sum_{i=2}^{6}d_{i}$.
\qed

\subsection{Proof of Condition (\ref{eq3})}
\label{eq:conditio_3}
We will check Condition (\ref{eq3}) with the choice $C_{1}>(2d_{1}+2)/C_{*}$, where 
$d_{1}$ is the constant satisfying Condition (\ref{eq2}).
Note that we have
\begin{align*}
\mathcal{F} \backslash \mathcal{F}^{n} =\bigg\{ T > M_{n} \bigg\} \bigcup \bigg\{  \{T \leq M_{n} \} \cap \{ \exists t \in [T] \:\text{s.t}\: |\beta_{t} | > n \} \bigg\}.     
\end{align*}
Therefore, $\pi \{ \mathcal{F} \backslash \mathcal{F}^{n} \}$ is upper bounded by
\begin{align*}
\pi \{ \mathcal{F} \backslash \mathcal{F}^{n} \} &\leq \pi \{ T > M_{n} \} + \pi\{T \leq M_{n}\} \pi \{ \exists t \in [T]\:\text{s.t}\: |\beta_{t} | > n | T\leq M_{n}\} \\
&\leq \pi \{ T > M_{n} \} + \pi \{ \exists t \in [T]\:\text{s.t}\: |\beta_{t} | > n | T\leq M_{n}\}.
\end{align*}
{\bf Case 1. Upper bound of $\pi \{ T > M_{n} \}$.}
We will show that $\pi\{T>M_{n}\} e^{(2d_{1}+2)n\epsilon_{n}^{2}} \rightarrow 0$ 
as $n\rightarrow \infty,$
where $M_{n}=\lfloor C_{1}{n\epsilon_{n}^{2}\over \log n}  \rfloor,$ which holds because
\begin{align*}
\pi\{T>M_{n}\} &= { \sum_{t=M_{n}+1}^{T_{\max}}\exp(-C_{*}t\log n)  \over \sum_{t=0}^{T_{\max}}\exp(-C_{*}t\log n)} \\
&\leq {{1\over n^{(M_{n}+1)C_{*}}}\big(1-{1\over n^{(T_{\max}+1)C_{*}}} \big) \over 1 - {1\over n^{C_{*}}}}\times{ 1 -{1\over n^{C_{*}}} \over \big(1-{1\over n^{(T_{\max}+1)C_{*}}} \big) } \\
&= \exp(-(M_{n}+1)C_{*}\log n).
\end{align*}
Hence, $\pi\{T>M_{n}\} e^{(2d_{1}+2)n\epsilon_{n}^{2}} \rightarrow 0$ as 
$n\rightarrow \infty$. 
$\newline$
$\newline$
{\bf Case 2. Upper bound of $\pi \{ \exists t \in [T]\:\text{s.t}\: |\beta_{t} | > n | T\leq M_{n}\}$.}
We have
\begin{align*}
\pi \{ \exists i \in [T]\:\text{s.t}\: |\beta_{t} | > n | T\leq M_{n}\} &\leq M_{n}\pi\{ 
|\beta_{1}| > n \}  \\
&\leq 2M_{n}\exp\bigg(-{n^{2}\over 2\sigma_{\beta}^{2}}\bigg), 
\end{align*}
where $\beta_{1} \sim N(0,\sigma_{\beta}^{2})$ and $\sigma_{\beta}^{2} >0$ is a constant.
Hence, $\pi \{ \exists t \in [T]\:\text{s.t}\: |\beta_{t} | > n | T\leq M_{n}\}e^{(2d_{1}+2)n\epsilon_{n}^{2}} \xrightarrow{} 0$ as $n \xrightarrow{} \infty$.

\qed

\subsection{Verification of the conditions in Theorem 4 of \cite{nonp1}}
\label{app:final_condition_check}

For a given $f\in \mathcal{F}$,
consider the probability model $Y|\mathbf{X} \sim \mathbb{P}_{f(\mathbf{X}),1}$
and $\mathbf{X}\sim \mathbb{P}_{\mathbf{X}}$.
For given data $\mathbf{x}^{(n)}$ and $f$, let $p_{f(\mathbf{x}_{i})}$ be the density function of $\mathbb{P}_{f(\mathbf{x}_{i}),1}$ with respect to Lebesgue measure for $i=1,...,n$.
For given two densities $p_{f_1},p_{f_2}$, let $K(p_{f_1},p_{f_2})$ be a Kullback-Leibler (KL) divergence defined as $K(p_{f_1},p_{f_{2}})=\int \log(p_{f_{1}}(\bold{v}) / p_{f_{2}}(\bold{v}))p_{f_{1}}(\bold{v})d\bold{v},$ where $\bold{v}\in \mathcal{X}\times \mathbb{R}.$
Let $V(p_{f_{1}},p_{f_{2}})=\int | \log (p_{f_{1}}(\bold{v})/p_{f_{2}}(\bold{v})) - K(p_{f_{1}},p_{f_{2}}) |^{2} p_{f_{1}}(\bold{v})d\bold{v}.$
For $i=1,...,n$, let $p_{f_{1}(\mathbf{x}_{i})}$, $p_{f_{2}(\mathbf{x}_{i})}$ be densities and let $\rho_{n}^{2}(\mathbb{P}_{f_{1},1},\mathbb{P}_{f_{2},1}) = {1\over n} \sum_{i=1}^{n}\int (\sqrt{p_{f_{1}(\mathbf{x}_{i})}(y_{i})} - \sqrt{p_{f_{2}(\mathbf{x}_{i})}(y_{i})})^{2}dy_{i}$.
Let $\mathcal{F}_\xi=\{f\in \mathcal{F}: \|f\|_\infty \le \xi\}$ and $\mathcal{F}_{\xi}^{n}=\{f\in \mathcal{F}^{n}: \|f\|_\infty \le \xi\}$.
$\newline$
$\newline$
Note that by Theorem 7.6 in page 81 of \cite{ibragimov2013statistical}, for all sufficiently large $n$ 
and any $f \in \mathcal{F}_{\xi}$,
\begin{align}
C_{\rho,1}\Vert f_{0} - f \Vert_{2,n} 
\leq \rho_{n}(\mathbb{P}_{f_{0},1},\mathbb{P}_{f,1}) 
\leq C_{\rho,2}\Vert f_{0} - f \Vert_{2,n}.
\label{eq:hellinger_inquality}
\end{align}
holds,
where $C_{\rho,1}$ and $C_{\rho,2}$ are positive constants.
Therefore, to show the posterior concentration rate in (\ref{eq:cond_conv_rate}), following Theorem 4 of \cite{nonp1} and (\ref{eq:hellinger_inquality}), it suffices to verify that for all $f_{0}\in \mathcal{H}^{\alpha}_{0,F}$ and $\mathbf{x}^{(n)} \in A_{n}$, there exists a sieve $\mathcal{F}_{\xi}^{n}$ and a constant $d_{*}>0$ such that the following three conditions are satisfied.
\begin{align}
& \log N\left({\epsilon_{n} \over 36},\mathcal{F}^{n}_\xi ,\Vert \cdot \Vert_{2,n} \right) \leq n\epsilon_{n}^{2} \label{eq1-1}\\
&\pi_{\xi}\{\mathbb{B}_{n} \}\geq e^{-d_{*}n\epsilon_{n}^2} \label{eq2-1}
\\
&\pi_{\xi}\{\mathcal{F}_\xi\backslash \mathcal{F}^{n}_\xi\} \leq e^{-(d_{*}+2)n\epsilon_{n}^{2}} \label{eq3-1},
\end{align}
where 
$$
\mathbb{B}_{n} := \bigg\{ f \in \mathcal{F}_{\xi} : {1\over n }\sum_{i=1}^{n}K(p_{f_{0}(\mathbf{x}_{i})},p_{f(\mathbf{x}_{i})}) \leq \epsilon_{n}^{2} , {1\over n}\sum_{i=1}^{n}V(p_{f_{0}(\mathbf{x}_{i})},p_{f(\mathbf{x}_{i})}) \leq \epsilon_{n}^{2}  \bigg\}. 
$$

We will show that the above three conditions are satisfied to complete the proof of Theorem \ref{theorem_SIBART}.
$\newline$
$\newline$
{\bf Verification of Condition  (\ref{eq1-1}).}
Condition (\ref{eq1}) implies 
$$\log N\left({\epsilon_{n} \over 36},\mathcal{F}^{n}_\xi ,\Vert \cdot \Vert_{2,n} \right) \lesssim n\epsilon_{n}^{2}.$$
$\newline$
{\bf Verification of Condition  (\ref{eq2-1}).}
Direct calculation yields, 
\begin{align}
K(p_{f_{0}(\mathbf{x}_{i})},p_{f(\mathbf{x}_{i})}) &= \int \Big( (f_{0}(\mathbf{x}_{i}) - f(\mathbf{x}_{i}))y - A(f_{0}(\mathbf{x}_{i})) + A(f(\mathbf{x}_{i})) \Big)p_{f_{0}(\mathbf{x}_{i})}(y)dy \\
&= \bigg( (f_{0}(\mathbf{x}_{i}) - f(\mathbf{x}_{i}))\mathbb{E}[Y_{i}|\mathbf{x}_{i}] - A(f_{0}(\mathbf{x}_{i})) + A(f(\mathbf{x}_{i})) \bigg) \\
&= \bigg((f_{0}(\mathbf{x}_{i}) - f(\mathbf{x}_{i}))\dot{A}(f_{0}(\mathbf{x}_{i})) - A(f_{0}(\mathbf{x}_{i})) + A(f(\mathbf{x}_{i})) \bigg)
\end{align}
for $i=1,...,n$, and
\begin{align}
V(p_{f_{0}(\mathbf{x}_{i})},p_{f(\mathbf{x}_{i})}) &= \int  (f_{0}(\mathbf{x}_{i}) - f(\mathbf{x}_{i}))^{2} ( y - \dot{A}(f_{0}(\mathbf{x}_{i})) )^{2} p_{f_{0}(\mathbf{x}_{i})}(y)dy \\
&= (f_{0}(\mathbf{x}_{i}) -  f(\mathbf{x}_{i}))^{2}Var(Y_{i}|\mathbf{x}_{i}) \\
&= (f_{0}(\mathbf{x}_{i}) - f(\mathbf{x}_{i}))^{2}\ddot{A}(f_{0}(\mathbf{x}_{i})).
\end{align}
In turn, using Talyor expansion, we have
\begin{align*}
K(p_{f_{0}(\mathbf{x}_{i})},p_{f(\mathbf{x}_{i})}) = {1\over 2}\ddot{A}(\Tilde{x})(f_{0}(\mathbf{x}_{i}) - f(\mathbf{x}_{i}))^{2},
\end{align*}
where $\Tilde{x} \in [-2^{p}F,2^{p}F]$.

That is, we have
\begin{align*}
&\max\bigg\{{1\over n}\sum_{i=1}^{n}K(p_{f_{0}(\mathbf{x}_{i})},p_{f(\mathbf{x}_{i})}),{1\over n}\sum_{i=1}^{n}V(p_{f_{0}(\mathbf{x}_{i})},p_{f(\mathbf{x}_{i})})\bigg\}\leq C_{A}\Vert f_{0}-f \Vert_{2,n}^{2}.
\end{align*}
With a positive constant $C_{\mathbb{B}} = \sqrt{C_{A}}$, we have
\begin{align*}
\mathbb{B}_{n} \supseteq \{f \in \mathcal{F}_{\xi} : \Vert f - f_{0} \Vert_{2,n} \leq C_{\mathbb{B}}\epsilon_{n} \}
\end{align*}
and
\begin{align*}
&\pi_\xi\{f \in \mathcal{F}_{\xi} : \Vert f - f_{0} \Vert_{2,n} \leq C_{\mathbb{B}}\epsilon_{n} \} \geq \pi_\xi\{f \in \mathcal{F}_{\xi}:\|f-f_{0}\|_{\infty} \le C_{\mathbb{B}}\epsilon_n  \}.
\end{align*}
In turn, since $\xi \geq 2^{p}F+C_{\mathbb{B}}\epsilon_n$ for sufficiently large $n$, we have
\begin{align*}
\pi_\xi\{f \in \mathcal{F}_{\xi}:\|f-f_{0}\|_{\infty} \le C_{\mathbb{B}}\epsilon_n \} \geq \pi\{f \in \mathcal{F}:\|f-f_{0}\|_{\infty} \le C_{\mathbb{B}}\epsilon_n \}.
\end{align*}
Thus, Condition (\ref{eq2}) implies
$$\pi_\xi\bigg\{ f \in \mathcal{F}_{\xi} : {1\over n }\sum_{i=1}^{n}K(p_{f_{0}(\mathbf{x}_{i})},p_{f(\mathbf{x}_{i})}) \leq \epsilon_{n}^{2} , {1\over n}\sum_{i=1}^{n}V(p_{f_{0}(\mathbf{x}_{i})},p_{f(\mathbf{x}_{i})}) \leq \epsilon_{n}^{2}  \bigg\}   \geq e^{-d_{1} n \epsilon_n^2}.$$
$\newline$
{\bf Verification of Condition  (\ref{eq3-1}).}
Note that
$\pi_\xi\{\mathcal{F}_\xi\backslash \mathcal{F}^{n}_\xi\} \le
 \pi_\xi\{\mathcal{F}\backslash \mathcal{F}^n\}.$ 
 In turn,
 $$\pi_\xi\{\mathcal{F}\backslash \mathcal{F}^n\} \le \frac{\pi\{\mathcal{F}\backslash \mathcal{F}^n\}}{\pi\{\Vert f\Vert_{\infty} \le \xi\}}
 \le  \frac{\pi\{\mathcal{F}\backslash \mathcal{F}^n\}}{\pi\{\|f-f_0\|_\infty \le \epsilon_n\}},
 $$ 
 which is less than $e^{-(d_{1}+2) n\epsilon_n^2}$ by Condition (\ref{eq2}) and Condition (\ref{eq3}) whenever $2^{p}F+C_{\mathbb{B}}\epsilon_n\le \xi$.
The proof is completed by letting $d_{*}=d_{1}$.
\qed

\newpage

\section{Bridging the empirical and populational identifiabilities for multinary-product trees}
\label{app:multi_binary_approximation}
\renewcommand{\theequation}{C.\arabic{equation}}
\renewcommand{\thefigure}{C.\arabic{figure}}

For any component $S \in \mathbb{S} $, we first define a multinary-product tree, which is an extension of a binary-product tree, as follows.
Recall that $\mathcal{X}_j=[0,1]$ for all $j\in [p].$
A partition $\mathcal{P}_j$ of $\mathcal{X}_j$ is called an interval partition if
any element in $\mathcal{P}_j$ is an interval. For example,
$\{[0,1/3),[1/3,2/3),[2/3,1]\}$ is an interval partition.
In the followings, we only consider interval partitions and we simply write them as `partitions' unless
there is any confusion.
Let $\mathcal{P}_j$ be a given partition of $\mathcal{X}_j$ for each $j\in S,$ where $\phi_{j} :=|\mathcal{P}_j| \ge 2.$
Let $\mathcal{P}=\prod_{j\in S} \mathcal{P}_j.$ 
Then, the multinary-product tree defined on
the product partition $\mathcal{P}$ with the height vector
$\pmb{\gamma}=(\gamma_{\pmb{\ell}}, \pmb{\ell}\in \prod_{j\in S} \{1,\ldots,\phi_j\})$ is defined as
$$f_{S,\mathcal{P},\gamma}(\mathbf{x})=\sum_{\pmb{\ell}} \gamma_{\pmb{\ell}} \mathbb{I}(\mathbf{x}_S\in I_{\pmb{\ell}}),$$
where $I_{\pmb{\ell}}=\prod_{j\in S} I_{j,\ell_j}, I_{j,\ell_j}\in \mathcal{P}_j$.
Note that a binary-product tree is a special case of a multinary-product tree with $\phi_{j}=2$ for all $j\in S.$
Figure \ref{fig:compare_bi_multi} compares binary-product tree and multinary-product tree.

\begin{figure}[h]
    \centering
    \includegraphics[width=0.8\linewidth]{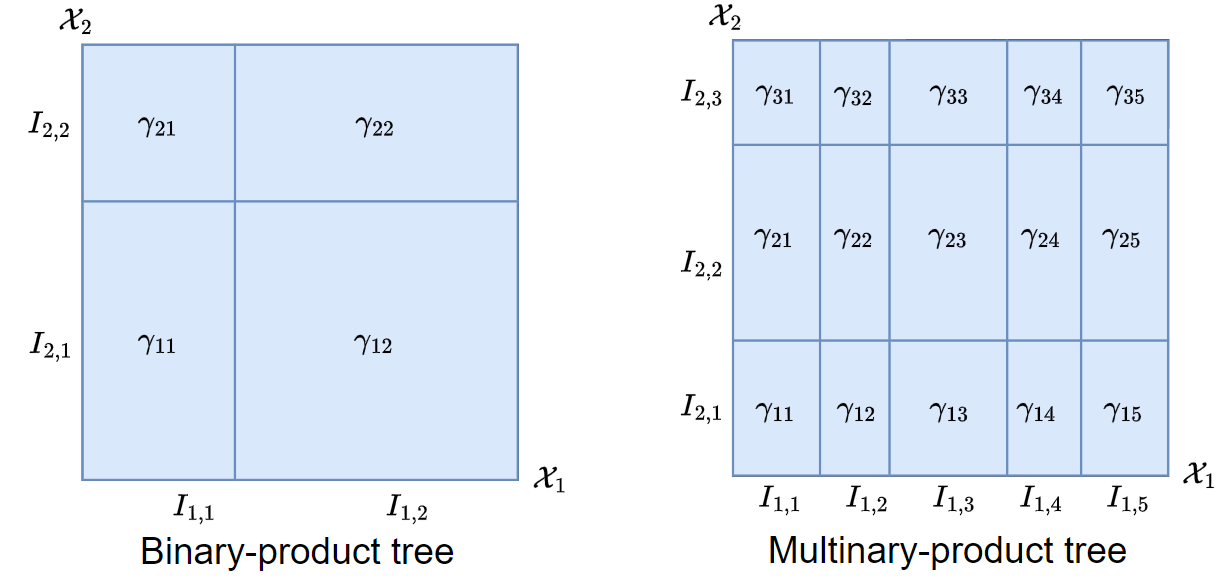}
    \caption{Examples of binary-product tree and multinary-product tree.
    The left panel is a binary-product tree, while the right panel illustrates the partition and cell heights of a multinary-product tree with $\phi_{1}=5$ and $\phi_{2}=3$.}
\label{fig:compare_bi_multi}
\end{figure}

\subsection{Identifiable transformation of multinary-product tree}
\label{app:transformation}

For a given probability measure $\nu$ on $[0,1]^{|S|},$
we explain how we transform any multinary-product tree
to a $\nu$-identifiable multinary-product tree. 
Let $f_{S}$ be a multinary-product tree $f_{S}$ defined as
\begin{align*}
f_{S}(\mathbf{x}_{S}) = \sum_{\pmb{\ell}}\gamma_{\pmb{\ell}}\mathbb{I}(\mathbf{x}_{S} \in I_{\pmb{\ell}}).
\end{align*}
We modify $\gamma_{\pmb{\ell}}$ so that the resulting function satisfies the $\nu$-identifiability condition.
That is, the resulting function is a $\nu$-identifiable multinary-product tree.
$\newline$
For $W\subseteq S,$ let $\pmb{\ell}_W=(\ell_j,j\in W)$ and let $W^c=S-W.$
For the modification of  $\gamma_{\pmb{\ell}},$ we let
$$\tilde{\gamma}_{\pmb{\ell}}= \gamma_{\pmb{\ell}}+
\sum_{k=1}^{|S|} (-1)^{k} \left( \sum_{W: |W|=k, W \subseteq S} \mathbb{E}_{\nu,W}(\gamma_{\pmb{\ell}})\right),
$$
where
\begin{align}
\mathbb{E}_{\nu,W}(\gamma_{\pmb{\ell}}) := \sum_{\pmb{\ell}_{W}\in \prod_{t \in W}[\phi_{t}]} 
\gamma_{(\pmb{\ell}_{W^c},\pmb{\ell}_{W})} \nu_{W}^{\text{ind}}\{I_{\pmb{\ell}_{W}}\}. \label{eq:marignal_expect_def}
\end{align}
Here, $\nu_{W}^{\text{ind}} = \prod_{j \in W}\nu_{j}$ and $I_{\pmb{\ell}_{W}} = \Pi_{j \in W}I_{j,\ell_{j}}$.
Note that $\nu_{j}$ is the marginal probability measure for $j\in S$.
When $W=\{j\},$ we write $\mathbb{E}_{\nu,j}$ instead of $\mathbb{E}_{\nu,\{j\}}$ for notational simplicity.
$\newline$
$\newline$
For given $j\in S,$ we can rewrite $\tilde{\gamma}_{\pmb{\ell}}$ as
\begin{align*}
  \tilde{\gamma}_{\pmb{\ell}} =& \gamma_{\pmb{\ell}}+
\sum_{k=1}^{|S|} (-1)^k \left( \sum_{W: |W|=k, j\in W , W \subseteq S} \mathbb{E}_{\nu.W}(\gamma_{\pmb{\ell}})\right) \\
& \quad + \sum_{k=1}^{|S|-1} (-1)^k \left( \sum_{W: |W|=k, j\not\in W , W \subseteq S} \mathbb{E}_{\nu,W}(\gamma_{\pmb{\ell}})\right)\\
=& \gamma_{\pmb{\ell}} - \mathbb{E}_{\nu,j}(\gamma_{\pmb{\ell}}) + \sum_{k=2}^{|S|} (-1)^k \left( \sum_{W: |W|=k, j\in W , W \subseteq S} \mathbb{E}_{\nu,W}(\gamma_{\pmb{\ell}})\right) \\
& \quad  + \sum_{k=1}^{|S|-1} (-1)^k \left( \sum_{W: |W|=k, j\not\in W , W \subseteq S} \mathbb{E}_{\nu,W}(\gamma_{\pmb{\ell}})\right)
\end{align*}
Thus, we have
\begin{align*}
\mathbb{E}_{\nu,j}(\tilde{\gamma}_{\pmb{\ell}})
&= \sum_{k=2}^{|S|} (-1)^k \left( \sum_{W: |W|=k, j\in W , W \subseteq S} \mathbb{E}_{\nu,W}(\gamma_{\pmb{\ell}})\right) \\
& \quad + \sum_{k=1}^{|S|-1} (-1)^k \left( \sum_{W: |W|=k , j \notin W, W \subseteq S} \mathbb{E}_{\nu,W\cup \{j \}}(\gamma_{\pmb{\ell}})\right) \\
&= \sum_{k=1}^{|S|-1} (-1)^k \left( \sum_{W: |W|=k, j \notin W , W \subseteq S} (-\mathbb{E}_{\nu,W\cup \{j\}}(\gamma_{\pmb{\ell}}) +\mathbb{E}_{\nu,W\cup \{j\}}(\gamma_{\pmb{\ell}})) \right) \\
&= 0
\end{align*}
for $j \in S$.
Therefore, $\tilde{\gamma}_{\pmb{\ell}}$ satisfies the $\nu$-identifiability condition.
$\newline$
We denote the resulting function by $f_{\nu,f_{S}}$, i.e.,
\begin{align}
f_{\nu,f_{S}}(\mathbf{x}_{S}) = \sum_{\pmb{\ell}}\tilde{\gamma}_{\pmb{\ell}}\mathbb{I}(\mathbf{x}_{S} \in I_{\pmb{\ell}}).
\label{eq:identifiable-trans}
\end{align}

\subsection{Approximation error between the empirically and populationally identifiable multinary-product trees}

We first investigate the approximation error between a given empirically identifiable multinary-product tree
and its populationally identifiable version obtained by the formula (\ref{eq:identifiable-trans}), whose result is given in the following theorem.
For $S \in \mathbb{S} $, let $\mathbb{P}_{S}$ be the distribution of $\mathbf{X}_S$ and $\mathbb{P}_{S}^{\text{ind}}=\prod_{j \in S}\mathbb{P}_{j}$, where $\mathbb{P}_{j}$ is the probability distribution of $\mathbf{X}_{j}$.

\begin{theorem}
\label{thm:multi_po_em_approx}
Let $\mathcal{F}^{E}_{S,K,\xi}$ be the set of all empirically identifiable $S$-component multinary-product trees $f_S$
with $\max_{j\in S}|\mathcal{P}_{j}|\leq K$ and $\Vert f_{S} \Vert_{\infty} \leq \xi.$
Then, we have
\begin{align*}
\mathbb{P}_{\mathbf{X}}^{n} \bigg\{ \sup_{f_{S} \in \mathcal{F}^{E}_{S,K,\xi}}
\Vert f_{S} - f_{\mathbb{P}_{S},f_{S}} \Vert_{\infty} \leq (24\xi+1)2^{|S|}\sqrt{K\log n \over n} 
\bigg \} \\
\geq 1- 2^{|S|+1}|S|K^{|S|}\exp\bigg(-{K\log n \over 4\xi^{2}}\bigg).
\end{align*}
\end{theorem}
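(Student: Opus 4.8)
The plan is to reduce the uniform sup-norm bound to a concentration statement about one-dimensional interval probabilities. Fix any $f_{S}=\sum_{\pmb{\ell}}\gamma_{\pmb{\ell}}\mathbb{I}(\cdot\in I_{\pmb{\ell}})\in\mathcal{F}^{E}_{S,K,\xi}$, with product partition $\mathcal{P}=\prod_{j\in S}\mathcal{P}_{j}$, and let $f_{\mathbb{P}_{S},f_{S}}=\sum_{\pmb{\ell}}\tilde{\gamma}_{\pmb{\ell}}\mathbb{I}(\cdot\in I_{\pmb{\ell}})$ be its populationally identifiable version from (\ref{eq:identifiable-trans}). Since the cells $I_{\pmb{\ell}}$ partition $[0,1]^{|S|}$, $\Vert f_{S}-f_{\mathbb{P}_{S},f_{S}}\Vert_{\infty}=\max_{\pmb{\ell}}|\gamma_{\pmb{\ell}}-\tilde{\gamma}_{\pmb{\ell}}|$. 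The transformation formula can be written as $\tilde{\gamma}_{\pmb{\ell}}=\sum_{W\subseteq S}(-1)^{|W|}\mathbb{E}_{\mathbb{P}_{S},W}(\gamma_{\pmb{\ell}})$ with the convention $\mathbb{E}_{\nu,\emptyset}=\mathrm{id}$. Because $f_{S}$ is empirically identifiable, $\mathbb{E}_{\mu_{n},j}(\gamma)=0$ for all $j\in S$, hence $\mathbb{E}_{\mu_{n},W}(\gamma)=0$ for every nonempty $W$, so $\gamma_{\pmb{\ell}}=\sum_{W\subseteq S}(-1)^{|W|}\mathbb{E}_{\mu_{n},W}(\gamma_{\pmb{\ell}})$ as well. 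Subtracting, the $W=\emptyset$ terms cancel:
\[
\gamma_{\pmb{\ell}}-\tilde{\gamma}_{\pmb{\ell}}=\sum_{\emptyset\neq W\subseteq S}(-1)^{|W|}\bigl(\mathbb{E}_{\mu_{n},W}(\gamma_{\pmb{\ell}})-\mathbb{E}_{\mathbb{P}_{S},W}(\gamma_{\pmb{\ell}})\bigr).
\]

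I would then bound each summand by telescoping over the coordinates of $W=\{j_{1},\dots,j_{m}\}$. Using $\mathbb{E}_{\nu,W}=\mathbb{E}_{\nu,j_{1}}\circ\cdots\circ\mathbb{E}_{\nu,j_{m}}$ and swapping $\mathbb{E}_{\mathbb{P}_{S},j_{i}}$ for $\mathbb{E}_{\mu_{n},j_{i}}$ one coordinate at a time,
\[
\mathbb{E}_{\mu_{n},W}-\mathbb{E}_{\mathbb{P}_{S},W}=\sum_{i=1}^{m}\Bigl(\textstyle\prod_{k<i}\mathbb{E}_{\mu_{n},j_{k}}\Bigr)\bigl(\mathbb{E}_{\mu_{n},j_{i}}-\mathbb{E}_{\mathbb{P}_{S},j_{i}}\bigr)\Bigl(\textstyle\prod_{k>i}\mathbb{E}_{\mathbb{P}_{S},j_{k}}\Bigr).
\]
Each $\mathbb{E}_{\nu,j}$ is an average against a probability vector, hence a $\Vert\cdot\Vert_{\infty}$-contraction, so $\Vert\mathbb{E}_{\mu_{n},W}(\gamma)-\mathbb{E}_{\mathbb{P}_{S},W}(\gamma)\Vert_{\infty}\le\sum_{i=1}^{m}\Vert(\mathbb{E}_{\mu_{n},j_{i}}-\mathbb{E}_{\mathbb{P}_{S},j_{i}})(h^{(i)})\Vert_{\infty}$ for some $h^{(i)}$ with $\Vert h^{(i)}\Vert_{\infty}\le\Vert\gamma\Vert_{\infty}\le\xi$. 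For a single coordinate $j$ and any $h$ with $\Vert h\Vert_{\infty}\le\xi$, $(\mathbb{E}_{\mu_{n},j}-\mathbb{E}_{\mathbb{P}_{S},j})(h)$ is a centered empirical average of a step function on $\mathcal{P}_{j}$ valued in $[-\xi,\xi]$, and
\[
\Vert(\mathbb{E}_{\mu_{n},j}-\mathbb{E}_{\mathbb{P}_{S},j})(h)\Vert_{\infty}\le\xi\,D_{j}(\mathcal{P}_{j}),\qquad D_{j}(\mathcal{P}_{j}):=\sum_{I\in\mathcal{P}_{j}}\bigl|\mu_{n,j}\{I\}-\mathbb{P}_{j}\{I\}\bigr|.
\]
Thus the whole problem reduces to bounding $D_{j}(\mathcal{P}_{j})$ uniformly over $j\in S$ and over interval partitions $\mathcal{P}_{j}$ with at most $K$ cells, since no other feature of $f_{S}$ enters.

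The heart of the matter is this uniform control of $D_{j}$, and it is where the square root of $K$ in the statement comes from. For a fixed partition with cell probabilities $p_{1},\dots,p_{\phi_{j}}$, the per-cell mean absolute deviation is at most $\sqrt{p_{\ell}/n}$, so by Cauchy--Schwarz $\mathbb{E}\,D_{j}(\mathcal{P}_{j})\le n^{-1/2}\sum_{\ell}\sqrt{p_{\ell}}\le\sqrt{\phi_{j}/n}\le\sqrt{K/n}$ --- the $\sqrt{K}$, not $K$, being exactly the gain of Cauchy--Schwarz. A Hoeffding-type tail bound for empirical averages of $[-\xi,\xi]$-valued step functions (whose exponent is of order $nt^{2}/\xi^{2}$), applied to the finite family of interval configurations that can occur and combined with a union bound (whose cardinality is reflected in the prefactor $2^{|S|+1}|S|K^{|S|}$ and whose tail contributes the factor $\exp(-K\log n/(4\xi^{2}))$), then upgrades the mean bound to: with the probability stated in the theorem, $D_{j}(\mathcal{P}_{j})\lesssim\sqrt{K\log n/n}$ simultaneously for all $j\in S$ and all admissible $\mathcal{P}_{j}$. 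Substituting this into the telescoped estimate and summing the $|W|$ single-coordinate contributions over the $2^{|S|}$ nonempty $W\subseteq S$ gives $\max_{\pmb{\ell}}|\gamma_{\pmb{\ell}}-\tilde{\gamma}_{\pmb{\ell}}|\lesssim 2^{|S|}\xi\sqrt{K\log n/n}$; tracking the numerical constants yields precisely $(24\xi+1)2^{|S|}\sqrt{K\log n/n}$. Since none of these estimates used $f_{S}$ beyond $\Vert\gamma\Vert_{\infty}\le\xi$ and $\max_{j}|\mathcal{P}_{j}|\le K$, the bound holds uniformly over $\mathcal{F}^{E}_{S,K,\xi}$.

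The step I expect to be the main obstacle is the last one: establishing the uniform-in-$\mathcal{P}_{j}$, high-probability control of $D_{j}$ with the correct dependence on $K$ and a failure probability of the claimed form. This hinges on two things: (i) exploiting the product structure of multinary-product trees so that the identifiable correction $\gamma-\tilde{\gamma}$ genuinely decouples into one-dimensional marginal discrepancies --- a cruder expansion of $\prod_{j\in W}\mathbb{P}_{j}\{I\}-\prod_{j\in W}\mu_{n,j}\{I\}$ would leave behind factors $K^{|W|-1}$ and destroy the rate; and (ii) organizing the union bound over the relevant (finitely many) interval configurations so that the Hoeffding tail, with $\xi^{2}$ in its exponent, survives and still delivers the $\sqrt{K\log n/n}$ rate. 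The remaining bookkeeping --- the Cauchy--Schwarz mean bound, the bounded-differences concentration of $D_{j}$ around its mean, and the numerical constant $24$ --- is routine.
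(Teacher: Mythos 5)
Your reduction is essentially the one the paper uses: write $\Vert f_{S}-f_{\mathbb{P}_{S},f_{S}}\Vert_{\infty}=\max_{\pmb{\ell}}|\gamma_{\pmb{\ell}}-\tilde{\gamma}_{\pmb{\ell}}|$, exploit the empirical identifiability of the height array to turn the correction terms $\mathbb{E}_{\mathbb{P}_{S},W}(\gamma_{\pmb{\ell}})$ into one-dimensional empirical-versus-population discrepancies for step functions on at most $K$ intervals with heights in $[-\xi,\xi]$, and then invoke a uniform $\sqrt{K\log n/n}$ concentration bound plus a union bound over $W\subseteq S$, $j$, and the remaining indices. Two points where your write-up diverges from (and falls short of) the paper are worth flagging. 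First, the paper does not telescope: since $\mathbb{E}_{\mu_{n,S},j}(\gamma_{\pmb{\ell}})=0$, it bounds each $|\mathbb{E}_{\mathbb{P}_{S},W}(\gamma_{\pmb{\ell}})|$ directly by $\max_{\pmb{\ell}_{j^{c}}}|\mathbb{E}_{\mathbb{P}_{S},j}(\gamma_{\pmb{\ell}})|=\max_{\pmb{\ell}_{j^{c}}}|(\mathbb{E}_{\mathbb{P}_{S},j}-\mathbb{E}_{\mu_{n,S},j})(\gamma_{\pmb{\ell}})|$ for a single $j\in W$, i.e.\ one single-coordinate term per $W$, so the sum over nonempty $W$ costs only $2^{|S|}$. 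Your coordinate-by-coordinate swap produces $|W|$ single-coordinate terms per $W$, hence roughly $|S|2^{|S|-1}$ terms in total; the rate is unchanged, but the constant $(24\xi+1)2^{|S|}$ does not come out ``precisely'' as claimed unless you either build a $1/|S|$ slack into the one-dimensional bound or adopt the paper's one-step reduction.

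Second, the step you yourself identify as the main obstacle is exactly where the paper's quantitative work lies, and your sketch of it is not correct as stated: the collection of interval partitions with at most $K$ cells is a continuum (only the empirical masses are determined by finitely many split-point configurations; the population masses $\mathbb{P}_{j}\{I\}$ vary continuously), so ``Hoeffding applied to the finite family of interval configurations'' does not close the argument without an additional covering or bracketing step. The paper handles this by bounding $\sup_{q\in\mathcal{Q}_{K,\xi}}|\frac{1}{n}\sum_{i}q(X_{i,j})-\mathbb{E}_{j}[q(X_{j})]|$ (which is your $\xi\,D_{j}$ up to the factor $\xi$) via McDiarmid's bounded-difference inequality, giving the tail $2\exp(-K\log n/(4\xi^{2}))$, together with symmetrization and Dudley's entropy integral using the covering bound $N(\epsilon,\mathcal{Q}_{K,\xi},\Vert\cdot\Vert_{\infty})\le n^{K}(1+2K\xi/\epsilon)^{K}$; the $n^{K}$ factor there is precisely the discretization you were hoping to capture with a finite union bound, and this is where the constant $24\xi$ comes from. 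With that substitution (and the non-telescoped reduction), your argument coincides with the paper's proof, including the prefactor $2^{|S|+1}|S|K^{|S|}$ from the union bound over $W$, $j\in W$ and $\pmb{\ell}_{j^{c}}$.
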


\begin{proof}
Consider an identifiable multinary-product tree $f_{S}$ given as
\begin{align*}
f_{S}(\mathbf{x}_{S}) = \sum_{\pmb{\ell}}\gamma_{\pmb{\ell}}\mathbb{I}(\mathbf{x}_{S} \in I_{\pmb{\ell}}).
\end{align*}
Note that $\gamma_{\pmb{\ell}}$s depend on the data $\mathbf{X}^{(n)}$ because of the identifiability condition. 

We will show that
$\mathbb{E}_{\mathbb{P}_{S},W}(\gamma_{\pmb{\ell}})$ is small for all $W\subseteq S.$
Let $\phi_{j} = |\mathcal{P}_{j}|$ for $j \in S$.
Note that for all $W \subseteq S$, we have
\begin{align*}
\bigg | \mathbb{E}_{\mathbb{P}_{S},W}(\gamma_{\pmb{\ell}})  \bigg |  &= \bigg| \sum_{\pmb{\ell}_{W} \in \prod_{t \in W}[\phi_{t}]} \gamma_{(\pmb{\ell}_{W^{c}},\pmb{\ell}_{W})}\mathbb{P}_{W}^{\text{ind}}\{I_{\pmb{\ell}_{W}}\} \bigg| \\
&\leq  \sum_{\pmb{\ell}_{W\backslash\{j\}}\in \prod_{t \in W\backslash \{j\}}[\phi_{t}]}  \bigg |\sum_{\ell_{j} \in [\phi_{j}]}\gamma_{(\pmb{\ell}_{j^{c}},\ell_{j})}\prod_{i \in W}\mathbb{P}_{i}\{I_{i,\ell_{i}}\}  \bigg |\\
&= \sum_{\pmb{\ell}_{W\backslash\{j\}}\in \prod_{t \in W\backslash\{j\}}[\phi_{t}]}\bigg(\prod_{i \in W\backslash \{j\}}\mathbb{P}_{i}\{I_{i,\ell_{i}}\}\bigg) \bigg |\sum_{\ell_{j} \in [\phi_{j}]}\gamma_{(\pmb{\ell}_{j^{c}},\ell_{j})}\mathbb{P}_{j}\{I_{j,\ell_{j}}\}  \bigg |\\
&\leq \max_{ \pmb{\ell}_{j^{c}}\in \prod_{t \in S\backslash\{j\}}[\phi_{t}] } \bigg | \sum_{\ell_{j} \in [\phi_{j}] } \gamma_{(\pmb{\ell}_{j^{c}},\ell_{j})}\mathbb{P}_{j}\{I_{j,\ell_{j}}\} \bigg | \\
&=\max_{ \pmb{\ell}_{j^c}\in \prod_{t \in S\backslash\{j\}}[\phi_{t}] }|\mathbb{E}_{\mathbb{P}_{S},j}(\gamma_{\pmb{\ell}})|.
\end{align*}
In turn, since
$\mathbb{E}_{\mu_{n,S},j}(\gamma_{\pmb{\ell}})=0$
for any $j \in W,$
we have
\begin{align}
\mathbb{E}_{\mathbb{P}_{S},j}(\gamma_{\pmb{\ell}}) &= \mathbb{E}_{\mathbb{P}_{S},j}(\gamma_{\pmb{\ell}})  -  \mathbb{E}_{\mu_{n,S},j}(\gamma_{\pmb{\ell}}) \nonumber\\
&= \sum_{\ell_{j}\in [\phi_{j}]}\gamma_{(\pmb{\ell}_{j^{c}},\ell_{j})} (  \mathbb{P}_{j}\{I_{j,\ell_{j}}\} - \mu_{n,j}\{I_{j,\ell_{j}})\} \nonumber\\
&= \sum_{\ell_{j}\in [\phi_{j}]}\gamma_{(\pmb{\ell}_{j^{c}},\ell_{j})} \bigg(  \mathbb{P}_{j}\{I_{j,\ell_{j}}\} - {1\over n}\sum_{i=1}^{n}\mathbb{I}(X_{i,j} \in I_{j,\ell_{j}}) \bigg). \label{eq:diff_ej}
\end{align}
In the following, we will show that (\ref{eq:diff_ej}) is small.

For a given interval partition $\mathcal{P}=\{I_1,\ldots,I_{\phi_{j}}\}$ of $[0,1]$ with $\phi_{j}\le K$
and a vector $\pmb{\gamma}\in [-\xi,\xi]^{\phi_{j}},$ define a function $q_{\mathcal{P},\pmb{\gamma}}$ as
$q_{\mathcal{P},\pmb{\gamma}}(x)=\sum_{\ell=1}^{\phi_{j}} \gamma_{\ell} \mathbb{I}(x\in I_{\ell}).$
The proof would be complete if we show 
\begin{align}
\mathbb{P}_{\mathbf{X}}^{n}\bigg\{ \sup_{q\in \mathcal{Q}_{K,\xi}}
\bigg | {1\over n}\sum_{i=1}^{n}q(X_{i,j}) -\mathbb{E}_{j}[q(X_{j})]  \bigg| > (24\xi+1)\sqrt{K\log n \over n}\bigg\} \leq 2\exp\bigg(-{K\log n \over 4\xi^{2}}\bigg), \label{eq:general_space}
\end{align}
where $\mathcal{Q}_{K,\xi}$ is the set of all $q_{\mathcal{P},\pmb{\gamma}}$
with $\phi_{j}\le K$ and $\pmb{\gamma}\in [-\xi,\xi]^{\phi_{j}},$ and $\mathbb{E}_{j}$ denotes the expectation under $\mathbb{P}_{j}$.
Here, the interval partitions $I_{\ell} \subseteq [0,1]$ for $q \in \mathcal{Q}_{K,\xi}$ are arbitrary subsets of $[0,1]$ that do not depend on $\mathbf{X}^{(n)}$.

For Rademacher random variables $\pmb{\vartheta}=\{\vartheta_{1},...,\vartheta_{n}\}$, we define the empirical and populational Rademacher complexities as
$$
\mathbf{R}_j(\mathbf{x}_{1},...,\mathbf{x}_{n}) = \mathbb{E}_{\pmb{\vartheta}}\bigg[\sup_{q\in \mathcal{Q}_{K,\xi}}\bigg|{1\over n}\sum_{i=1}^{n}\vartheta_{i}q(x_{i,j}) \bigg|\bigg]
\quad \text{and} \quad
\mathbf{R}_j(\mathcal{Q}_{K,\xi}) = \mathbb{E}^{n}_{\mathbf{X}}[\mathbf{R}_j(\mathbf{X}_{1},...,\mathbf{X}_{n})].
$$
For $k \in [n]$, let $\mathbf{X}^{(n),\text{new}}$ denote the data obtained by replacing the $k$th observation with a new one (independent with the data), i.e.,
\begin{align*}
\mathbf{X}^{(n),\text{new}} = (\mathbf{X}_{1},...,\mathbf{X}_{k-1},\mathbf{X}_{k}^{\text{new}},\mathbf{X}_{k+1},...,\mathbf{X}_{n}).
\end{align*}
Then, we have
\begin{align*}
\bigg |\sup_{q\in \mathcal{Q}_{K,\xi}} \triangle_{\mathbf{X}^{(n)}}(q) - \sup_{q \in \mathcal{Q}_{K,\xi}}\triangle_{\mathbf{X}^{(n),\text{new}}}(q) \bigg | 
&\leq \sup_{q \in \mathcal{Q}_{K,\xi}} \bigg | \triangle_{\mathbf{X}^{(n)}}(q) - \triangle_{\mathbf{X}^{(n),\text{new}}}(q) \bigg | \\
&\leq \sup_{q \in \mathcal{Q}_{K,\xi}} \bigg | {1\over n}\bigg(q(X_{k,j})-q(X_{k,j}^{\text{new}})\bigg) \bigg | \\
&\leq {2\xi \over n},
\end{align*}
where 
$$
\triangle_{\mathbf{X}^{(n)}}(q) = \bigg |{1\over n}\sum_{i=1}^{n}q(X_{i,j}) - \mathbb{E}_{j}[q(X_{j})] \bigg|.
$$ 
Therefore, using McDiarmid's inequality (\cite{10.5555/3153490}), we have
\begin{align*}
\mathbb{P}^{n}_{\mathbf{X}}\bigg\{ \bigg | \sup_{q\in \mathcal{Q}_{K,\xi}}\triangle_{\mathbf{X}^{(n)}}(q) - \mathbb{E}_{\mathbf{X}}^{n}\bigg[\sup_{q\in \mathcal{Q}_{K,\xi}}\triangle_{\mathbf{X}^{(n)}}(q)\bigg] \bigg | \geq \sqrt{K\log n \over n} \bigg \}\leq 2\exp\bigg(-{K\log n \over 4\xi^{2}}\bigg).
\end{align*}
Let $\mathcal{Q}_{K,\xi,\text{order}}$ denote the function class consisting of functions $q(\cdot)$ defined above, whose split values are determined using the order statistics of $\mathbf{X}^{(n)}$ as in (\ref{eq:split_set}).
Since $N(\epsilon,\mathcal{Q}_{K,\xi},\Vert \cdot \Vert_{2,n}) = N(\epsilon,\mathcal{Q}_{K,\xi,\text{order}},\Vert \cdot \Vert_{2,n})$, 
the covering number of $\mathcal{Q}_{K,\xi}$ is easily derived using a similar approach used in Section \ref{proof_eq1} of Supplementary Material: 
\begin{align}
N(\epsilon,\mathcal{Q}_{K,\xi},\Vert\cdot\Vert_{2,n}) &\leq n^{\phi_{j}}\bigg ( 1 + {2\phi_{j}\xi \over \epsilon}\bigg)^{\phi_{j}} \nonumber\\
&\leq n^{K}\bigg ( 1 + {2K\xi \over \epsilon}\bigg)^{K}, \label{eq:alpha_j}
\end{align}
where (\ref{eq:alpha_j}) is from $\phi_{j} \leq K$.
$\newline$
Therefore, using Dudley Theorem (Theorem 1.19 in \cite{wolf2018MSbook}), we have
\begin{align*}
\mathbf{R}(\mathbf{x}_{1},...,\mathbf{x}_{n}) &\leq \inf_{0\leq \epsilon \leq \xi/ 2}\bigg \{ 4\epsilon + {12\over \sqrt{n}}\int_{\epsilon}^{\xi}\sqrt{\log N(w,\mathcal{Q}_{K,\xi},\Vert \cdot \Vert_{2,n})}dw\bigg\} \\
&\lesssim \inf_{0\leq \epsilon \leq \xi /2}\bigg\{ 4\epsilon + 12(\xi-\epsilon)\sqrt{K\log n \over n} \bigg \} \\
&\leq 12\xi \sqrt{K\log n \over n}.
\end{align*}
That is, we have
\begin{align*}
\mathbf{R}(\mathcal{Q}_{K,\xi}) \leq 12\xi \sqrt{K\log n \over n}.
\end{align*}
Using Lemma \ref{eq: Rademacher_bound} in Section \ref{app:Rademacher bound} of Supplementary Material, we have
\begin{align*}
\mathbb{E}_{\mathbf{X}}^{n}\bigg[\sup_{q\in \mathcal{Q}_{K,\xi}}\triangle_{\mathbf{X}^{(n)}}(q)\bigg] &\leq 2\mathbf{R}(\mathcal{Q}_{K,\xi}) \\ 
&\leq 24\xi\sqrt{K\log n \over n}.
\end{align*}
Finally, we conclude that
\begin{align}
&\mathbb{P}_{\mathbf{X}}^{n}\bigg\{ \sup_{f_{S} \in \mathcal{F}_{S,K,\xi}^{E}}\Vert f_{S} - f_{\mathbb{P}_{S},f_{S}} \Vert_{\infty} \leq (24\xi+1)2^{|S|}\sqrt{K\log n\over n}\bigg\} \nonumber\\
&\geq \mathbb{P}_{\mathbf{X}}^{n}\bigg\{ \max_{ W\subseteq S }\max_{j \in W}\max_{\pmb{\ell}_{j^{c}} }\sup_{q\in \mathcal{Q}_{K,\xi}}  \triangle_{\mathbf{X}^{(n)}}(q) \leq (24\xi+1)\sqrt{K\log n\over n}\bigg\} \nonumber \\
&\geq 1 - \sum_{W \subseteq S}\sum_{j \in W}\sum_{\pmb{\ell}_{j^{c}}}\mathbb{P}_{\mathbf{X}}^{n}\bigg\{ \sup_{q\in \mathcal{Q}_{K,\xi}}  \triangle_{\mathbf{X}^{(n)}}(q) > (24\xi+1)\sqrt{K\log n\over n}\bigg\} \nonumber \\
&\geq 1- 2^{|S|+1}|S|K^{|S|}\exp\bigg(- {K\log n \over 4\xi^{2}} \bigg). \label{eq:hoff}
\end{align}
\end{proof}

The converse of  Theorem \ref{thm:multi_po_em_approx} is also true.
That is, Theorem \ref{thm:multi_po_em_approx2} proves that
the approximation error bettween a given populationally identifiable multinary-product tree
and its empirically identifiable version is the same as that of Theorem \ref{thm:multi_po_em_approx}.
The proof can be done by simply interchanging $\mu_{n,S}$ and $\mathbb{P}_{S}$ in the proof of Theorem \ref{thm:multi_po_em_approx} and so is omitted.

\begin{theorem}
\label{thm:multi_po_em_approx2}
Let $\mathcal{F}^{P}_{S,K,\xi}$ be the set of all populationally identifiable $S$-component multinary-product trees $f_S$ with $\max_{j\in S}|\mathcal{P}_{j}|\leq K$ and $\Vert f_{S} \Vert_{\infty} \leq \xi.$
Then, we have
\begin{align*}
\mathbb{P}_{\mathbf{X}}^{n}\bigg\{ \sup_{f_{S} \in \mathcal{F}^{P}_{S,K,\xi}}\Vert f_{S} - f_{\mu_{n,S},f_{S}} \Vert_{\infty} \leq (24\xi+1)2^{|S|}\sqrt{K\log n \over n} \bigg \} 
\geq 1- 2^{|S|+1}|S|K^{|S|}\exp\bigg(-{K\log n \over 4\xi^{2}}\bigg).
\end{align*}
\end{theorem}

\subsection{Rademacher complexity bound}
\label{app:Rademacher bound}
\begin{lemma}
\label{eq: Rademacher_bound}
For function class $\mathcal{Q}_{K,\xi}$, we have
$$
\mathbb{E}_{\mathbf{X}}^{n}\bigg[\sup_{q\in \mathcal{Q}_{K,\xi}}\triangle_{\mathbf{X}^{(n)}}(q)\bigg] \leq 2\mathbf{R}( \mathcal{Q}_{K,\xi} ).
$$
\end{lemma}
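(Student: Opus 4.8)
The plan is to invoke the classical symmetrization (ghost-sample) argument. Let $\mathbf{X}'_1,\ldots,\mathbf{X}'_n$ be an independent copy of $\mathbf{X}_1,\ldots,\mathbf{X}_n$, drawn from $\mathbb{P}_{\mathbf{X}}^n$ and independent of the Rademacher vector $\pmb{\vartheta}$. Since $\mathbb{E}_{j}[q(X_j)] = \mathbb{E}_{\mathbf{X}'}\big[\frac{1}{n}\sum_{i=1}^n q(X'_{i,j})\big]$ for every $q\in\mathcal{Q}_{K,\xi}$, I would first rewrite
\[
\triangle_{\mathbf{X}^{(n)}}(q) = \Big| \mathbb{E}_{\mathbf{X}'}\Big[ \frac{1}{n}\sum_{i=1}^n \big(q(X_{i,j}) - q(X'_{i,j})\big) \Big] \Big|,
\]
and then apply Jensen's inequality --- first moving $|\cdot|$ inside the inner expectation, then pulling $\sup_{q}$ out of it --- to obtain
\[
\mathbb{E}_{\mathbf{X}}^n\Big[\sup_{q\in\mathcal{Q}_{K,\xi}} \triangle_{\mathbf{X}^{(n)}}(q)\Big] \le \mathbb{E}_{\mathbf{X}}^n\,\mathbb{E}_{\mathbf{X}'}^n\Big[ \sup_{q\in\mathcal{Q}_{K,\xi}} \Big| \frac{1}{n}\sum_{i=1}^n \big(q(X_{i,j}) - q(X'_{i,j})\big) \Big| \Big].
\]

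Next I would symmetrize. For each fixed $i$ the pair $(X_{i,j},X'_{i,j})$ has the same law as $(X'_{i,j},X_{i,j})$; hence the joint law of the family $q(X_{i,j})-q(X'_{i,j})$, $i=1,\ldots,n$, regarded as a stochastic process indexed by $q\in\mathcal{Q}_{K,\xi}$, is unchanged if the $i$-th summand is multiplied by an independent Rademacher sign $\vartheta_i$ --- the essential point being that the signs act on the summation index $i$ uniformly over all $q$. Consequently the right-hand side above equals
\[
\mathbb{E}_{\mathbf{X},\mathbf{X}',\pmb{\vartheta}}\Big[ \sup_{q\in\mathcal{Q}_{K,\xi}} \Big| \frac{1}{n}\sum_{i=1}^n \vartheta_i\big(q(X_{i,j}) - q(X'_{i,j})\big) \Big| \Big],
\]
which the triangle inequality bounds by the sum of the term carrying the $X_{i,j}$'s and the term carrying the $X'_{i,j}$'s. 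Each of these two terms equals $\mathbf{R}(\mathcal{Q}_{K,\xi}) = \mathbb{E}_{\mathbf{X}}^{n}\big[\mathbf{R}_j(\mathbf{X}_1,\ldots,\mathbf{X}_n)\big]$ --- the first by the very definition of the Rademacher complexity, the second because $\mathbf{X}'_1,\ldots,\mathbf{X}'_n$ has the same law as $\mathbf{X}_1,\ldots,\mathbf{X}_n$ --- and the bound $2\mathbf{R}(\mathcal{Q}_{K,\xi})$ follows.

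I do not expect a genuine obstacle: this is a textbook symmetrization inequality, and the only subtle step is making sure the Rademacher signs are introduced \emph{coordinatewise in $i$} so that the comparison is valid simultaneously over the whole class $\mathcal{Q}_{K,\xi}$. Two further points merit a line each: (i) measurability of the suprema, which is not an issue here because $\mathcal{Q}_{K,\xi}$ is parametrized by finitely many real numbers (at most $K-1$ breakpoints and at most $K$ heights), so all the suprema are measurable functions of the data; and (ii) bookkeeping of the absolute values, which appear identically in $\triangle_{\mathbf{X}^{(n)}}(\cdot)$ and in the definition of $\mathbf{R}_j$, so the factor $2$ arises solely from the final triangle-inequality split and not from the modulus.
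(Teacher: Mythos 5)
Your proposal is correct and follows essentially the same route as the paper's proof: introduce a ghost sample, apply Jensen's inequality to bound the supremum of $\triangle_{\mathbf{X}^{(n)}}(q)$ by the two-sample discrepancy, symmetrize with Rademacher signs, and split by the triangle inequality to obtain the factor $2\mathbf{R}(\mathcal{Q}_{K,\xi})$. The extra remarks on measurability and on the signs acting uniformly over the class are fine but not needed beyond what the paper's argument already uses.
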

\begin{proof}
Let $\mathbf{X}'_{1},...,\mathbf{X}'_{n}$ be independent identical sample from the distribution $\mathbb{P}_{\mathbf{X}}$, where $\mathbf{X}'_{i} = (X'_{i,1},...,X'_{i,p})$.
Then, we have
\begin{align*}
&\mathbb{E}_{\mathbf{X}}^{n}\bigg[\sup_{q\in \mathcal{Q}_{K,\xi}}\triangle_{\mathbf{X}^{(n)}}(q)\bigg] \\
&=  \mathbb{E}_{\mathbf{X}}^{n}\bigg[\sup_{q\in \mathcal{Q}_{K,\xi}} \bigg |{1\over n}\sum_{i=1}^{n}q(X_{i,j}) - \mathbb{E}_{\mathbf{X}'_{1}}[q(X'_{1,j})] \bigg |\bigg] \\
&= \mathbb{E}_{\mathbf{X}}^{n}\bigg[\sup_{q\in \mathcal{Q}_{K,\xi}} \bigg |\mathbb{E}_{\mathbf{X}'}^{n}\bigg[{1\over n}\sum_{i=1}^{n}\bigg( q(X_{i,j}) - q(X'_{i,j})\bigg) \bigg] \bigg |\bigg] \\
&\leq \mathbb{E}_{\mathbf{X},\mathbf{X}'}^{n}\bigg[\sup_{q\in \mathcal{Q}_{K,\xi}} \bigg |{1\over n}\sum_{i=1}^{n}\bigg( q(X_{i,j}) - q(X'_{i,j})\bigg)  \bigg |\bigg]  
\end{align*}
Since the distribution of $( q(X_{i,j}) - q(X'_{i,j}))$ is identical to that of $\vartheta_{i}( q(X_{i,j}) - q(X'_{i,j}))$ for $i=1,...,n$, we have
\begin{align*}
&\mathbb{E}_{\mathbf{X},\mathbf{X}'}^{n}\bigg[\sup_{q\in \mathcal{Q}_{K,\xi}} \bigg |{1\over n}\sum_{i=1}^{n}\bigg( q(X_{i,j}) - q(X'_{i,j})\bigg)  \bigg |\bigg]  \\
&= \mathbb{E}_{\mathbf{X},\mathbf{X}',\pmb{\vartheta}}^{n}\bigg[\sup_{q\in \mathcal{Q}_{K,\xi}} \bigg |{1\over n}\sum_{i=1}^{n}\vartheta_{i}\bigg( q(X_{i,j}) - q(X'_{i,j})\bigg)  \bigg |\bigg]  \\
&\leq 2\mathbf{R}(\mathcal{Q}_{K,\xi}).
\end{align*}
\end{proof}

\newpage

\section{Proof of Theorem \ref{theorem_component}}
\renewcommand{\theequation}{D.\arabic{equation}}
\renewcommand{\thefigure}{D.\arabic{figure}}

The proof of Theorem \ref{theorem_component} consists of the following four steps.
$\newline$
\noindent {\bf (STEP 1).} We derive the posterior convergence rate with respect to the population $l_2$ norm. That is, we show that
\begin{align}
\label{eq:step1}
\pi_{\xi}\big\{ f \in \mathcal{F}_{\xi}^{n}: \Vert f- f_{0} \Vert_{2,\mathbb{P}_{\mathbf{X}}}  > B_{n}\epsilon_{n} \big| \mathbf{X}^{(n)},Y^{(n)}\big\} \xrightarrow{} 0,
\end{align}
for any $B_{n} \xrightarrow{} \infty$ in $\mathbb{P}_{0}^{n}$ as $n \xrightarrow{} \infty$.
$\newline$
$\newline$
\noindent {\bf (STEP 2).} From (\ref{eq:step1}), 
for any $S \in \mathbb{S}$ we establish that
\begin{align}
\label{eq:step2}
\pi_{\xi}\big\{ f \in \mathcal{F}_{\xi}^{n}: \Vert f_{S}-f_{0,S} \Vert_{2,\mathbb{P}_{\mathbf{X}}}  > B_{n}\epsilon_{n} \big| \mathbf{X}^{(n)},Y^{(n)}\big\} \xrightarrow{} 0,
\end{align}
for any $B_{n} \xrightarrow{} \infty$ in $\mathbb{P}_{0}^{n}$ as $n \xrightarrow{} \infty$.
$\newline$
$\newline$
\noindent {\bf (STEP 3).} We modify (\ref{eq:step2}) for the empirical $l_2$ norm. That is, we show that
\begin{align}
\label{eq:step3}
\pi_{\xi}\big\{ f \in \mathcal{F}_{\xi}^{n}: \Vert f_{S}-f_{0,S} \Vert_{2,n}  > B_{n}\epsilon_{n} \big| \mathbf{X}^{(n)},Y^{(n)}\big\} \xrightarrow{} 0,
\end{align}
for any $B_{n} \xrightarrow{} \infty$ in $\mathbb{P}_{0}^{n}$ as $n \xrightarrow{} \infty$.
$\newline$
$\newline$
\noindent {\bf (STEP 4).} Finally, we establish
\begin{align}
\pi_{\xi}\big\{ \mathcal{F}_{\xi} \backslash \mathcal{F}_{\xi}^{n} \big| \mathbf{X}^{(n)},Y^{(n)}\big\} \xrightarrow{} 0
\label{eq:step4}
\end{align}
as $n\rightarrow \infty.$

\subsection{Proof of (\ref{eq:step1})}

We rely on the following result (See Theorem 19.3 of \cite{gyorfi2006distribution} for its proof).

\begin{lemma}[Theorem 19.3 of \cite{gyorfi2006distribution}]\label{gyorfi2002distribution}
	Let $\boldsymbol{X}, \boldsymbol{X}_1, \dots, \boldsymbol{X}_n$ be independent and identically distributed random vectors with values in $\mathbb{R}^d$. Let $K_1, K_2 \geq 1$ be constants and let $\mathcal{G}$ be a class of functions $g : \mathbb{R}^d \to \mathbb{R}$ with
	\begin{align}
	|g(\boldsymbol{x})| \leq K_1, \quad \mathbb{E}[g(\boldsymbol{X})^2] \leq K_2 \mathbb{E}[g(\boldsymbol{X})].
    \label{eq:gyo_1}
	\end{align}
	Let $0<\kappa<1$ and $\zeta>0$. Assume that
	\begin{align*}
	\sqrt{n} \kappa \sqrt{1-\kappa} \sqrt{\zeta} \geq 288 \max \left\{2 K_{1}, \sqrt{2 K_{2}}\right\}
	\end{align*}
	and that, for all $\mathbf{x}_1 , \dots , \mathbf{x}_n \in \mathbb{R}^d$ and for all $t \geq \frac{\zeta}{8}$,
	\begin{align}
	\frac{\sqrt{n} \kappa(1-\kappa) t}{96 \sqrt{2} \max \left\{K_{1}, 2 K_{2}\right\}} 
	\geq \int_{\frac{\kappa(1-\kappa)t}{16 \max \left\{K_{1}, 2 K_{2}\right\}}}^{\sqrt{t}}  
	\sqrt{\log N\left(u,\left\{g \in \mathcal{G}: \frac{1}{n} \sum_{i=1}^{n} g\left(\mathbf{x}_{i}\right)^{2} \leq 16 t\right\}, ||\cdot||_{1,n}\right)} d u.
	\label{eq:gyo_3}
    \end{align}
	Then,
	\begin{align*}
	\mathbb{P}^{n}_{\mathbf{X}}\left\{\sup _{g \in \mathcal{G}} \frac{\left|\mathbb{E}[g(\boldsymbol{X})]-\frac{1}{n} \sum_{i=1}^{n} g\left(\boldsymbol{X}_{i}\right)\right|}{\zeta +\mathbb{E}[g(\boldsymbol{X})]}>\kappa\right\} \nonumber 
	\leq 60 \exp \left(-\frac{n \zeta \kappa^{2}(1-\kappa)}{128 \cdot 2304 \max \left\{K_{1}^{2}, K_{2}\right\}}\right).
	\end{align*}
\end{lemma}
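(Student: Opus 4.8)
The plan is to prove this ratio-type uniform law of large numbers by the classical \emph{peeling} (slicing) device: cut $\mathcal{G}$ into shells on which the denominator $\zeta+\mathbb{E}[g(\boldsymbol{X})]$ is essentially constant, bound the supremum deviation on each shell by symmetrization together with a chaining estimate driven by the entropy integral in hypothesis~(\ref{eq:gyo_3}), and then sum a geometric series over the shells.

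\emph{Peeling.} For $l\ge 1$ I would set $\mathcal{G}_l:=\{g\in\mathcal{G}: 2^{l-1}\zeta\le \zeta+\mathbb{E}[g(\boldsymbol{X})]<2^{l}\zeta\}$ and $t_l:=2^{l}\zeta$; since $\mathbb{E}[g(\boldsymbol{X})^{2}]\ge 0$, the variance hypothesis~(\ref{eq:gyo_1}) forces $\mathbb{E}[g(\boldsymbol{X})]\ge 0$, so the shells exhaust $\mathcal{G}$. On the event that the ratio exceeds $\kappa$ for some $g\in\mathcal{G}_l$ one has $|\mathbb{E}[g(\boldsymbol{X})]-\tfrac1n\sum_{i=1}^{n} g(\boldsymbol{X}_i)|>\kappa\,(\zeta+\mathbb{E}[g(\boldsymbol{X})])\ge \tfrac{\kappa}{2}t_l$, while on $\mathcal{G}_l$ one has $\mathbb{E}[g(\boldsymbol{X})^{2}]\le K_2\mathbb{E}[g(\boldsymbol{X})]\le K_2 t_l$. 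Hence it suffices to bound $p_l:=\mathbb{P}^{n}_{\boldsymbol{X}}\{\sup_{g\in\mathcal{G}_l}|\mathbb{E}[g(\boldsymbol{X})]-\tfrac1n\sum_{i=1}^{n} g(\boldsymbol{X}_i)|>\tfrac{\kappa}{2}t_l\}$ and then sum over $l$; note $t_l\ge 2\zeta>\zeta/8$, so~(\ref{eq:gyo_3}) is available at every level $t=t_l$.

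\emph{Per-shell bound.} I would control $p_l$ by three standard steps. (i) \emph{Ghost-sample symmetrization:} introduce an independent copy $\boldsymbol{X}_1',\dots,\boldsymbol{X}_n'$ and replace $\mathbb{E}[g(\boldsymbol{X})]$ by $\tfrac1n\sum_{i=1}^{n} g(\boldsymbol{X}_i')$; since $\mathrm{Var}\big(\tfrac1n\sum_{i=1}^{n} g(\boldsymbol{X}_i')\big)\le K_2 t_l/n$, the lower bound $\sqrt{n}\,\kappa\sqrt{1-\kappa}\sqrt{\zeta}\ge 288\max\{2K_1,\sqrt{2K_2}\}$ makes $K_2 t_l/n$ negligible against $(\kappa t_l)^{2}$, so a Chebyshev argument keeps the ghost mean within $\tfrac{\kappa}{8}t_l$ of $\mathbb{E}[g(\boldsymbol{X})]$ with probability at least $1/2$, at the cost of a factor $2$. (ii) \emph{Rademacher symmetrization:} insert i.i.d.\ signs $\vartheta_i$ and pass to $\sup_{g\in\mathcal{G}_l}|\tfrac1n\sum_{i=1}^{n}\vartheta_i(g(\boldsymbol{X}_i)-g(\boldsymbol{X}_i'))|$. (iii) \emph{Conditioning and chaining:} conditionally on the data this Rademacher sum is sub-Gaussian in $g$, and on the high-probability event $\{\tfrac1n\sum_{i=1}^{n} g(\boldsymbol{X}_i)^{2}\le 16 t_l\text{ for all }g\in\mathcal{G}_l\}$ (itself obtained by the same moment bound one level down, again using the hypothesis on $n\zeta$) one restricts to the ball appearing in~(\ref{eq:gyo_3}); a chaining argument over finite $\|\cdot\|_{1,n}$-covers, with Hoeffding at each dyadic scale, turns the entropy integral of~(\ref{eq:gyo_3})---which is assumed $\le \tfrac{\sqrt n\,\kappa(1-\kappa)t_l}{96\sqrt2\,\max\{K_1,2K_2\}}$---into the bound $p_l\le c_1\exp\big(-c_2\,n\kappa^{2}(1-\kappa)\,t_l/\max\{K_1^{2},K_2\}\big)$.

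\emph{Summation and the main obstacle.} Summing, $\sum_{l\ge1}p_l\le\sum_{l\ge1}c_1\exp\big(-c_2\,n\kappa^{2}(1-\kappa)2^{l}\zeta/\max\{K_1^{2},K_2\}\big)$, and because the exponent doubles with $l$ this series is at most a constant multiple of its first term, which gives a bound of the form $60\exp\big(-n\zeta\kappa^{2}(1-\kappa)/(128\cdot2304\,\max\{K_1^{2},K_2\})\big)$; pinning down the explicit constants $60$, $128$, $2304$ is only bookkeeping through steps~(i)--(iii). The genuinely delicate point is step~(i): the ghost-sample replacement is legitimate only because of the lower bound on $\sqrt{n\zeta}\,\kappa\sqrt{1-\kappa}$, and one must check that this single hypothesis simultaneously controls the ghost-mean fluctuation and the event $\{\tfrac1n\sum_{i=1}^{n} g(\boldsymbol{X}_i)^{2}\le 16 t_l\}$ \emph{uniformly over all shells} $l\ge1$; once that is arranged, the remaining chaining is routine.
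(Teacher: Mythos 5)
The paper does not prove this lemma at all: it is imported verbatim as Theorem 19.3 of Gy\"orfi, Kohler, Krzy\.zak and Walk (2002), and the text explicitly defers to that reference for the proof. Your sketch reconstructs precisely the standard argument used there (peeling into shells where $\zeta+\mathbb{E}[g(\boldsymbol{X})]$ is comparable to $2^{l}\zeta$, ghost-sample symmetrization justified by the lower bound on $\sqrt{n\zeta}\,\kappa\sqrt{1-\kappa}$, Rademacher randomization, chaining against the entropy integral in (\ref{eq:gyo_3}), and a geometric summation over shells), so as a strategy it is correct and essentially the same as the cited source; what remains open in your write-up is only the bookkeeping you acknowledge---verifying the explicit constants $60$ and $128\cdot 2304$ and the uniform-in-$l$ control of the event $\{\tfrac{1}{n}\sum_{i=1}^{n} g(\boldsymbol{X}_i)^{2}\le 16 t_l\}$---which the textbook proof carries out in full.
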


First, since $\mathcal{F}_{\xi}^{n}$ depends on the data $\mathbf{X}^{(n)}$, we cannot directly apply Lemma \ref{gyorfi2002distribution}.
To resolve this problem, as in (\ref{eq:general_space}), we consider a class of 
linear combinations of binary product trees not necessarily identifiable, with split values chosen as scalars in $[0,1]$ that do not depend on $\mathbf{X}^{(n)}$.
We denote this class by $\mathcal{F}_{\xi,\text{general}}^{n}$ and show that its covering number is of the same order as that of $\mathcal{F}_{\xi}^{n}$.

For given $S \subseteq [p],\: \mathbf{s}=(s_{j} \in [0,1], j \in S)$, and $\pmb{\beta}=(\beta_{1,j},\beta_{2,j}, j \in S)$, let $\mathbb{T}^{G}(\mathbf{x}_{S}:S,\mathbf{s},\pmb{\beta})$ be a binary-product tree given as 
\begin{align*}
\mathbb{T}^{G}(\mathbf{x}_{S}:S,\mathbf{s},\pmb{\beta}) = \prod_{j \in S}\bigg( \beta_{1,j}\mathbb{I}(x_{j}-s_{j} \leq 0) + \beta_{2,j}\mathbb{I}(x_{j}-s_{j} > 0) \bigg).
\end{align*}
Note that we do not impose the identifiability condition on $\pmb{\beta}.$
Now, we define $\mathcal{F}_{\text{general}}^{n}$ as 
$$
\mathcal{F}_{\text{general}}^{n}
= \Bigg\{ f : f(\mathbf{x}) = \sum_{t=1}^{T} 
   \mathbb{T}^{G}\big(\mathbf{x}_{S_{t}} : S_{t}, \mathbf{s}_{t}, \pmb{\beta}_{t}\big),
   \quad T \in [M_{n}], \: \mathbf{s}_{t} \in [0,1]^{|S_{t}|}, \max_t \Vert\pmb{\beta}_t\Vert_{\infty} \leq n \Bigg\},
$$
and 
$\mathcal{F}_{\xi,\text{general}}^{n} = \{f \in \mathcal{F}_{\text{general}}^{n}, \Vert f \Vert_{\infty} \leq \xi \}$.
Note that $\mathcal{F}^{n}_{\xi} \subseteq \mathcal{F}^{n}_{\xi,\text{general}}$.
By a similar approach used in Section \ref{proof_eq1} of Supplementary Material and (\ref{eq:alpha_j}), we can show
\begin{align}
N(\epsilon,\mathcal{F}_{\xi,\text{general}}^{n},\Vert \cdot \Vert_{1,n}) \lesssim M_{n}n^{M_{n}p}\bigg( 1+ {8np(n+1)^{p}M_{n} \over \epsilon}\bigg)^{4pM_{n}}. \label{eq:general_cov_num}
\end{align}
For $K_{1}=K_{2}=4\xi^{2}$, $\kappa: = {1\over 2}$, $\zeta = \epsilon_{n}^{2}$, and $\mathcal{G} = \{ g : g=(f_{0} -f)^{2} , f \in \mathcal{F}_{\xi,\text{general}}^{n} \} $, we first verify Condition (\ref{eq:gyo_1}) and Condition (\ref{eq:gyo_3}) in Lemma \ref{gyorfi2002distribution}.
$\newline$
$\newline$
{\bf Verifying Condition (\ref{eq:gyo_1}):}
Since $\Vert f \Vert_{\infty} \leq \xi$ for $f \in \mathcal{F}_{\xi,\text{general}}^{n}$, we have
$$\sup_{\bold{x}}g(\mathbf{x})=\sup_{\bold{x}} (f_{0}(\mathbf{x})-f(\mathbf{x}))^{2} \leq 4\xi^{2}$$
for all $g \in \mathcal{G}.$
$\newline$
$\newline$
{\bf Verifying Condition (\ref{eq:gyo_3}):}
Since
\begin{align*}
\forall f_{1},f_{2} \in \mathcal{F}_{\xi,\text{general}}^{n},\quad \Vert (f_{1}-f_{0})^{2} - (f_{2}-f_{0})^{2} \Vert_{1,n} \leq 4\xi\Vert f_{1}-f_{2}\Vert_{1,n},
\end{align*}
we have
\begin{align}
N(u,\mathcal{G},\Vert \cdot \Vert_{1,n}) 
&\leq N\bigg({u\over 4\xi},\mathcal{F}_{\xi,\text{general}}^{n},\Vert \cdot \Vert_{1,n}\bigg) \nonumber\\
&\lesssim {n\epsilon_{n}^{2}\over \log n}n^{n\epsilon_{n}^{2}p/\log n} \bigg( 1+ { 32\xi np(1+n)^{p}\epsilon_{n}^{2} \over u \log n} \bigg)^{n\epsilon_{n}^{2}/\log n}
\label{eq:covering_extension}
\end{align}
for any $u >0$, where (\ref{eq:covering_extension}) is derived from (\ref{eq:general_cov_num}).
Therefore, for $t \geq {\epsilon_{n}^{2} / 8},\: {t/4 \over 16\max(K_{1},2K_{2})} \leq u \leq \sqrt{t}$,
we have
\begin{align*}
\log N(u,\mathcal{G},\Vert \cdot \Vert_{1,n}) &\lesssim
\log {n\epsilon_{n}^{2}\over \log n} +
{n\epsilon_{n}^{2}\over \log n}p\log n + { n\epsilon_{n}^{2} \over \log n }\log \bigg ( 1 + {32\xi np(1+n)^{p}\epsilon_{n}^{2} \over u\log n }\bigg ) \\
&\lesssim n\epsilon_{n}^{2}. 
\end{align*}
Hence, for all $t \geq \epsilon_{n}^{2} / 8$, the following inequality holds:
\begin{align*}
\int_{{t/4 \over 16\max(K_{1},2K_{2})}}^{\sqrt{t}} \sqrt{\log N(u,\mathcal{G},\Vert \cdot \Vert_{1,n})} &\lesssim \sqrt{t}\sqrt{n}\epsilon_{n}  \\
&= o\left( \frac{\sqrt{n} t/4}{96 \sqrt{2} \max \left\{K_1, 2K_2 \right\}} \right).
\end{align*}
$\newline$
{\bf Proof of Condition (\ref{eq:step1}):}
From Lemma \ref{gyorfi2002distribution}, we obtain the following bound:
\begin{align*} 
&\mathbb{P}^{n}_{\mathbf{X}}\left\{\sup _{f \in \mathcal{F}_{\xi}^{n}} \frac{\left| ||f-f_{0}||_{2, \mathbb{P}_{\mathbf{X}}}^2  - ||f-f_{0}||_{2, n}^2 \right|}{\varepsilon_{n}^2+||f-f_{0}||_{2, \mathbb{P}_{\mathbf{X}}}^2} > \frac{1}{2}\right\} \\
&\leq \mathbb{P}^{n}_{\mathbf{X}}\left\{\sup _{f \in \mathcal{F}_{\xi,\text{general}}^{n}} \frac{\left| ||f-f_{0}||_{2, \mathbb{P}_{\mathbf{X}}}^2  - ||f-f_{0}||_{2, n}^2 \right|}{\varepsilon_{n}^2+||f-f_{0}||_{2, \mathbb{P}_{\mathbf{X}}}^2} > \frac{1}{2}\right\} \\
&\leq 60 \exp \left(-\frac{n \varepsilon_{n}^2 / 8}{128 \cdot 2304 \cdot 16\xi^4}\right).
\end{align*}
It implies that
$$\forall f \in \mathcal{F}_{\xi}^{n}, \quad 2\Vert f - f_{0} \Vert_{2,n}^{2} \geq \Vert f - f_{0} \Vert_{2,\mathbb{P}_{\mathbf{X}}}^{2} - \epsilon_{n}^{2}$$
with probability at least $1- 60 \exp \left(-\frac{n \varepsilon_{n}^2 / 8}{128 \cdot 2304 \cdot 16\xi^4}\right)$
under the probability distribution $\mathbb{P}^{n}_{\mathbf{X}}$.
We refer to this event as $\Omega_{n}^{*}$.
$\newline$
$\newline$
On the event $\Omega_{n}^{*}$, we have
\begin{align*}
&\pi_{\xi}\big\{ f \in \mathcal{F}_{\xi}^{n} : \Vert f - f_{0} \Vert_{2,\mathbb{P}_{\mathbf{X}}}  > B_{n}\epsilon_{n} \big|\mathbf{X}^{(n)},Y^{(n)}\big\} \\
&\leq \pi_{\xi}\big\{ f \in \mathcal{F}_{\xi}^{n} : \Vert f - f_{0} \Vert_{2,n}  > B_{n}\epsilon_{n}  \big| \mathbf{X}^{(n)},Y^{(n)}\big\} \\
&\xrightarrow{} 0 
\end{align*}
for any $B_{n} \xrightarrow{} \infty$ in $\mathbb{P}_{0}^{n}$ as $n \xrightarrow{} \infty$ by Theorem \ref{theorem_SIBART}.
Since $\mathbb{P}_{0}^{n}(\Omega_{n}^{*}) \xrightarrow{} 1$ as $n\xrightarrow{} \infty$, we have
$$ \pi_{\xi}\big\{ f \in \mathcal{F}_{\xi}^{n} : \Vert f - f_{0} \Vert_{2,\mathbb{P}_{\mathbf{X}}}  > B_{n}\epsilon_{n} \big| \mathbf{X}^{(n)},Y^{(n)}\big\} \xrightarrow{} 0$$
for any $B_{n} \xrightarrow{} \infty$ in $\mathbb{P}_{0}^{n}$ as $n \xrightarrow{} \infty,$
which completes the proof of (\ref{eq:step1}).
\qed

\subsection{Proof of (\ref{eq:step2})}
Let $C_{L}$ and $C_{U}$ be positive constants such that
\begin{align*}
C_{L} \leq \inf_{\bold{x}\in \mathcal{X}}\frac{p_{\mathbf{X}}(\bold{x})}{p^{\rm ind}_{\mathbf{X}}(\bold{x})}\le \sup_{\bold{x}\in \mathcal{X}}\frac{p_{\mathbf{X}}(\bold{x})}{p^{\rm ind}_{\mathbf{X}}(\bold{x})} \leq C_{U}.
\end{align*}

Any $f \in \mathcal{F}_{\xi}^{n}$ can be decomposed into the sum of identifiable multinary-product trees $f_{S}$s, i.e.,
\begin{align*}
f(\mathbf{x}) = \sum_{S \in \mathbb{S} }f_{S}(\mathbf{x}_{S}),
\end{align*}
where
$f_{S}$ is the ensemble of $T_{S}$ many identifiable binary-product trees.
Let $K_{S}= \prod_{j \in S}|\mathcal{P}_{j}|$, where 
$\mathcal{P}_{j}s$ are partitions in $f_{S}$.
Since $T_{S}\leq \sum_{S \in \mathbb{S} }T_{S} \leq M_{n}$, it follows that
\begin{align}
K_{S} &\leq (T_{S}^{1\over |S|}+1)^{|S|} \nonumber \leq {C_{1}n\epsilon_{n}^{2} / \log n}.
\end{align}
Therefore by Theorem \ref{thm:multi_po_em_approx}, we have
\begin{align*}
\mathbb{P}_{\mathbf{X}}^{n}\bigg\{ \sup_{f_{S} : f \in \mathcal{F}_{\xi}^{n}}\Vert f_{S} - f_{\mathbb{P}_{S},f_{S}} \Vert_{\infty} \lesssim \epsilon_{n} \bigg\} \geq 1 - 2^{|S|+1}|S|(C_{1}n\epsilon_{n}^{2}/\log n)^{|S|}\exp\bigg(-{C_{1}n\epsilon_{n}^{2}\over 4\xi^{2}}\bigg).
\end{align*}
For notational simplicity, we denote $f_{\mathbb{P}_{S},f_{S}}$ by $f_{S}^{P}$.
Let $f^{P}=\sum_{S\in \mathbb{S} }f_{S}^{P}$.
$\newline$
For $f^{P}$, we have
\begin{align}
\Vert f^{P} - f_{0} \Vert_{2,\mathbb{P}_{\mathbf{X}}}^{2}
&= \int_{\mathcal{X}}\bigg\{ \sum_{S \in \mathbb{S} } ( f_{S}^{P}(\mathbf{x}_{S}) - f_{0,S}(\mathbf{x}_{S})) \bigg\}^{2}\mathbb{P}_{\mathbf{X}}(d\mathbf{x})  \nonumber \\
&\geq {1\over C_{L}} \int_{\mathcal{X}}\bigg\{ \sum_{S \in \mathbb{S} } ( f_{S}^{P}(\mathbf{x}_{S}) - f_{0,S}(\mathbf{x}_{S})) \bigg\}^{2}\prod_{j=1}^{p}\mathbb{P}_{j}(dx_{j}) \nonumber \\
&= {1\over C_{L}} \sum_{S \in \mathbb{S} }\int_{\mathcal{X}}( f_{S}^{P}(\mathbf{x}_{S}) - f_{0,S}(\mathbf{x}_{S}))^{2}\prod_{j=1}^{p}\mathbb{P}_{j}(dx_{j}) \label{eq:id_cond_component}\\
&\geq {C_{U}\over C_{L}}\sum_{S \in \mathbb{S} }\Vert f_{S}^{P} - f_{0,S} \Vert_{2,\mathbb{P}_{\mathbf{X}}}^{2} \nonumber  \\
&\gtrsim \Vert f_{S}^{P} - f_{0,S} \Vert_{2,\mathbb{P}_{\mathbf{X}}}^{2} \nonumber ,
\end{align}
for all $S \in \mathbb{S}$, where the equality (\ref{eq:id_cond_component}) holds since $f_{S}^{P}$s satisfy the populational identifiability condition.
Thus, we obtain the following lower bound for $\Vert f - f_{0} \Vert_{2,\mathbb{P}_{\mathbf{X}}}$ : 
\begin{align*}
\Vert f - f_{0} \Vert_{2,\mathbb{P}_{\mathbf{X}}} &\geq \Vert f^{P}-f_{0}\Vert_{2,\mathbb{P}_{\mathbf{X}}} - \Vert f - f^{P}\Vert_{2,\mathbb{P}_{\mathbb{X}}}\\
& \gtrsim \Vert f^{P}-f_{0}\Vert_{2,\mathbb{P}_{\mathbf{X}}} - \epsilon_{n} \\
& \gtrsim \Vert f_{S}^{P} - f_{0,S} \Vert_{2,\mathbb{P}_{\mathbf{X}}}  - \epsilon_{n}\nonumber \\
& \gtrsim \Vert f_{S} - f_{0,S} \Vert_{2,\mathbb{P}_{\mathbf{X}}}  -2 \epsilon_{n}\nonumber
\end{align*}
To sum up, we conclude that
\begin{align*}
\pi_{\xi}\big\{ f \in \mathcal{F}_{\xi}^{n}: \Vert f_{S}-f_{0,S} \Vert_{2,\mathbb{P}_{\mathbf{X}}}  > B_{n}\epsilon_{n} \big| \mathbf{X}^{(n)},Y^{(n)}\big\} \xrightarrow{} 0,
\end{align*}
for any $B_{n} \xrightarrow{} \infty$ in $\mathbb{P}_{0}^{n}$ as $n \xrightarrow{} \infty$.
\qed

\subsection{Proof of (\ref{eq:step3})}
In the same manner as in (\ref{eq:step1}), using Lemma \ref{gyorfi2002distribution} with $\mathcal{G} = \{ g : g=(f_{0,S} -f_{S})^{2} , f \in \mathcal{F}_{\xi,\text{general}}^{n} \}$, we can obtain the following :
$$ \pi_{\xi}\big\{ f \in \mathcal{F}_{\xi}^{n} : \Vert f_{S} - f_{0,S} \Vert_{2,n}  > B_{n}\epsilon_{n} \big| \mathbf{X}^{(n)},Y^{(n)}\big\} \xrightarrow{} 0$$
for any $B_{n} \xrightarrow{} \infty$ in $\mathbb{P}_{0}^{n}$ as $n \xrightarrow{} \infty$.

\subsection{Proof of (\ref{eq:step4})}
For given $\mathbf{x}^{(n)} \in A_{n}$, where $A_{n}$ is defined in Section \ref{app:A_{n}} of Supplementary Material, we have
\begin{align*}
{\pi_{\xi}\{\mathcal{F}_{\xi} \backslash \mathcal{F}_{\xi}^{n}\} \over \pi_{\xi} \{\mathbb{B}_{n} \} } &\leq \exp \left (-2n\epsilon_{n}^{2}\right)
\end{align*}
since Condition (\ref{eq2-1}) and Condition (\ref{eq3-1}) hold.
Thus, Lemma 1 of \cite{nonp1} implies that for any $\mathbf{x}^{(n)} \in A_{n}$ and an arbitrary $\delta > 0$, 
\begin{align*}
\lim_{n \to \infty}\mathbb{P}_{Y^{(n)}}\bigg \{  \pi_{\xi}\{ \mathcal{F}_{\xi} \backslash \mathcal{F}_{\xi}^{n} | \mathbf{X}^{(n)},Y^{(n)}\} > \delta   \bigg| \mathbf{X}^{(n)} = \mathbf{x}^{(n)} \bigg \}=0.
\end{align*}
Since $\mathbb{P}_{\mathbf{X}}^{n}(A_{n})\xrightarrow{} 1$, we have
$$
\lim_{n \to \infty}\mathbb{P}_{0}^{n}\bigg \{ \pi_{\xi}\{ \mathcal{F}_{\xi} \backslash \mathcal{F}_{\xi}^{n} | \mathbf{X}^{(n)},Y^{(n)}\} > \delta \bigg \}=0,
$$
for any $\delta > 0$, which completes the proof of (\ref{eq:step4}).
\qed 

\newpage

\numberwithin{equation}{section}
\section{Proof of Theorem \ref{lemma5}}
\label{app:approximation}
\renewcommand{\theequation}{E.\arabic{equation}}
\renewcommand{\thefigure}{E.\arabic{figure}}
The proof consists of three steps.
In the first step, we approximate $f_{0,S}$ by a specially designed multinary-product tree called
{\it the equal probability-product tree}, which is populationally identifiable.
The second step is to approximate the equal probability-product tree
by an identifiable multinary-product tree using Theorem \ref{thm:multi_po_em_approx2}.
The third step is to transfer the identifiable multinary-product tree obtained in the second step into a sum of identifiable binary-product trees whose heights are bounded.

\subsection{Approximation of $f_{0,S}$ by the equal probability-product tree}
\label{app:approximation_eq_p}

We first approximate $f_{0,S}$ by a specially designed populationally identifiable multinary-product tree so called the EP-product (equal probability-product)  tree.
For a positive integer $r,$ let $q_{j,\ell}, \ell=1,\ldots,r$ be the
$\ell \cdot 100/r \%$ quantiles of $\mathbb{P}_{j},$ the distribution of $X_j.$
Let $\mathcal{P}_{r,j}^{\text{EP}}$ be the partition of $\mathcal{X}_j$ consisting of
$(q_{j,(\ell-1)}, q_{j,\ell}], \ell=1,\ldots,r$ with $q_{j,0}=0,$
which we call the equal-probability partition of size $r.$
Then, the EP-product tree with the parameters $S, \mathcal{P}_{r}^{\text{EP}}=\prod_{j\in S} \mathcal{P}^{\text{EP}}_{r,j}$ and
$\gamma=\{\gamma_{\pmb{\ell}}\in \mathbb{R}, \pmb{\ell}\in \{1,\ldots,r\}^{|S|}\}$ is defined as
$$f_{S,\mathcal{P}_{r}^{\text{EP}}, \gamma} (\mathbf{x}_{S})=
\sum_{\pmb{\ell}} \gamma_{\pmb{\ell}} \mathbb{I}(\mathbf{x}_S\in I_{\pmb{\ell}}).$$ 
That is, the EP-product tree is a multinary-product tree defined on the product partition
$\mathcal{P}_{r}^{\text{EP}}.$
The next lemma is about the approximation of a smooth function by an EP-product tree.

\begin{lemma}\label{le:approx}
Define
$$\gamma_{\pmb{\ell}}= 
\frac{1}{\mathbb{P}_{S}^{\text{ind}}(I_{\pmb{\ell}})}
\int_{I_{\pmb{\ell}}} f_{0,S}(\mathbf{x}_S) \mathbb{P}_{S}^{\text{ind}}(d\mathbf{x}_S)$$
for all $\pmb{\ell}\in [r]^{|S|}$. 
Then, $f_{S,\mathcal{P}_{r}^{\text{EP}}, \gamma}$ is populationally identifiable and it satisfies the following error bound:
$$\sup_{\mathbf{x}_{S} \in \mathcal{X}_S} |f_{0,S}(\mathbf{x}_{S}) - f_{S,\mathcal{P}_{r}^{\text{EP}}, \gamma} (\mathbf{x}_{S})|
 \le \|f_{0,S}\|_{\mathcal{H}^{\alpha}} \left(\frac{\sqrt{|S|}}{r p_L}\right)^{\alpha},$$
where $p_L=\inf_{\mathbf{x}\in \mathcal{X}} p_{\mathbf{X}}(\mathbf{x}).$ 
\end{lemma}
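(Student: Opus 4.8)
The plan is to establish the two assertions of the lemma separately: first that the EP-product tree $f_{S,\mathcal{P}_{r}^{\text{EP}},\gamma}$ is populationally identifiable, and then the uniform approximation bound. Both boil down to elementary computations once the product structure of $\mathbb{P}^{\text{ind}}_{S}$ and the equal-probability construction are exploited.

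For the identifiability part, I would fix $j\in S$ and a configuration $\pmb{\ell}_{j^c}=(\ell_k,k\in S\setminus\{j\})$ of the remaining cell indices and verify the defining relation $\sum_{\ell_j=1}^{r}\gamma_{(\pmb{\ell}_{j^c},\ell_j)}\,\mathbb{P}_{j}\{I_{j,\ell_j}\}=0$, which is exactly the $\mathbb{P}_{\mathbf{X}}$-identifiability condition since the $j$-th marginal of $\mathbb{P}^{\text{ind}}_{S}$ is $\mathbb{P}_{j}$. Substituting the definition of $\gamma_{\pmb{\ell}}$ and using the factorisation $\mathbb{P}^{\text{ind}}_{S}\{I_{\pmb{\ell}}\}=\mathbb{P}_{j}\{I_{j,\ell_j}\}\,\mathbb{P}^{\text{ind}}_{S\setminus\{j\}}\{I_{\pmb{\ell}_{j^c}}\}$ cancels the weight $\mathbb{P}_{j}\{I_{j,\ell_j}\}$; then, because the intervals $I_{j,1},\dots,I_{j,r}$ partition $\mathcal{X}_j=[0,1]$, the sum $\sum_{\ell_j}\int_{I_{\pmb{\ell}}}$ becomes $\int_{I_{\pmb{\ell}_{j^c}}\times\mathcal{X}_j}$, and by Fubini the inner integral $\int_{\mathcal{X}_j}f_{0,S}(\mathbf{x}_S)\,\mathbb{P}_{j}(dx_j)$ vanishes since $f_{0,S}$ is itself populationally identifiable. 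This closes the first part.

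For the error bound, take any $\mathbf{x}_S\in\mathcal{X}_S$ and let $I_{\pmb{\ell}}$ be the cell of $\mathcal{P}_{r}^{\text{EP}}$ containing it, so that $f_{S,\mathcal{P}_{r}^{\text{EP}},\gamma}(\mathbf{x}_S)=\gamma_{\pmb{\ell}}$ is the $\mathbb{P}^{\text{ind}}_{S}$-average of $f_{0,S}$ over $I_{\pmb{\ell}}$. Writing $f_{0,S}(\mathbf{x}_S)-\gamma_{\pmb{\ell}}=\mathbb{P}^{\text{ind}}_{S}\{I_{\pmb{\ell}}\}^{-1}\int_{I_{\pmb{\ell}}}\big(f_{0,S}(\mathbf{x}_S)-f_{0,S}(\mathbf{y}_S)\big)\,\mathbb{P}^{\text{ind}}_{S}(d\mathbf{y}_S)$ and bounding the integrand by $\|f_{0,S}\|_{\mathcal{H}^{\alpha}}\|\mathbf{x}_S-\mathbf{y}_S\|_{2}^{\alpha}\le\|f_{0,S}\|_{\mathcal{H}^{\alpha}}(\operatorname{diam}I_{\pmb{\ell}})^{\alpha}$ reduces everything to controlling the diameter of a cell. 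Each side $I_{j,\ell_j}$ is an interval of $\mathbb{P}_{j}$-mass $1/r$; since the marginal density obeys $p_{X_j}\ge p_L$ on $[0,1]$ (integrate Assumption \ref{eq:Assumption_1} over the remaining coordinates), its length is at most $1/(rp_L)$, whence $\operatorname{diam}I_{\pmb{\ell}}\le\sqrt{|S|}/(rp_L)$, and substitution yields the claimed bound.

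All of these steps are routine; the only one demanding care is the identifiability verification, where the cancellation of $\mathbb{P}_{j}\{I_{j,\ell_j}\}$ and the reassembly of the sum over $\ell_j$ into an integral over $\mathcal{X}_j$ must be tracked precisely, and where it is essential to invoke that $f_{0,S}$ (not the tree) is populationally identifiable — this is exactly what makes naive cell-averaging already produce an identifiable tree, with no further correction of the type described in Section \ref{app:transformation} needed.
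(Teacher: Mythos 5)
Your proposal is correct and follows essentially the same route as the paper: the error bound via writing $f_{0,S}(\mathbf{x}_S)-\gamma_{\pmb{\ell}}$ as a cell average of $f_{0,S}(\mathbf{x}_S)-f_{0,S}(\mathbf{x}'_S)$, bounding by the H\"older seminorm times the cell diameter and using the marginal-density lower bound $p_{X_j}\ge p_L$ to get side lengths at most $1/(rp_L)$; and identifiability by cancelling $\mathbb{P}_j\{I_{j,\ell_j}\}$ through the product structure of $\mathbb{P}^{\text{ind}}_S$, reassembling the sum over $\ell_j$ into an integral over $\mathcal{X}_j$ and invoking the populational identifiability of $f_{0,S}$ (your coefficient-wise formulation of the condition is just a rewriting of the paper's pointwise-integral verification). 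No gaps.
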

\medskip

\begin{proof}
For $\mathbf{x}_{S}\in I_{\pmb{\ell}},$
\begin{align*}
 & |f_{0,S}(\mathbf{x}_{S})-f_{S,\mathcal{P}_{r}^{EP},\gamma}(\mathbf{x}_{S})|\\
=& \left| f_{0,S}(\mathbf{x}_{S}) - \frac{1}{\mathbb{P}_{S}^{\text{ind}}(I_{\pmb{\ell}})} \int_{I_{\pmb{\ell}}} f_{0,S}(\mathbf{x}'_{S}) \mathbb{P}_{S}^{\text{ind}}(d\mathbf{x}'_{S})\right|\\
\leq&   \frac{1}{\mathbb{P}_{S}^{\text{ind}}(I_{\pmb{\ell}})} \int_{I_{\pmb{\ell}}} |f_{0,S}(\mathbf{x}_{S})-f_{0,S}(\mathbf{x}'_{S}) |\mathbb{P}_{S}^{\text{ind}}(d\mathbf{x}'_{S})\\ 
 \leq& \frac{1}{\mathbb{P}_{S}^{\text{ind}}(I_{\pmb{\ell}})} \|f_{0,S}\|_{\mathcal{H}^{\alpha}} \int_{I_{\pmb{\ell}}} \|\mathbf{x}_{S}-\mathbf{x}'_{S}\|_2^{\alpha} \mathbb{P}_{S}^{\text{ind}}(d\mathbf{x}'_{S})\\
\leq& \|f_{0,S}\|_{\mathcal{H}^{\alpha}} \sup_{\mathbf{x}'_{S}\in I_{\pmb{\ell}}} \|\mathbf{x}_{S}-\mathbf{x}'_{S}\|_2^{\alpha},
\end{align*}
where $\Vert \cdot \Vert_{2}$ is the Euclidean norm for a vector, i.e., for a given vector $\mathbf{e}=(e_{1},...,e_{n})$, $\Vert \mathbf{e} \Vert_{2} = \sqrt{\sum_{i=1}^{n}e_{i}^{2}}.$
Since $I_{\pmb{\ell}}$ is a $|S|$-dimensional hyper-cube whose side lengths are less than $1/(r p_L),$ we have
$\|\mathbf{x}_{S}-\mathbf{x}'_{S}\|_2 \le \sqrt{|S|}/(r p_L),$
which completes the proof of the approximation error bound.

For the populational identifiability condition, let $\mathbf{x}_{-j}=(x_\ell, \ell\in S, \ell \ne j)$ for a given $\mathbf{x}_{S}$ and let
$\pmb{\ell}_{-j}$ be the index in $\{1,\ldots,r\}^{|S|-1}$ such that $\mathbf{x}_{-j}\in I_{\pmb{\ell}_{-j}}.$
We will show $$
\int_{\mathcal{X}_{j}} f_{S,\mathcal{P}_{r}^{EP},\gamma}(\mathbf{x}_{S}) \mathbb{P}_{j}(dx_{j})=0
$$
for all $\mathbf{x}_{-j}\in \mathcal{X}_{-j}$ and $j\in S.$
For a given $k\in \{1,\ldots,r\},$ let $\pmb{\ell}(k) \in \{1,\ldots,r\}^{|S|}$ be an index defined as
$\pmb{\ell}(k)_{-j}=\pmb{\ell}_{-j}$ and $\pmb{\ell}(k)_j=k.$ 
Then, 
\begin{align*}
  &   \int_{\mathcal{X}_{j}}f_{S,\mathcal{P}_{r}^{EP},\gamma}(\mathbf{x}_{S}) \mathbb{P}_{j}(dx_{j})\\
=& \sum_{k=1}^r \frac{1}{\mathbb{P}_{S}^{\text{ind}}(I_{\pmb{\ell}(k)})} \int_{I_{\pmb{\ell}(k)}} 
f_{0,S}(\mathbf{x}'_{S}) \mathbb{P}_{S}^{\text{ind}}(d\mathbf{x}'_{S})\mathbb{P}_j(I_{j,k}) \\
= & \frac{1}{\mathbb{P}_{S\backslash \{j\} }^{\text{ind}}(I_{\pmb{\ell}_{-j}})} \sum_{k=1}^r \int_{I_{\pmb{\ell}(k)}} 
f_{0,S}(\mathbf{x}'_{S})\mathbb{P}_S^{\text{ind}}(d\mathbf{x}'_{S})\\
=& \frac{1}{\mathbb{P}_{S\backslash \{j\} }^{\text{ind}}(I_{\pmb{\ell}_{-j}})} \int_{I_{\pmb{\ell}_{-j}}}\int_{\mathcal{X}_j}
f_{0,S}(\mathbf{x}'_{S}) \mathbb{P}_j(dx'_j) \mathbb{P}_{S\backslash \{j\} }^{\text{ind}}(d\mathbf{x}'_{-j})\\
=& 0,
\end{align*}
since $\int_{\mathcal{X}_j}
f_{0,S}(\mathbf{x}'_{S}) \mathbb{P}_j(dx'_j)=0$ for all $\mathbf{x}'_{-j}$ by the populational identifiability of $f_{0,S}.$
\end{proof}

\subsection{Approximation of the EP-product tree by an identifiable multinary-product tree}
\label{app:approximation_pop_EQ}

Since the EP-product tree $f_{S,\mathcal{P}_{r}^{EP},\gamma}$ in Section \ref{app:approximation_eq_p} is a populationally identifiable multinary-product tree, we can apply Theorem \ref{thm:multi_po_em_approx2} to obtain an identifiable multinary-product tree $f_{S,\mathcal{P}_{r}^{EP},\hat{\gamma}}$ such that
\begin{align*}
\sup_{\mathbf{x}_{S}}|f_{S,\mathcal{P}_{r}^{EP},\gamma}(\mathbf{x}_{S}) - f_{S,\mathcal{P}_{r}^{EP},\hat{\gamma}}(\mathbf{x}_{S}) | \leq (24F+1)2^{|S|}\sqrt{r^{|S|}\log n \over n}
\end{align*}
with probability at least $1- 2^{|S|+1}|S|r^{|S|^{2}}\exp\left(-{r^{|S|}\log n \over 4F^{2}}\right)$ under $\mathbb{P}_{\mathbf{X}}^{n}$.

To sum up, we have shown that the modified EP tree $f_{S,\mathcal{P}_{r}^{EP},\hat{\gamma}}$ approximates $f_{0,S}$ well in the sense that
\begin{align*}
    \sup_{\mathbf{x}}|f_{0,S}(\mathbf{x})-f_{S,\mathcal{P}_{r}^{EP},\hat{\gamma}}(\mathbf{x})| &\leq \sup_{\mathbf{x}} |f_{0,S}(\mathbf{x})-f_{S,\mathcal{P}_{r}^{EP},\gamma}(\mathbf{x})| + \sup_{\mathbf{x}} |f_{S,\mathcal{P}_{r}^{EP},\gamma}(\mathbf{x})-f_{S,\mathcal{P}_{r}^{EP},\hat{\gamma}}(\mathbf{x})| \\
    &\leq \Vert f_{0,S}\Vert_{\mathcal{H}^{\alpha}}\bigg({C_{S}\over r}\bigg)^{\alpha} + (24F+1)2^{|S|} \sqrt{r^{|S|}\log n\over n}.
\end{align*}
with probability at least
$1-2^{|S|+1}|S|r^{|S|^{2}}\exp\left(-{r^{|S|}\log n \over 4F^{2}}\right)$ with resepect to $\mathbb{P}^{n}_{\mathbf{X}}$, where $C_{S}=\sqrt{|S|} / p_{L}$.

\subsection{Decomposition of the modified EP-product tree by a sum of
identifiable binary-product trees with bounded heights}
\label{app:approximation_emp_decompose}

The final step is to decompose the modified EP-product tree which satisfies the identifiability condition into the sum of identifiable binary-product trees with bounded heights.

\subsubsection{Notations}

Let $\mathcal{P}_j$ be an interval partition of $\mathcal{X}_j$ with $|\mathcal{P}_j|=\phi_j.$
That is, $\mathcal{P}_j=\{I_{j,k}, k=1,\ldots,\phi_j\},$ where
$I_{j,k}$s are disjoint intervals of $\mathcal{X}_j$ with $\bigcup_k I_{j,k}=\mathcal{X}_j.$
Without loss of generality, we assume that the intervals are ordered such that $I_{j,k}<I_{j,k'}$ whenever $k<k'$.
Here $I_{j,k}<I_{j,k'}$ means that for any $z\in I_{j,k}$ and $z'\in I_{j,k'}$, we have $z<z'$. 
That is, the indices of the intervals in each interval partition are sorted from left to right.

For $S \subseteq [p]$, let $f$ be an identifiable multinary-product tree defined on the partitions $\mathcal{P}_j, j\in S$ given as 
$$f(\mathbf{x}_{S})=\sum_{\pmb{\ell}} \gamma_{\pmb{\ell}} \prod_{j\in S} 
\mathbb{I}(x_j\in I_{j,\ell_j}),$$ where $\gamma_{\pmb{\ell}}$ is the height vector for $\pmb{\ell}\in \prod_{j\in S} [\phi_j].$
We introduce several notations related to $f.$

\begin{itemize}
    \item ${\rm part}(f)_j=\mathcal{P}_j$: interval partition of $\mathcal{X}_j$.
    \item ${\rm order}(f)_j= |{\rm part}(f)_j| (=\phi_j),$ ${\rm order}(f)=(\phi_j, j\in S).$
    \item ${\rm index}(f)=\prod_{j\in S} [\phi_j]$: the set of indices for the product partition
    $\prod_{j\in S} \mathcal{P}_j.$
     \item $\pmb{\gamma} = (\gamma_{\pmb{\ell}} , \pmb{\ell} \in \text{index}(f))$.
    \item For given $\pmb{\ell}\in {\rm index}(f)$ and $\ell\in [\phi_j],$
    let $\pmb{\ell}_{+(j,\ell)}$ be the element in ${\rm index}(f)$  obtained by replacing $\ell$
    in the $j$th position of $\pmb{\ell}.$ That is
    $\pmb{\ell}_{+(j,\ell)}=(\ell_1,\ldots,\ell_{j-1}, \ell, \ell_{j+1},\ldots, \ell_{|S|}).$
\end{itemize}

\begin{figure}[h]
    \centering
    \includegraphics[width=0.9\linewidth]{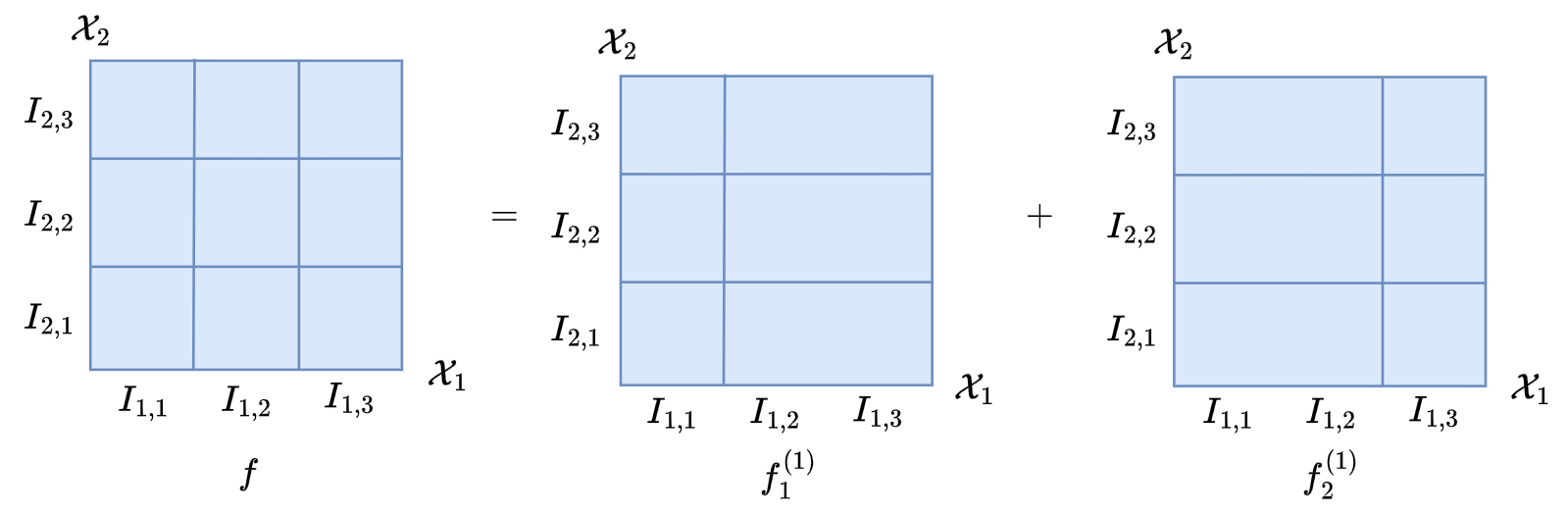}
    \caption{Example of decomposing the partition of $f$ into the partitions of $f_{1}^{(1)}$ and $f_{2}^{(1)}$.}
    \label{fig:decomposing}
\end{figure}

\subsubsection{Proof of the decomposition}

The strategy of the decomposition is to decompose a given identifiable multinary-product tree into the sum of two sibling 
identifiable multinary-product trees whose orders are smaller than their parent identifiable multinary-product tree.
Figure \ref{fig:decomposing} presents an example of decomposing the partition of an identifiable multinary-product tree $f$ for $S=\{1,2\}$ and $\phi_{1}=\phi_{2}=3$ into two partitions of child identifiable multinary-product trees $f_{1}^{(1)}$ and $f_{2}^{(1)}$, where $\text{order}(f_{1})_{1}=2$, $\text{order}(f_{1})_{2}=3$, $\text{order}(f_{2})_{1}=2$, and $\text{order}(f_{2})_{2}=3$.  
We repeat this decomposition until the orders of all identifiable multinary-product trees become 2.
Figure \ref{fig:decomposing_binary} presents an example of decomposing the partition of a identifiable multinary-product tree $f$ into four partitions of identifiable binary-product trees $f_{1,1}^{(1,2)},f_{1,2}^{(1,2)},f_{2,1}^{(1,2)}$, and $f_{2,2}^{(1,2)}$, where $\text{order}(f_{i,j}^{(1,2)})_{k}=2$ for $i=1,2$, $k=1,2$ and $j=1,2$. 
The following lemma is the key tool whose proof is given in Section \ref{proof:decompose_1} and \ref{proof:decompose_S}.

\begin{lemma} \label{le:decom}
Let $f$ be an identifiable multinary-product tree of $\bold{x}_S$.
Suppose that there exists $h\in S$ such that $\text{order}(f)_{h}>2.$
Then, there exist $\text{order}(f)_{h}-1$ many identifiable 
multinary-product trees $f_j, j=1,\ldots,\phi_h-1$ such that
$$f(\cdot)= f_{1}(\cdot)+\cdots+ f_{\phi_h-1}(\cdot),$$
where ${\rm order}(f_{j})_k={\rm order}(f)_k$ for all $j \in [\phi_h-1]$ when $k\ne h$
and ${\rm order}(f_{j})_h=2$ for all $j\in [\phi_h-1].$
Moreover, $\sup_j \|f_j\|_\infty \le 2 \|f\|_\infty.$
\end{lemma}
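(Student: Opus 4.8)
The plan is to peel off a single coordinate $h\in S$ with $\mathrm{order}(f)_h=\phi_h>2$ by splitting its (ordered) interval partition $\mathcal{P}_h=\{I_{h,1},\dots,I_{h,\phi_h}\}$ at each of its $\phi_h-1$ internal breakpoints. For $j=1,\dots,\phi_h-1$ set $L_j:=I_{h,1}\cup\cdots\cup I_{h,j}$ and $R_j:=I_{h,j+1}\cup\cdots\cup I_{h,\phi_h}$; since the $I_{h,k}$ are ordered intervals, $\{L_j,R_j\}$ is again an interval partition of $\mathcal{X}_h$, so each $f_j$ is genuinely a multinary-product tree once we declare it to use $\{L_j,R_j\}$ in coordinate $h$ and the original $\mathcal{P}_k$ for $k\in S\setminus\{h\}$. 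This makes $\mathrm{order}(f_j)_h=2$ and $\mathrm{order}(f_j)_k=\mathrm{order}(f)_k$ for $k\ne h$ automatic, and the real content is to choose the heights of the $f_j$ so that they sum to $f$, each $f_j$ is identifiable, and $\sup_j\Vert f_j\Vert_\infty\le 2\Vert f\Vert_\infty$.

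Everything reduces to a one-dimensional fact. Write $f(\mathbf{x}_S)=\sum_{\pmb{\ell}}\gamma_{\pmb{\ell}}\prod_{k\in S}\mathbb{I}(x_k\in I_{k,\ell_k})$, let $\nu_h$ be the $h$-marginal of the product measure defining the identifiability condition (in our application $\mu_{n,h}$), put $w_k:=\nu_h(I_{h,k})$ and $W_m:=w_1+\cdots+w_m\in(0,1)$. Fixing the transverse multi-index $\pmb{\ell}_{-h}$, the $h$-identifiability of $f$ says the slice $g:=(\gamma_{\pmb{\ell}_{-h},1},\dots,\gamma_{\pmb{\ell}_{-h},\phi_h})$ satisfies $\sum_k g_kw_k=0$. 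The claim I would isolate is that any such $g$ has the representation $g_k=\sum_{m\ge k}a_m+\sum_{m<k}b_m$ (sums over $m\in\{1,\dots,\phi_h-1\}$), where
\[
a_m:=(1-W_m)(g_m-g_{m+1}),\qquad b_m:=-\,W_m(g_m-g_{m+1}).
\]
The telescoping identity for $k<\phi_h$ is immediate from $a_k-b_k=g_k-g_{k+1}$, and the endpoint $k=\phi_h$ (equivalently $\sum_m b_m=g_{\phi_h}$) follows from an Abel summation that uses exactly $\sum_kg_kw_k=0$. By construction $a_mW_m+b_m(1-W_m)=0$, and $|a_m|,|b_m|\le|g_m|+|g_{m+1}|\le 2\max_k|g_k|$.

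Then I would assemble: for each $\pmb{\ell}_{-h}$ take $a_m(\pmb{\ell}_{-h}),b_m(\pmb{\ell}_{-h})$ from its slice as above and set
\[
f_m(\mathbf{x}_S):=\sum_{\pmb{\ell}_{-h}}\Big(a_m(\pmb{\ell}_{-h})\,\mathbb{I}(x_h\in L_m)+b_m(\pmb{\ell}_{-h})\,\mathbb{I}(x_h\in R_m)\Big)\prod_{k\in S\setminus\{h\}}\mathbb{I}(x_k\in I_{k,\ell_k}).
\]
Summing over $m$ and evaluating at any $\mathbf{x}_S$ recovers $\gamma_{\pmb{\ell}}$ by the one-dimensional identity, so $f=f_1+\cdots+f_{\phi_h-1}$; the height bound follows from $\Vert f\Vert_\infty=\max_{\pmb{\ell}}|\gamma_{\pmb{\ell}}|$ and the slice bound. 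Identifiability of $f_m$ along coordinate $h$ is the constraint $a_mW_m+b_m(1-W_m)=0$ built into the definition. Identifiability along a coordinate $k\ne h$ holds because $a_m(\pmb{\ell}_{-h})=(1-W_m)(\gamma_{\pmb{\ell}_{-h},m}-\gamma_{\pmb{\ell}_{-h},m+1})$ is a fixed linear combination of heights of $f$ taken along the $h$-slices $\ell_h=m$ and $\ell_h=m+1$, so summing $a_m(\pmb{\ell}_{-h})$ against $\nu_k(I_{k,\ell_k})$ over $\ell_k$ annihilates it by the $k$-identifiability of $f$, and similarly for $b_m(\pmb{\ell}_{-h})$. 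The case $|S|=1$ is the one-dimensional claim with no transverse coordinates (Section \ref{proof:decompose_1}); the case $|S|>1$ is the slicewise construction above (Section \ref{proof:decompose_S}).

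I expect the main obstacle to be bookkeeping rather than a conceptual hurdle: checking that the telescoping decomposition of a slice actually closes up — the endpoint $k=\phi_h$ is the single place where $\sum_k g_kw_k=0$ is consumed — and then verifying that the reassembled trees $f_m$ satisfy the identifiability condition in all coordinates simultaneously. In particular one must see that coarsening coordinate $h$ to two cells does not disturb identifiability in the untouched coordinates, which is exactly why the slicewise coefficients $a_m,b_m$ have to be chosen as (scaled) differences of heights of $f$, so that they inherit identifiability transversally.
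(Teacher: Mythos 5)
Your proposal is correct and follows essentially the same route as the paper: peel off coordinate $h$ slicewise into $\phi_h-1$ binary splits at the cumulative breakpoints, with coefficients $a_m=(1-W_m)(g_m-g_{m+1})$, $b_m=-W_m(g_m-g_{m+1})$, which are exactly the values produced by the paper's recursive construction (the recursion preserves the successive differences $g_m-g_{m+1}$, so its coefficients collapse to your closed form), and the same argument that the $a_m,b_m$ are fixed linear combinations of $h$-slices of the original heights gives identifiability in the coordinates $k\neq h$. Your Abel-summation check of the endpoint, using $\sum_k g_k w_k=0$, replaces the paper's explicit verification that the recursion terminates with $\mathbf{u}^{(r-1)}=\mathbf{0}$, but the substance is the same.
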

\medskip

\begin{figure}[t]
    \centering
    \includegraphics[width=0.9\linewidth]{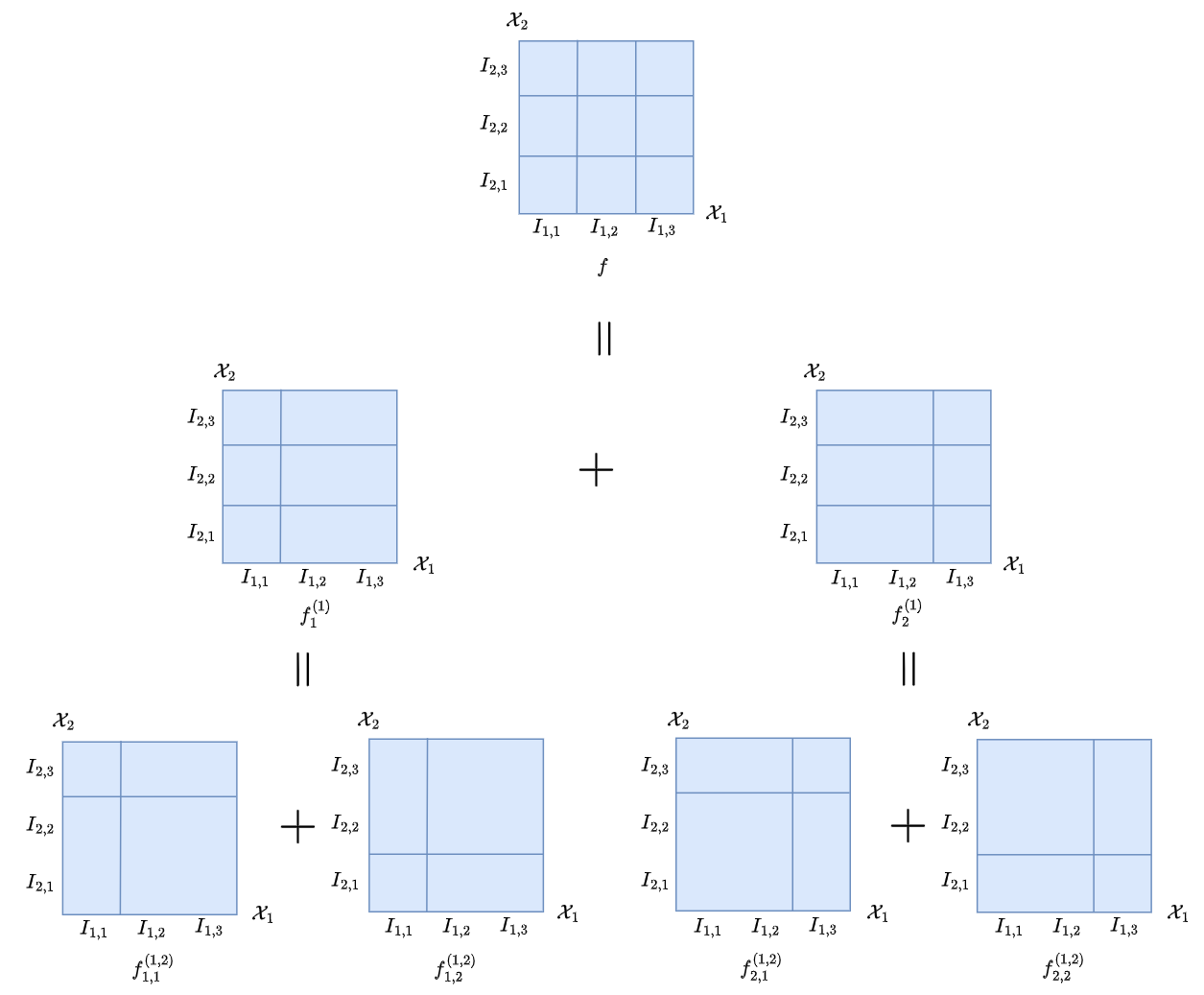}
    \caption{Example of decomposing partition of $f$ into the partitions of identifiable binary-product trees $f_{1,1}^{(1,2)}$, $f_{1,2}^{(1,2)}$, $f_{2,1}^{(1,2)}$, and $f_{2,2}^{(1,2)}$.}
    \label{fig:decomposing_binary}
\end{figure}

We will prove the decomposition by use of Lemma \ref{le:decom}.
Let $f$ be a given EP-product tree of the component $S$
with ${\rm order}(f)_j=r$ for all $j\in S,$  which we are going to decompose.
Without loss of generality, we let $S=\{1,2,\ldots,|S|\}.$
At first, we apply Lemma \ref{le:decom} to decompose
$f(\cdot)=\sum_{k_1=1}^{r-1} f_{k_1}^{(1)}(\cdot),$ where $f_{k_1}^{(1)}$s are identifiable multinary-product trees
such that $\text{order}(f_{k_1}^{(1)})_1 =2$ 
and $\text{order}(f_{k_1}^{(1)})_\ell = r, \ell\ge 2$ for all $k_1=1,\ldots,r-1$ 
and $\sup_{k_1} \|f_{k_1}^{(1)}\|_\infty \le 2 \|f\|_\infty.$ 

In turn, we can decompose each $f_{k_1}^{(1)}$ by a sum of identifiable multinary-product trees such that
$f_{k_1}^{(1)}(\cdot)=\sum_{k_2=1}^{m-1} f_{k_1,k_2}^{(1,2)}(\cdot),$
where $\text{order}(f_{k_1,k_2}^{(1,2)})_\ell=2, \ell=1,2$ and $\text{order}(f_{k_1,k_2}^{(1,2)})_\ell=r, l\ge 3$
and $\sup_{k_1,k_2} \|f_{k_1,k_2}^{(1,2)}\|_\infty \le 2^2 \|f\|_\infty.$ 
We repeat this decomposition to have $f_{k_1,\ldots,k_{|S|}}^{(1,\ldots,|S|)}(\cdot)$ such that
$$
f(\cdot)=\sum_{k_1}^{r-1} \cdots \sum_{k_{|S|}=1}^{r-1} f_{k_1,\ldots,k_{|S|}}^{(1,\ldots,|S|)} (\cdot) \quad \text{with} \quad
\text{order}(f_{k_1,\ldots,k_{|S|}}^{(1,\ldots,|S|)})_\ell=2$$
for all $\ell\in S$ and $\sup_{k_1,\ldots, k_{|S|}} \|f_{k_1,\ldots,k_{|S|}}^{(1,\ldots,|S|)}\|_\infty \le 2^{|S|} \|f\|_\infty,$
which completes the proof.

\qed

\subsubsection{Proof of Lemma \ref{le:decom} for $|S|=1$}

\label{proof:decompose_1}

For $j\in [r],$ let $\bold{1}_j$ be the $r$-dimensional vector such that
the first $j$ many entries are 1 and the others are 0. That is, $\mathbf{1}_j = ( \underbrace{1, 1, \dots, 1}_{j \text{ times}}, \underbrace{0, 0, \dots, 0}_{r-j \text{ times}} )^\top$.
Let $\bold{1}_j^c=\bold{1}_r-\bold{1}_j,$ that is, $\mathbf{1}_{j}^{c} = ( \underbrace{0, 0, \dots, 0}_{j \text{ times}}, \underbrace{1, 1, \dots, 1}_{r-j \text{ times}} )^\top$.

Let $\mathcal{S}^r=\{\bold{w}\in \mathbb{R}^r: w_j \ge 0 ,\forall j\in [r], \sum_{j=1}^r w_j=1\}.$
For a given $\bold{w}\in \mathcal{S}^r,$ let $\mathbb{R}^r_{\bold{w}}=\{\textbf{u} \in \mathbb{R}^r: \bold{w}^\top \textbf{u}=0\}.$
In addition, let $\bold{w}_{s:t}=\sum_{j=s}^t w_j.$
\medskip

\noindent{{\bf Claim:}} Fix $\bold{w}\in \mathcal{S}^r$.
For any $\textbf{u}\in \mathbb{R}_{\bold{w}}^r,$
there exist pairs of real numbers $(a_j,b_j), j=1,\ldots,r-1$ such that
\begin{align}
&\textbf{u}=\sum_{j=1}^{r-1}(a_j \bold{1}_j + b_j \bold{1}_j^c), \label{eq:decom_one_dim} \\ 
&\bold{w}^\top (a_j \bold{1}_j + b_j \bold{1}_j^c) =0
\label{eq:wi}
\end{align}
for all $j\in [r-1]$ and 
$$\max \{|a_j|,|b_j| : j\in [r-1]\} \le 2 \|\textbf{u}\|_\infty.$$
\medskip

\begin{figure}[h]
    \centering
    \includegraphics[width=0.8\linewidth]{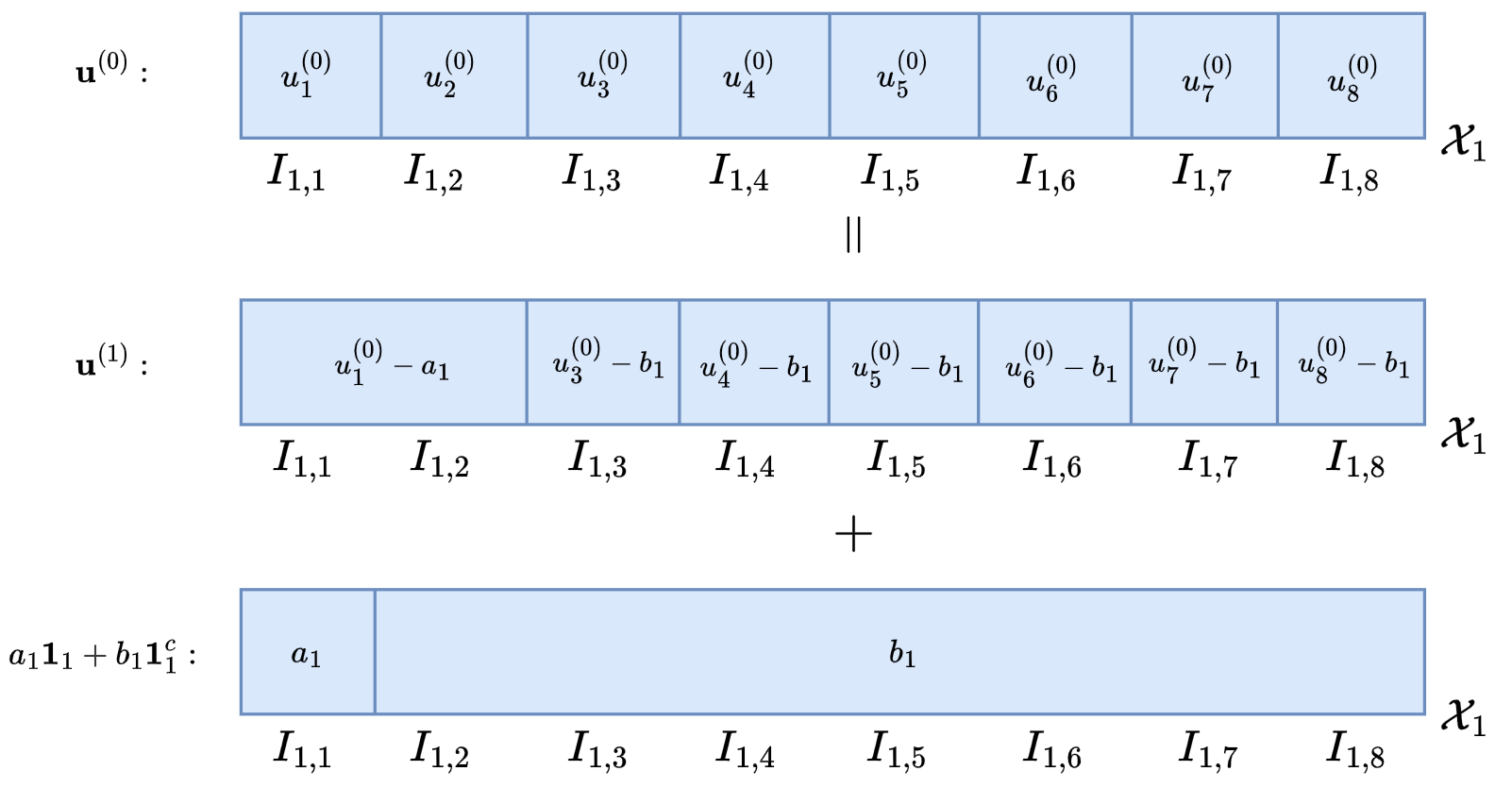}
    \caption{Decomposition of $\mathbf{u}^{(0)}$ into $\mathbf{u}^{(1)}$ and $(a_{1} \bold{1}_{1} + b_{1} \bold{1}_{1}^c)$ in the case of $r=8$.}
    \label{fig_decompse_example2}
\end{figure}

\begin{proof}
Let $\textbf{u}^{(0)}=\textbf{u}$ and we define $\textbf{u}^{(1)}$ as
$$\textbf{u}^{(1)}=\textbf{u}^{(0)}- (a_{1} \bold{1}_{1} + b_{1} \bold{1}_{1}^c),$$
where $a_{1}=(1-w_1)(u_1^{(0)}-u_{2}^{(0)})$ and $b_{1}=w_1 (u_{2}^{(0)}-u_1^{(0)}).$ 
It is easy to see that $\bold{w}^\top (a_{1} \bold{1}_{1} + b_{1} \bold{1}_{1}^c)=0$ and
thus $\bold{w}^\top \textbf{u}^{(1)}=0$. 
Moreover, we have
$u_{j}^{(1)}=u_{j}^{(0)}-b_{1}$ for $j=3,\ldots,r$ and $u_1^{(1)}=u_1^{(0)}-a_{1}.$ 
In addition, it can be shown that $u_1^{(1)}=u_2^{(1)}.$
Figure \ref{fig_decompse_example2} presents an example of this decomposition in the case of $r=8$.
$\newline$
$\newline$
For $k \in \{2,\ldots,r-2\},$ we define $\textbf{u}^{(k)}$ recursively as
\begin{equation}
\label{eq:decom1}
\textbf{u}^{(k)}:=\textbf{u}^{(k-1)}- (a_{k} \bold{1}_{k} + b_{k} \bold{1}_{k}^c),
\end{equation}
where  $a_{k}=(1-\bold{w}_{1:k})(u_{k}^{(k-1)}-u_{k+1}^{(k-1)})$ and
$b_{k}=\bold{w}_{1:k}(u_{k+1}^{(k-1)}-u_{k}^{(k-1)}).$ 
It is not difficult to see that $\bold{w}^\top (a_{k} \bold{1}_{k} + b_{k} \bold{1}_{k}^c) = 0$
(and thus $\bold{w}^\top \textbf{u}^{(k)}=0$) because $\bold{w}^\top \textbf{u}^{(k-1)}=0.$
In addition, we have $u_{1}^{(k-1)}=\cdots= u_{k}^{(k-1)}$ for $k \in [r-2]$. 
$\newline$
$\newline$
To complete the proof, we will show that $\textbf{u}^{(r-2)}$ can be expressed as 
$$
\textbf{u}^{(r-2)}=a_{r-1} \bold{1}_{r-1} + b_{r-1} \bold{1}_{r-1}^c,
$$
where $a_{r-1} = w_{r}(u_{r-1}^{(r-3)} - u_{r}^{(r-3)})$ and $b_{r-1} = (1-w_{r})(u_{r}^{(r-3)} - u_{r-1}^{(r-3)}).$
Note that, since $u_{1}^{(r-3)}=\cdots= u_{r-2}^{(r-3)}$ and $\mathbf{w}^{\top}\mathbf{u}^{(r-3)}=0$, it holds that
\begin{align}
\mathbf{w}_{1:r-2}u_{s}^{(r-3)} + w_{r-1}u_{r-1}^{(r-3)} + w_{r}u_{r}^{(r-3)} = 0 \label{eq:one_dim_decompose_condition}
\end{align}
for $s \in [r-2]$.
For $s \in [r-2]$, we have
\begin{align}
u_{s}^{(r-2)} &= u_{s}^{(r-3)} - a_{r-2} \nonumber\\
&= u_{s}^{(r-3)} - (w_{r-1}+w_{r})(u_{r-2}^{(r-3)}-u_{r-1}^{(r-3)}) \nonumber\\
&= w_{r}(u_{r-1}^{(r-3)} - u_{r}^{(r-3)}). \label{eq:one_dim_decomp_transformation1}
\end{align}
In turn, it follows that
\begin{align}
u_{r-1}^{(r-2)} &= u_{r-1}^{(r-3)} - b_{r-2} \nonumber\\
&= u_{r-1}^{(r-3)} - \mathbf{w}_{1:r-2}(u_{r-1}^{(r-3)} - u_{r-2}^{(r-3)} ) \nonumber\\
&= u_{r-1}^{(r-3)} - \mathbf{w}_{1:r-2}u_{r-1}^{(r-3)} - w_{r-1}u_{r-1}^{(r-3)} - w_{r}u_{r}^{(r-3)} \label{eq:one_dim_decomp_transformation2}\\
&= w_{r}( u_{r-1}^{(r-3)} -  u_{r}^{(r-3)}). \nonumber
\end{align}
and
\begin{align}
u_{r}^{(r-2)} &= u_{r}^{(r-3)} - b_{r-2} \nonumber\\
&= u_{r}^{(r-3)} - \mathbf{w}_{1:r-2}(u_{r-1}^{(r-3)} - u_{r-2}^{(r-3)} ) \nonumber\\
&= u_{r}^{(r-3)} - \mathbf{w}_{1:r-2}u_{r-1}^{(r-3)} - w_{r-1}u_{r-1}^{(r-3)} - w_{r}u_{r}^{(r-3)}  \label{eq:one_dim_decomp_transformation3}\\
&= (1-w_{r})(u_{r}^{(r-3)}-u_{r-1}^{(r-3)}). \nonumber
\end{align}
Here, (\ref{eq:one_dim_decomp_transformation1}), (\ref{eq:one_dim_decomp_transformation2}), and (\ref{eq:one_dim_decomp_transformation3}) are due to $u_{1}^{(r-3)}=\cdots= u_{r-2}^{(r-3)}$ and (\ref{eq:one_dim_decompose_condition}).
Thus, (\ref{eq:decom1}) holds with $\textbf{u}^{(r-1)}={\bf 0}.$
Hence, we complete the proof of the existence of the real numbers $(a_{j},b_{j}),j=1,...,r-1$, satisfies (\ref{eq:decom_one_dim}).

Now, we will prove the upper bound of $\max \{|a_j|,|b_j|: j\in [r-1]\}$.
For $k \in [r-2]$, we have
$u_{k+1}^{(k-1)}-u_k^{(k-1)}=u_{k+1}^{(0)}-u_k^{(0)}$ and thus
$|a_k|\le 2\|\textbf{u}\|_\infty$ and $|b_k| \leq 2\|\textbf{u}\|_{\infty}$ which completes the proof of \textbf{Claim}.
\end{proof}

Let $f$ be an identifiable multinary-product tree $f$ for $S=\{h\}$, i.e, $f(x_{h})=\sum_{j=1}^r \gamma_{j} \mathbb{I}(x_{h}\in I_{h,j}),$
where $I_{h,j}$s are an interval partition of $\mathcal{X}_{h}.$
We let $\mathbf{u}=(\gamma_{j}, j\in [r])$ and $\mathbf{w}=(\mu_{n,h}\{I_{h,j}\}, j\in [r])$ and apply \textbf{Claim} to have
$f(x_{h})=\sum_{j=1}^{r-1} f_{j}^{(h)}(x_{h}),$ where
each $f_{j}^{(h)}$ is an identifiable binary-product tree with the binary partition $\{ \cup_{\ell=1}^{j+1} I_{h,\ell}, \cup_{\ell=j+1}^r I_{h,\ell}\}$
and the height vector $(a_j,b_j).$

\subsubsection{Proof of Lemma \ref{le:decom} for general $S$}

\label{proof:decompose_S}
Fix $\pmb{\ell}\in {\rm index}(f)$ and $h \in S$.
Let $\textbf{u}_{\pmb{\ell},h}=(\gamma_{\pmb{\ell}_{+(h,j)}}, j \in [r] ).$
Applying \textbf{Claim} in Section \ref{proof:decompose_1} of Supplementary Material to $\textbf{u}_{\pmb{\ell},h}$ with $\bold{w}=(\mu_{n,h}\{I_{h,j}\}, j\in [r])$,
we have
$$\textbf{u}_{\pmb{\ell},h} = \sum_{j=1}^{r-1} (a_{j}^{(\pmb{\ell},h)} \bold{1}_j + 
b_{j}^{(\pmb{\ell},h)} \bold{1}_j^c).$$
Now, for $j \in [r-1]$, we define $f_{j}^{(h)}$ as the multinary-product tree with the interval partitions $\mathcal{P}_k^{(j)}, k\in S$ and the height vector $\pmb{\gamma}^{(j)}$
such that $\mathcal{P}^{(j)}_k=\mathcal{P}_k$ for $k\ne h$, $\mathcal{P}^{(j)}_h=\{ \cup_{m=1}^j I_{h,m}, \cup_{m=j+1}^{r} I_{h,m}\}$, $\gamma^{(j)}_{\pmb{\ell}_{+(h,1)}}=a_{j}^{(\pmb{\ell},h)}$, and $\gamma^{(j)}_{\pmb{\ell}_{+(h,2)}}=b_{j}^{(\pmb{\ell},h)}$ for $\pmb{\ell} \in \text{index}(f)$.
Since $|a_{j}^{(\pmb{\ell},h)}|\vee |b_{j}^{(\pmb{\ell},h)}| \le 2 \|\textbf{u}_{\pmb{\ell},h}\|_\infty \le 2 \|f\|_\infty,$ 
we have $\|f_{j}^{(h)}\|_\infty \le 2 \|f\|_\infty$
for $j=[r-1]$.

The final mission is to show that $f_{j}^{(h)}$s satisfy the identifiability condition, i.e.,
\begin{align*}
\int_{\mathcal{X}_k} f_{j}^{(h)}(\bold{x}_{S}) \mu_{n,k}(dx_{k}) = 0
\end{align*}
for $k \in S$.
First, $\int_{\mathcal{X}_k} f_{j}^{(h)}(\textbf{x}_{S}) \mu_{n,k}(dx_k)=0$ for $k=h$ by (\ref{eq:wi}).
For $k\ne h,$ the proof of \textbf{Claim} reveals that
there exists $r$-dimensional vectors $\mathbf{v}_1^{(j)}$ and $\mathbf{v}_2^{(j)}$  such that
$\gamma^{(j)}_{\pmb{\ell}_{+(h,m)}}=\mathbf{v}_m^{(j)\top} \gamma_{\pmb{\ell}_{+(h,\cdot)}},$
for $m=1,2,$ where $\gamma_{\pmb{\ell}_{+(h,\cdot)}}=(\gamma_{\pmb{\ell}_{+(h,j)}}, j\in [r]).$
Thus, for $m=1,2$, we have
\begin{align}
\mathbb{E}_{\mu_{n},k}(\gamma_{\pmb{\ell}_{+(h,m)}}^{(j)})=
\bold{v}_m^{(j)\top} \mathbb{E}_{\mu_{n},k}(\gamma_{\pmb{\ell}_{+(h,\cdot)}})=0    
\end{align}
since $\pmb{\gamma}$ satisfies the identifiability condition, where 
$$
\mathbb{E}_{\mu_{n},k}(\gamma_{\pmb{\ell}_{+(h,\cdot)}}) = \big(\mathbb{E}_{\mu_{n},k}(\gamma_{\pmb{\ell}_{+(h,1)}}),...,\mathbb{E}_{\mu_{n},k}(\gamma_{\pmb{\ell}_{+(h,r)}})\big)^{\top}$$
and $\mathbb{E}_{\mu_{n},k}(\cdot)$ is defined in (\ref{eq:marignal_expect_def}). 

In conclusion, the function $f$ can be decomposed as
\begin{align*}
f(\cdot) = \sum_{j=1}^{r-1}f_{j}^{(h)}(\cdot),
\end{align*}
where $f_{j}^{(h)}$ satisfies the identifiability condition and $\Vert f_{j}^{(h)} \Vert_{\infty} \leq 2\Vert f \Vert_{\infty}$ for $j=[r-1]$.
By applying \textbf{Claim} sequentially to $f_{j}^{(h)}$s for other variables $S\backslash \{h\}$, the proof is done. 

\qed

\newpage

\section{Discussion about the posterior concentration rate in the Gaussian regression model with unknown nuisance parameter $\sigma^{2}$}
\label{sec:discuss_unknown_sigma}
In this section, we discuss the posterior concentration rate of ANOVA-BART when the data distribution follows the Gaussian regression model with unknown variance $\sigma^{2}$.
That is, we consider the nonparametric regression model defined as
\begin{align}
Y = f(\mathbf{x}) + \epsilon,
\end{align}
where $\epsilon \sim N(0,\sigma^{2})$ and $f$ is a regression function.
Among the regularity Conditions \ref{eq:Assumption_1}-\ref{eq:Assumption_4}, we replace
Condition \ref{eq:Assumption_5} by
 Condition \ref{eq:assumption_gaussian} in place of Condition \ref{eq:Assumption_5} given as
\begin{enumerate}[label=(K.\hspace{-0.3em} \arabic*)]
    \item There exist positive constants $\sigma_{\text{min}}^{2}$ and $\sigma_{\text{max}}^{2}$ such that $\sigma_{\text{min}}^{2} < \sigma^{2} < \sigma_{\text{max}}^{2}$. \label{eq:assumption_gaussian}
\end{enumerate}
For $\xi > \max\{2^{p}F,\sigma_{\text{min}}^{-2},\sigma_{\text{max}}^{2}\}$, we let $\pi_{\xi}\{\cdot\} \propto \pi\{ \}\mathbb{I}(\Vert f \Vert_{\infty}\leq \xi, 1/\xi \leq \sigma^{2} \leq \xi)$.
Using Lemma 1 in \cite{lim2023synergizing} and Theorem 4 of \cite{nonp1}, to show the posterior concentration rate of ANOVA-BART in the Gaussian regression model, it suffices to verify the three conditions in (\ref{eq1-1}), (\ref{eq2-1}) and (\ref{eq3-1}), as in the proof of Theorem 2 in \cite{artbart}.
Since the only part that differs from the existing proof for exponential families
is Condition~(\ref{eq2-1}), we provide a brief explanation of the proof of
Condition~(\ref{eq2-1}) only.
$\newline$
Specifically, for Kullback-Leibler ball $\mathbb{B}_{n}$ with parameter $\theta = (f,\sigma^{2})$ and $\theta_{0} = (f_{0},\sigma_{0}^{2})$, direct calculation yields,
\begin{align}
\mathbb{B}_{n} \supseteq \{ \theta \in \mathcal{F}_{\xi} \times (1/\xi,\xi) : \Vert f - f_{0} \Vert_{2,n} \leq C_{\mathbb{B}^{*}}\epsilon_{n}, |\sigma^{2}-\sigma_{0}^{2}| \leq C_{\mathbb{B}^{*}}\epsilon_{n} \} 
\end{align}
for some positive constant $C_{\mathbb{B}^{*}}$.
Since
\begin{align}
\pi_{\xi}\{\mathbb{B}_{n}\} &\geq \pi_{\xi}\{ \theta \in \mathcal{F}_{\xi} \times (1/\xi,\xi) : \Vert f - f_{0} \Vert_{2,n} \leq C_{\mathbb{B}^{*}}\epsilon_{n}, |\sigma^{2}-\sigma_{0}^{2}| \leq C_{\mathbb{B}^{*}}\epsilon_{n} \} \\
&\geq \pi\{f : \Vert f - f_{0} \Vert_{\infty} \leq C_{\mathbb{B}^{*}}\epsilon_{n} \}\pi \{\sigma^{2} : |\sigma^{2} - \sigma^{2}_{0}| \leq C_{\mathbb{B}^{*}}\epsilon_{n}\}
\end{align}
and
\[
\pi\bigl\{\sigma^{2} : \lvert \sigma^{2} - \sigma_{0}^{2} \rvert \le \epsilon_{n} \bigr\}
\gtrsim n^{-1},
\]
Condition~(\ref{eq2-1}) is satisfied for any
$\xi > \max\bigl\{ 2^{p}F,\ \sigma_{\min}^{-2},\ \sigma_{\max}^{2} \bigr\}$.

\newpage

\section{Discussion about the posterior concentration rate for the $X$-fixed Design}
\renewcommand{\theequation}{F.\arabic{equation}}
\label{sec:fixed-x}

A key issue of ANOVA-BART for the $X$-fixed design is how to define $f_{0,S}$.
For a given $f_{0}$, Theorem \ref{thm:fANOVA_decomp} yields a unique ANOVA decomposition with respect to $\mu_{n}$, whose interactions are identifiable.
Recall that the proof of the posterior convergence rate for the $X$-random design consists of the two main parts:
(1) for $\mathbf{x}^{(n)}\in A_n,$ where $A_n$ is defined in (\ref{eq:An}),
the posterior convergence rate is $\epsilon_n$
and (2) the probability of $A_n$ converges to 1.
Thus, for the $X$-fixed design, the mission is to figure out sufficient conditions 
for $\mathbf{x}^{(n)}\in A_n$ with $f_{0,S}$ defined in Theorem \ref{thm:fANOVA_decomp} with respect to $\mu_{n}$. 

In order that $\mathbf{x}^{(n)}\in A_n,$ there exists a sum of identifiable binary-product trees
that approximates $f_{0,S}$ closely for each $S \in \mathbb{S}.$
In Section \ref{app:approximation_eq_p}, we show that the EP-tree, which satisfies the populational identifiability condition, approximates $f_{0,S}$ closely for the $X$-random design.
We can modify this proof to show that the empirical EP-tree, which satisfies the identifability condition, approximates $f_{0,S}$ closely
for the $X$-fixed design when there exists a positive constant $C_{2}$ such that, for all $r\in [n],$
$$
\max_{A \in \mathcal{P}_{r}^{nEP}}\text{Diam}(A) \leq C_{2}/ r,
$$ 
where ${\rm Diam}(A) := \max_{\mathbf{v},\mathbf{w} \in A} \|\mathbf{v}-\mathbf{w}\|_{2}$ and 
$\mathcal{P}_{r}^{nEP}$ denotes the equal-probability partition induced by the quantiles of the empirical distribution.
Then, by letting the modified EP-tree defined in Section \ref{app:approximation_pop_EQ} to be equal to the empirical EP-tree,
all the results in Section \ref{app:approximation_pop_EQ} to Section \ref{app:approximation_emp_decompose} are satisfied.
Thus, we conclude that
the posterior convergence rate is $\epsilon_n$ as long as 
$\max_{A \in \mathcal{P}_{r}^{nEP}}\text{Diam}(A) \leq C_{2}/ r$
for all $r\in [n].$

\newpage

\section{Details of the experiments} \label{app:experiment}
In this section, we provide details of the experiments including hyperparameter selections.

\subsection{Data standardization}
The minimax scaling is applied to NAM, while the mean-variance standardization is used for ANOVA-BART,
BART, SSANOVA and MARS. 
In addition, the one-hot encoding is applied to 
categorical covariates.

\subsection{Hyperparameter selection}

\subsubsection{Experiments for prediction performance}

For each method, we select the hyperparameters among the candidates based on 5-fold cross-validation.
The candidate hyperparameters for each method are given as follows.

\begin{itemize}
  \item ANOVA-BART
    \begin{itemize}
      \item $\alpha_{\text{split}}$ and $\gamma_{\text{split}}$ in (\ref{split_prior})  are set to 0.95 and 2, respectively, following \cite{BART}.
      \vskip 0.2cm
      \item $v =\{1,3,5,7,9\}$
      \vskip 0.2cm
      \item For $\lambda,$ we reparameterize it to $q_\lambda,$ where
      $q_\lambda=\pi(\sigma^{2} \leq \hat{\sigma}^{2})$ and  $\hat{\sigma}^{2}$ is
      the variance of the residuals from a linear regression model estimated by the least square method.
      For the candidates of $q_\lambda,$ we set $q_\lambda = \{0.90, 0.95, 0.99\}.$ This approach is used in \cite{BART}.
      \vskip 0.2cm
        \item For \textsc{Madelon} data set, we set the burn-in iterations to 2,000 and the sampling iterations to 2,000.
        For all other data sets, we set the burn-in iterations to 1,000 and the sampling iterations to 1,000.
      \vskip 0.2cm
      \item We set the step size for height proposal distribution as $0.01$. 
      \vskip 0.2cm
          \item $T_{\max} = \{50, 100, 200, 300\}$
          \vskip 0.2cm
          \item $C_{*} = \{10^{-6}, 10^{-4}, 10^{-2}, 1\}$
          \vskip 0.2cm
           \item $\sigma_{\beta}^{2}= \{10^{-3}, 10^{-2}, 10^{-1}\} $  
           \item $M = \{1,5\}$
    \end{itemize}
\vskip 0.5cm
\item BART (\cite{BART})
\begin{itemize}
    \item We consider candidate hyperparameters similar to those explored in \cite{BART}.
    \item $\alpha_{\text{split}},\gamma_{\text{split}},v,\lambda$, the number of iterations: set to the same values as for ANOVA-BART
    \vskip 0.2cm
    \item $T = \{50,100,200 \}$
    \vskip 0.2cm
    \item $\sigma_{\beta}^{2}=\{9/T, 9 /(4T), 1/T, 9/(25T)\}$
\end{itemize}
\vskip 0.5cm
\item SSANOVA
\begin{itemize}
    \item The smoothing parameter $\phi = \{10^{-4}, 10^{-2}, 1, 10^{2}, 10^{4}\}$
    \vskip 0.2cm
    \item The number of knots $=\{10,30,50\}$
    \vskip 0.2cm
    \item The maximum order of interaction $=\{1,2\}$
    \vskip 0.2cm
    \item Further details on the hyperparameters of SSANOVA can be found in \cite{gu2014smoothing}.
\end{itemize}
\vskip 0.5cm
\item MARS
\begin{itemize}
    \item Maximum number of terms in the pruned model $= \{10, 30, 50, 100\}$
    \vskip 0.2cm
    \item The maximum order of interaction $=\{1,2,3\}$
    \vskip 0.2cm
    \item The smoothing parameter $=\{1, 2, 3, 4\}$
    \vskip 0.2cm
    \item Further details on the hyperparameters of MARS can be found in \cite{milborrow2017earth}.
\end{itemize}
\vskip 0.5cm
\item NAM
  \begin{itemize}
   \item The number of hidden layers $=3$ and the numbers of hidden nodes $=(64,32,16)$ (the architecture
    used in \cite{nam, nbm})
    \vskip 0.2cm
   \item  Adam optimizer with learning rate 1e-4 and weight decay 7.483e-9
   \vskip 0.2cm
   \item Batch size $=256$
   \vskip 0.2cm
   \item Epoch $= \{500 , 1000\}$
   \vskip 0.2cm
   \item The maximum order of interaction $=\{1,2\}.$
  \end{itemize}
\end{itemize}

\subsubsection{Experiments for component Selection}

For the experiment of component selection on synthetic data, the hyperparameters for ANOVA-BART are set to $T_{\max} = 300, v = 3, q_{\lambda}=0.95$, and $C_{*} = 10^{-2}$.

\subsubsection{Experiments for uncertainty quantification}

The candidates of the hyperparameters of ANOVA-BART, BART and MARS are the same as those used in the prediction performance experiments.

\subsubsection{Experiments for Stable interpretation}

The candidates of the hyperparameters of ANOVA-BART and NAM are the same as those used in the prediction performance experiments.

\bibliographystyle{plain}
\bibliography{references}

\end{document}